\documentclass{article}

\usepackage{microtype}
\usepackage{graphicx}
\usepackage{subcaption}
\usepackage{booktabs}

\usepackage{amsmath, amssymb, amsthm}

\usepackage{rotating}
\usepackage{wrapfig}
\usepackage{placeins}
\usepackage{multirow}

\usepackage{xcolor}
\usepackage{tcolorbox}
\usepackage{enumitem}
\usepackage{accents}
\usepackage{xurl}

\usepackage{tikz}
\usepackage{tikz-3dplot}
\usetikzlibrary{positioning, arrows.meta, shapes, 3d, calc, fit}

\usepackage[colorlinks=true,linkcolor=blue,citecolor=blue,urlcolor=blue]{hyperref}

\usepackage[accepted]{icml2026}

\usepackage[normalem]{ulem}

\newtheorem{thm}{Theorem}

\newtheorem{cor}{Corollary}
\newtheorem{lemma}{Lemma}

\newcommand{\Rop}{R}

\icmltitlerunning{Singular Vectors of Attention Heads Align with Features}

\begin{document}

\twocolumn[
  \icmltitle{Singular Vectors of Attention Heads Align with Features}



  \icmlsetsymbol{equal}{*}

  \begin{icmlauthorlist}
    \icmlauthor{Gabriel Franco}{bucs}
    \icmlauthor{Carson Loughridge}{bucs}
    \icmlauthor{Mark Crovella }{bucs,bucds}
  \end{icmlauthorlist}

  \icmlaffiliation{bucs}{Department of Computer Science, Boston University, Boston, USA}
  \icmlaffiliation{bucds}{Faculty of Computing \& Data Sciences, Boston University, Boston, USA}

  \icmlcorrespondingauthor{Gabriel Franco}{gvfranco@bu.edu}

  \icmlkeywords{Mechanistic Interpretability, Interpretability, Attention}

  \vskip 0.3in
]



\printAffiliationsAndNotice{}  

\begin{abstract}
Identifying feature representations in language models is a central task in mechanistic interpretability. Several recent studies have made the observation that feature representations can be inferred in some cases from singular vectors of attention matrices. However, sound justification for this phenomenon is lacking. In this paper we address that question, asking: why and when do singular vectors align with features?  First, we demonstrate that singular vectors robustly align with features in a model where features can be directly observed.  We then show theoretically that such alignment is expected under a range of conditions.  We close by asking how, operationally, alignment may be recognized in real models where feature representations are not directly observable.   We identify \emph{sparse attention decomposition} as a testable prediction of alignment, and show evidence that it emerges in real models in a manner consistent with predictions.  Together these results suggest that alignment of singular vectors with features can be a sound and theoretically justified basis for feature identification in language models.
\end{abstract}

\section{Introduction} \label{sec:intro} Improving the interpretability of language models is critical, e.g., to build a foundation for improving model safety \cite{anwar2024foundationalchallengesassuringalignment}.  A central task in interpretability is the identification of \emph{feature representations} --- the elucidation of how models represent the concepts that they manipulate.

There is now considerable evidence that many concepts in language (and other) models are represented as directions in one-dimensional or low-dimensional subspaces \cite{Elhage22, olah2020zoom,mikolov-etal-2013-linguistic,alain2018understandingintermediatelayersusing,park2023the,DBLP:conf/iclr/GurneeT24,Gurnee_Nanda_Pauly_Harvey_Troitskii_Bertsimas_2023,Marks_Rager_Michaud_Belinkov_Bau_Mueller_2024,levy2025languagemodelsencodenumbers,engels2025languagemodelfeaturesonedimensionally,kantamneni2025languagemodelsusetrigonometry,hernandez2024linearityrelationdecodingtransformer}. This has been termed the \emph{linear representation hypothesis} (LRH) \citep{park2023the,Elhage22}. 
However, reliably identifying the actual linear representations used by models in arbitrary settings is still an unsolved problem. That is, the LRH suggests that model activations can be conceived as additive sums of features, but finding the proper decomposition of a particular activation into its constituent features is still an enormous challenge.

In this regard, a number of studies have examined the weights of attention heads, in particular decomposing those weights using singular value decomposition, as a framework for studying feature representations \cite{Merullo:NeurIPS2024,Ahmad:NeurIPS2025,Pan:NeurIPS2024,FrancoCrovella:arxiv24,FrancoCrovella:Neurips25}.
These studies empirically demonstrate a surprising relationship: \textbf{features used by an attention head tend to be aligned with its singular vectors}.%
\footnote{By singular vectors of an attention head, we mean the singular vectors of the head's QK matrix $\Omega = W_Q^\top W_K$. Details are introduced below.} This does not require each feature to be one-dimensional; for example, a feature may lie in a low-dimensional subspace spanned by a small set of singular vectors.  But in fact, many features appear to be one-dimensional and approximately in the same direction as a single singular vector \cite{Merullo:NeurIPS2024,FrancoCrovella:arxiv24}.  

The alignment of singular vectors with features presents a natural and powerful tool for interpretability.  This is because the singular vector basis of a head then provides a comprehensible and tractable search space for extracting features from model activations.  Projecting a model activation onto the various subspaces defined by a head's singular vectors defines a discrete set of candidate features.  These candidate features can then be analyzed in various ways, e.g., for causal impact on task performance.

Thus, the phenomenon of singular vector--feature alignment needs further exploration.  Hence, our motivating question:

\begin{tcolorbox}[colback=red!5!white, 
                  colframe=red!50!black, 
                  title=Research Questions
                  ]
When features are represented linearly, under what conditions will the singular vectors of attention head matrices align with those features?   Can we understand singular vector-feature alignment theoretically, and find evidence of alignment in real models?
\end{tcolorbox}

Note that there are really two surprises here.  The first is that any singular vector aligns with any feature at all.  But a second surprise is that, in order for multiple features to align with corresponding singular vectors, the features must be nearly orthogonal (since singular vectors are themselves orthogonal).  A complete study necessitates understanding both phenomena: both the single-feature case and the multi-feature case.  Together we refer to these two phenomena as \emph{singular vector--feature (SVF) alignment}. 

To explore these phenomena, we use a combination of toy-model studies and theory.  We illustrate and explore SVF alignment using a toy model similar to \cite{Elhage22}, which we extend to include an attention head.  We then present theorems that confirm the observed SVF alignment, and establish conditions under which it provably occurs.  We then use these observations to formulate testable predictions that are implied by SVF alignment, and confirm that those predictions hold in both toy and real models (GPT-2 and Pythia).\footnote{Code to reproduce all results is available at: \url{https://github.com/gaabrielfranco/svf-alignment}}

We summarize our contributions as follows.  Our base assumptions are that features are linearly represented and activations are formed by summing features. Then:
\begin{tcolorbox}[colback=red!5!white, 
                  colframe=red!50!black, 
                  title=Main Results]
\begin{itemize}[leftmargin=0pt, itemsep=0pt, topsep=0pt, parsep=0pt]
    \item SVF alignment arises robustly in a setting where heads attend to specific feature pairs.
    \item The top singular vectors of a head align with the feature pair most important for computing attention.
    \item Features orthogonalize to minimize interference, allowing for SVF alignment across multiple feature pairs.
    \item SVF alignment implies a testable prediction called \emph{sparse attention decomposition}, which we confirm arises  in real models.
\end{itemize}
\end{tcolorbox}

\section{Background} \label{sec:background} 

The notion of \emph{feature} has been given varying definitions in the literature.  Here, we consider a feature to be a geometrically \emph{and} semantically consistent representation that has some functional role in the model.  Geometric consistency is realized by treating features as vectors, and semantic consistency is realized by hypothesizing that attention heads compute attention values as a function of which features are present in tokens.  This view encompasses both the geometric properties surfaced by linear probes and SAEs, as well as the semantic properties required by causal analysis.

SVF alignment refers to a situation in which a feature, represented as a vector, has a high cosine similarity to a singular vector of an attention head's QK matrix.  We use $\Omega$ to denote the QK matrix, defined as $W_Q^{\!\top} W_K$, and the left and right singular vectors of $\Omega$ are the columns of $U$ and $V$ as given by the SVD of $\Omega$, ie, $\Omega = U\Sigma V^{\!\top}.$

SVF alignment has been empirically demonstrated in a number of recent studies.  The authors in \cite{Merullo:NeurIPS2024} find that inter-layer communication in models can be understood as taking place in low-rank subspaces defined by the singular vectors of attention heads.  Turning to \cite{Ahmad:NeurIPS2025}, the authors show that a single attention head ``can simultaneously implement multiple, independent computations''  that ``can be activated or suppressed via interventions along individual directions in the SVD basis.''  Next, the authors in \cite{Pan:NeurIPS2024} study the interaction between tokens (image patches) in vision transformers, and ``propose that left and right singular vectors of the query-key interaction matrix can be seen as pairs of interacting feature directions,'' thereby implicitly equating singular vectors with features.  Finally, our paper builds most directly on \cite{FrancoCrovella:arxiv24,FrancoCrovella:Neurips25} which also relied on an assumption of SVF alignment in order to trace circuits, and introduced the notion of sparse attention decomposition, which we also use here (in Section~\ref{sec:models}).


Although each of the above studies relies on the alignment of singular vectors with features, \textbf{no previous work has presented a detailed investigation of \emph{why} and \emph{when} SVF alignment occurs.}  Finding initial answers to those questions is the goal of this paper.  

Understanding why and when SVF alignment occurs is important. This is because when SVF alignment occurs, it offers a new strategy to attack a central problem in mechanistic interpretability: finding feature representations \cite{anwar2024foundationalchallengesassuringalignment,sharkey2025openproblemsmechanisticinterpretability}.  The question of how features are represented has driven a large body of work in mechanistic interpretability, with many studies showing that interpretable concepts in models are encoded in one-dimensional subspaces \cite{Elhage22, olah2020zoom,mikolov-etal-2013-linguistic,alain2018understandingintermediatelayersusing,park2023the,DBLP:conf/iclr/GurneeT24,Gurnee_Nanda_Pauly_Harvey_Troitskii_Bertsimas_2023,Marks_Rager_Michaud_Belinkov_Bau_Mueller_2024} or in low-dimensional subspaces \cite{levy2025languagemodelsencodenumbers,engels2025languagemodelfeaturesonedimensionally,kantamneni2025languagemodelsusetrigonometry,hernandez2024linearityrelationdecodingtransformer}.  However, the strategies used to find feature representations to date all have drawbacks.  
The use of probing \citep{tenney2019what,hewitt-liang-2019-designing,belinkov-2022-probing,li2023inference,marks2024geometry} can identify information that is decodable from model activations, but this does not by itself establish that the corresponding feature representation is used by the model: probe performance depends on the probe class and dataset, can reflect memorization or the probe's own capacity, and requires careful baselines and controls \citep{hewitt-liang-2019-designing,belinkov-2022-probing}. From a mechanistic interpretability perspective, the key limitation is that probing is primarily correlational, motivating recent work that supplements probes with activation interventions to test whether the identified directions causally affect model behavior \citep{li2023inference,marks2024geometry}.  An alternative strategy uses Sparse Autoencoders (SAEs) \cite{huben2024sparse,BrickenSAEs}, but these methods are expensive to train and have been shown to have significant drawbacks \cite{leask2025sparseautoencoderscanonicalunits,bushnaqSAEs,gao2024scalingevaluatingsparseautoencoders,bussmanSAEs,chanin2024absorptionstudyingfeaturesplitting}; one reason for this may be that SAEs are built only from model activations and do not take into account model weights \cite{chugtaiActivationSpaceDoomed}. 

In contrast to methods like SAEs, exploiting SVF alignment uses the relationship between model activations \emph{and} model weights.  When SVF alignment holds, one can in principle consider the singular vectors of an attention head to be `candidate features' and then examine activations to see if those features are present, as in \cite{FrancoCrovella:arxiv24,FrancoCrovella:Neurips25}.  In practice, feature identification using SVF alignment simply boils down to decomposing activations in the SVD basis --- which can be done on a per-prompt basis and in a single forward pass over the model.  Compared to the use of SAEs or linear probes, this offers a much more straightforward, efficient, and scalable approach;  and, as we show in the body of this paper, it has theoretical and empirical justification.





\section{Methods} \label{sec:methods} We will generally be considering the setting in which a head is computing attention over a set of \emph{key} tokens $S = \{s_i\}_{i = 1}^m$.  Token $s_m$ is also the \emph{query} token which we will denote as $r$.  

Given a pair of tokens $(r, s)$ for some $s\in S$, 
we will say that a head `attends to' $(r, s)$ when it computes a large attention value for $(r, s)$. There is no strict attention threshold that defines `attending' to a token pair, but attention on the pair should at least be higher than the uniform distribution (i.e., $1/m$).  As explained below, tokens are treated as sums of features.  We assume that the head attends to a token pair because of the presence of certain features in the tokens.  The set of features that can cause a head to attend to a token pair are the features `of interest' to the head.  We don't assume that all features are of interest to any given head --- many features, even if present, will not affect a head's attention computation.

To explore the alignment of singular vectors with features, we use a toy model that allows both features and singular vectors to  be directly observed.  We start with the toy autoencoder model from \cite{Elhage22} defined over a universe of $N$ features $\{w_i \in \mathbb{R}^D\}_{i = 1}^N.$  An input to the model $f$ is a choice of \emph{feature strengths}, constructed as $f_i = a_i b_i$ where $a_i \sim \text{ Bernoulli}(p)$ and $b_i \sim U(0, 1).$   Internally, the model represents inputs as vectors in $\mathbb{R}^D$;  we refer to the internal representations as tokens (using language model terminology).  The model constructs a token as $r = Wf$ where $W \in \mathbb{R}^{D\times N}$ is the matrix whose columns are the features $w_i$.  The autoencoder seeks to reconstruct $f$ as $f' = \operatorname{ReLU}(W^{\!\top} r + b)$ with $b \in \mathbb{R}^N$. The learned weights are $W$ and $b$ and the reconstruction loss is $\mathcal{L}_{\text{recon}} = \|f - f'\|^2_2.$ In this model, ReLU and negative biases can eliminate some interference between representations, which lowers reconstruction loss.  This model was extensively explored in \cite{Elhage22} and was shown to generate feature representations $\{w_i\}$ that ``spread out'' to occupy the space $\mathbb{R}^D$ approximately isotropically.%

We extend this model to study the influence of the attention mechanism.  We add an attention head that compares tokens $r$ and $s$ and generates an attention logit as $\ell = r^{\!\top} W_Q^{\!\top} W_K s$ for learned matrices $W_Q, W_K \in \mathbb{R}^{H\times D}$. $W_Q$ and $W_K$ always appear together, so as mentioned, we use $\Omega$ to denote $W_Q^{\!\top} W_K.$ The head computes logits for a single query and multiple keys as $\ell_j(r, S) = r^{\!\top} \Omega s_j,\,j=1,\dots,m.$
 The head then computes an output $p_{\text{head}} = \text{Softmax}(\ell_1, \dots, \ell_m)$. Figure~\ref{fig:toy-model-complete} in Appendix~\ref{app:model} shows a diagram of the complete toy model.

To train the head, we assume that it should attend to a pair of tokens when specific pairs of features are present in the tokens. To allow for flexible experimentation, we parameterize the target at the logit level.  For a pair of tokens $(r, s)$ denote $f^{(r)}$ and $f^{(s)}$ as the corresponding feature strengths.  For the token pair $(r, s)$ we define the target logit as $\ell^T(r, s) = \sum_{ij} T_{ij}f^{(r)}_i f^{(s)}_j.$  We then define $p_{\text{target}} = \text{Softmax}(\ell^T_1, \dots, \ell^T_m)$ over the $m$ token pairs.  This parameterization allows us to use $T_{ij}$ to specify desired attention patterns in an intuitive fashion.  We can interpret $T_{ij}$ as the amount that should be added to the target logit to the extent features $w_i$ and $w_j$ are present in the corresponding tokens.

The training loss for the head is defined as $\mathcal{L}_{\text{attn}} = \text{Cross-Entropy}(p_{\text{head}}, p_\text{target}).$ The overall loss for the model is $\mathcal{L} = \mathcal{L}_\text{recon} + \lambda \,\mathcal{L}_\text{attn}.$   Details of the model and parameter sweeps showing robustness of results are provided in Appendix~\ref{app:model}.

\textbf{Why this model?} The idea that tokens are sums of features is consistent with much of the work in model interpretability.  It is strongly supported by the residual-connection structure of the model as explained in \cite{Elhage21}.  It does not depend on the LRH but is fully consistent with the LRH.   Similarly, much work in model interpretability assumes that heads attend to tokens because of the presence of specific features in those tokens.  Hence the setting we describe here reflects only a minimal, commonly-adopted view of model internals.  At the same time, our toy model allows for simultaneous learning of both feature representations ($W$) and model weights ($\Omega$) and direct comparison of the two.

\section{Singular Vectors Align with Features} \label{sec:alignment} 

To start, we examine a model incorporating $N = 20$ features, each represented in $D = 10$ dimensions; the head dimension is $H = 10$.  As a warm-up we look at how features $w_i$ (columns of $W$) arrange when the attention head is \emph{not} included in the toy model. Cosine similarities among the features are shown in Figure~\ref{fig:feature-inner-products}(a).  This setting is similar to the settings studied in \cite{Elhage22}, and we see similar results: features arrange themselves into 10 pairs, one per dimension, with each pair antipodal.  This arrangement is \emph{isotropic} --- features are spread equally in all directions.  Formally, isotropy is the condition that $WW^{\!\top}$ is a multiple of the identity (ie, $W$ is a tight frame whose feature components are uncorrelated).  As explained in \cite{Elhage22}, this arrangement is expected as it minimizes \emph{interference} --- the reconstruction error caused by non-orthogonality of features.  More details showing isotropy of features are in Appendix \ref{app:isotropy} (Figure \ref{fig:isotropic-heatmaps}).

\paragraph{Single-Feature-Pair Alignment.} Next, we observe the effects of adding the attention head.  We study the simplest possible case: the head attends to a token pair when feature $w_0$ is present in the query token and feature $w_1$ is present in the key token. To implement this we set $T_{01} = 1,$ and $T_{ij} = 0$ elsewhere.  We train the model, decompose the head's weight matrix into its SVD $\Omega = U\Sigma V^{\!\top}$, and examine the alignment between features and singular vectors.  Figure \ref{fig:features-align}(a) shows that after training, \textbf{singular vectors are aligned with features.}  Feature $w_0$ is aligned with left singular vector $u_0$, and feature $w_1$ is aligned with right singular vector $v_0$.  The spectrum of $\Omega$ shows only a single large singular value, meaning that the head has allocated one of its dimensions entirely to representing and recognizing this feature pair.

\begin{figure}[t]
    \resizebox{\linewidth}{!}{%
        \includegraphics[scale=0.85]{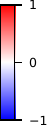}%
        \includegraphics{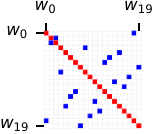}%
        \includegraphics{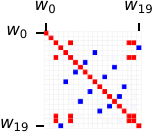}%
        \includegraphics{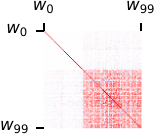}%
    }
    \centerline{\hspace{0.6in}\hfill(a)\hspace{.9in}(b)\hspace{.9in}(c)\hfill}
    \caption{The geometry of features as illustrated via cosine similarities. (a) Without the attention head, features arrange isotropically. (b) With 20 features of which $w_0, w_1$ are of interest, features of interest orthogonalize against the others. (c) With 100 features in dimension 50, and 40 of those features are of interest (20 pairs), features of interest also orthogonalize against the others.}
    \label{fig:feature-inner-products}
\end{figure}

\begin{figure}[t]
    \centerline{    
    \includegraphics{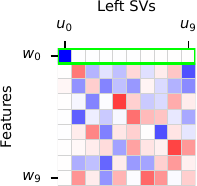}
    \includegraphics{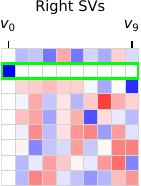}
    \includegraphics{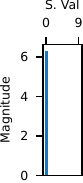}
    }
    \centerline{(a)}
    \vspace*{1em}
    \centerline{
    \includegraphics{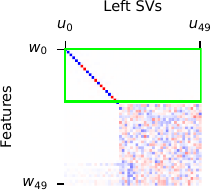}
    \includegraphics{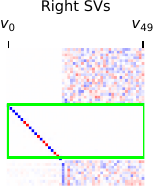}
    \includegraphics{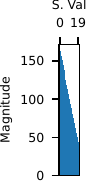}
    }
    \centerline{(b)}
    \caption{Singular vectors align with features (shown by green boxes). Cosine similarities of singular vectors and features, and magnitudes of singular values. (a) 20 Features of which $w_0, w_1$ are of interest; $w_0$ aligns only with $u_0$ and $w_1$ aligns only with $v_0$. (b) 100 features of which $w_0 \dots w_{39}$ are of interest.  For clarity, only an initial subset of features is shown; full figures are in Appendix~\ref{app:model}.}
    \label{fig:features-align}
    \vspace*{-2em}
\end{figure}

\emph{Why does SVF alignment happen?}  Intuitively, the need for $w_0^{\!\top} \Omega w_1$ to output a relatively large value tends to align the vector $\Omega w_1$ with $w_0$.  At the same time, the need for $w_i^{\!\top} \Omega w_1$ to output a smaller value for $i \neq 0$ means that $\Omega w_1$ is less attracted to each of the other $w_i$s.  The resulting tendency for $\Omega w_1$ to be in the same direction as $w_0$, and likewise $w_0^{\!\top} \Omega$ to be in the same direction as $w_1$, means that $\Omega^{\!\top} \Omega w_1 \approx \alpha w_1$, ie, $w_1$ is approximately a right singular vector of $\Omega$.  An analogous argument establishes that $w_0$ is approximately a left singular vector of $\Omega$.%

We make this argument precise by showing that SVF alignment \emph{provably} occurs under fairly general conditions.  For analysis, we allow the sets of features in the two tokens to differ, so columns of $X$ are features appearing the query token and columns of $Y$ are features appearing in the key tokens.  Define the Gram matrices $\Sigma_X = XX^{\!\top}, \Sigma_Y = YY^{\!\top}$, and denote the value of $\Omega$ after training as $\Omega^\star$.   As in our simulations, we assume training using Softmax computed over target logits that depend on feature presence. 
\begin{thm} (Informal) Assume the head's target logits are such that $\ell^T(r, s) = 1$ iff $x_1$ is present in $r$ and $y_1$ is present in $s$, and 0 otherwise.  Then after training, $\Omega^\star$ is rank-1, with left and right singular vectors $u_1 \propto \Sigma_X^{-1}x_1$ and $v_1 \propto \Sigma_Y^{-1}y_1.$
\vspace{-0.25em}
\end{thm}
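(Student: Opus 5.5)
The plan is to reduce the trained head to a linear system in $\Omega$ and solve it in closed form. I model the setting as follows. Query tokens are $r = Xa$ and key tokens are $s = Yb$, where $a, b$ are feature-strength vectors. The columns of $X$ and $Y$ are linearly independent, so that $X^{\!\top} X$ and $Y^{\!\top} Y$ are invertible on the relevant span. I read $\Sigma_X^{-1}$ as the (pseudo)inverse acting on that span; equivalently, the dual vectors $\tilde x_1 = \Sigma_X^{+} x_1$ satisfy $\tilde x_1^{\!\top} x_i = \delta_{1i}$ whenever the features are linearly independent with $\Sigma_X$ restricted to their span. The target logit is $\ell^T(r,s) = a_1 b_1$, the bilinear form with $T = e_1 e_1^{\!\top}$. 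The head logit is $r^{\!\top}\Omega s = a^{\!\top}(X^{\!\top}\Omega Y)b$. The head logit therefore matches the target logit for every input exactly when $M := X^{\!\top}\Omega Y = e_1e_1^{\!\top}$, up to terms that Softmax ignores.

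\textbf{Step 1: optimality.} The cross-entropy between $p_{\text{head}}$ and $p_{\text{target}}$ is minimized, over all choices of $\Omega$, precisely when the two Softmax distributions coincide on every input. This holds when the logits agree up to an additive constant per query. With a rich input distribution (for example, key tokens that vary independently), I will argue that the constant must be a function of $r$ alone, meaning a term $a^{\!\top}c$ independent of $b$. Such a term cannot come from a bilinear form unless some key direction is constant. Hence the minimizers satisfy $X^{\!\top}\Omega^\star Y = e_1e_1^{\!\top}$ on the observed span, and any residual freedom lies outside $\mathrm{col}(X)\times\mathrm{col}(Y)$. To handle that freedom, I take the minimum-norm solution, which is what gradient descent from small initialization or weight decay selects. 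This choice kills components orthogonal to the feature spans.

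\textbf{Step 2: solve.} The minimum-norm solution of $X^{\!\top}\Omega Y = e_1e_1^{\!\top}$ is
\[
\Omega^\star = (X^{\!\top})^{+} e_1 e_1^{\!\top} Y^{+} = X(X^{\!\top}X)^{-1}e_1\, e_1^{\!\top}(Y^{\!\top}Y)^{-1}Y^{\!\top}.
\]
Next I use the identity $X(X^{\!\top}X)^{-1}e_1 = (XX^{\!\top})^{+}Xe_1 = \Sigma_X^{+}x_1$, and likewise for $Y$. This gives $\Omega^\star = (\Sigma_X^{+}x_1)(\Sigma_Y^{+}y_1)^{\!\top}$, an outer product. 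It is therefore rank one, with $u_1 \propto \Sigma_X^{-1}x_1$, $v_1\propto\Sigma_Y^{-1}y_1$, and $\sigma_1 = \|\Sigma_X^{+}x_1\|\,\|\Sigma_Y^{+}y_1\|$. A one-line check confirms the result: $x_i^{\!\top}\Sigma_X^{+}x_1 = \delta_{i1}$, which reproduces the target logit.

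\textbf{Main obstacle.} The delicate step is Step 1. Cross-entropy with Softmax is only shift-invariant, so I must show that exact logit matching, not just distribution matching, is forced up to an irrelevant bilinear component. I must also justify the minimum-norm selection. Without that selection, arbitrary additions supported outside the feature spans would break the rank-one claim. In the informal version I would simply state both as assumptions: the features span the space (or implicit regularization holds), and the key distribution is rich enough that no per-query offset survives. The remaining work is linear algebra.
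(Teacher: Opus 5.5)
Your argument holds in the regime you set up, but that regime is not the one the theorem is about. You assume the columns of $X$ and $Y$ are linearly independent, so that the indicator target $a_1b_1$ is exactly realizable as $X^{\top}\Omega Y=e_1e_1^{\top}$. The statement, however, is written with $\Sigma_X^{-1}$ and $\Sigma_Y^{-1}$, so it presupposes that $XX^{\top}$ and $YY^{\top}$ are invertible. The paper motivates this by the superposition regime $N>D$ (the toy model has $N=20$, $D=10$). Linear independence together with invertibility forces $N=D$.

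For $N>D$, your formula $X(X^{\top}X)^{-1}e_1$ is undefined, since $X^{\top}X$ is singular. More seriously, $X^{\top}\Omega Y=e_1e_1^{\top}$ is solvable only if $x_1\notin\mathrm{span}\{x_2,\dots,x_N\}$ and $y_1\notin\mathrm{span}\{y_2,\dots,y_N\}$, which generically fails. Then no $\Omega$ reproduces the indicator logits, and the cross-entropy minimizer is not characterized by exact distribution matching. So Step~1 has nothing to characterize. Your check $x_i^{\top}\Sigma_X^{-1}x_1=\delta_{i1}$ also fails, because $x_i^{\top}\Sigma_X^{-1}x_1$ is the $(i,1)$ entry of the orthogonal projector onto the row space of $X$. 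Your argument therefore does not reach the overcomplete, correlated-feature case the theorem is meant to cover.

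The missing idea is to make the target realizable by construction rather than insisting on the literal indicator. Multiply $X^{\top}\Omega Y=T$ by $X$ on the left and $Y^{\top}$ on the right. This gives $\Sigma_X\Omega\Sigma_Y=x_1y_1^{\top}$, which is always solvable when $\Sigma_X$ and $\Sigma_Y$ are invertible. The paper accordingly takes the teacher to be $\Omega_T=\alpha\,\Sigma_X^{-1}x_1(\Sigma_Y^{-1}y_1)^{\top}$, a whitened surrogate for ``attend iff both features are present.'' Since $\Omega_T$ lies in the student class, any minimizer matches $p_T$ almost surely.

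The obstacle you flag in Step~1 is then handled by differencing rather than by reasoning about the per-query shift. One has $r^{\top}(\Omega-\Omega_T)(s_j-s_1)=0$ for all $j$. Sandwiching with $r$ and $\Delta^{\top}$ and using independence of query and keys gives $\Sigma_r(\Omega-\Omega_T)\Sigma_\Delta=0$. Both $\Sigma_\Delta=2(m-1)p(1-p)\Sigma_Y$ and $\Sigma_r=p(1-p)\Sigma_X+p^2m_xm_x^{\top}$ are invertible, so uniqueness follows outright. No minimum-norm or implicit-bias selection is needed; that assumption is an artifact of your features not spanning $\mathbb{R}^D$. Conversely, your min-norm argument is essentially what the pseudoinverse regime needs. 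The paper mentions that regime only in passing, and there its own uniqueness step breaks because $\Sigma_r$ is singular.
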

Thus, \emph{regardless of correlation among features,} the top singular vectors of $\Omega^\star$ are given by the covariance-whitened features $x_1$ and $y_1$.  (See Appendix \ref{app:svf-alignment} for the precise statement and the proof.) The case of isotropic features then immediately follows:
\begin{cor}
In the same setting as Theorem \ref{thm:strong-version-no-effective-subspace-caveat}, if feature sets $X$ and $Y$ are in isotropic position, the singular vectors $u_1$ and $v_1$ will be exactly aligned with $x_1$ and $y_1$.
\vspace*{-1em}
\begin{proof}
If features are in isotropic position, $XX^{\!\top} \propto I,\;  YY^{\!\top}\propto I$, and the result follows from Theorem~\ref{thm:strong-version-no-effective-subspace-caveat}.
\end{proof}
\end{cor}
\vspace*{-1em}
Further, we show in Theorem~\ref{thm:approximate-alignment} (Appendix \ref{app:svf-alignment}) that even when features deviate from isotropy, 
singular vector alignment occurs approximately if the interference (inner products) between the features
is sufficiently bounded.

Turning to the geometry of features, Figure~\ref{fig:feature-inner-products}(b) shows a second effect. Features $w_0$ and $w_1$ are  orthogonal to \emph{all other features.}  This is not necessary for alignment per se, and deserves investigation.  Intuitively, the remaining features have shifted into the $D-2$ dimensional subspace that is orthogonal to the span of $\{w_0, w_1\}.$  

We hypothesize that orthogonalization occurs to minimize reconstruction loss.  This is provably the case for a model like ours. Theorem \ref{thm:orthogonalization} in Appendix \ref{app:feature-orthogonalization} considers the case in which $\Omega$ is fixed while features are allowed to vary.  It shows that in a setting where there is a penalty for interference among features (as in our toy model), the solution found to the training objective of the model will be the one in which features are orthogonal.

\paragraph{Multi-Feature-Pair Alignment.} SVF alignment extends to cases involving multiple feature pairs.   We observe that when multiple feature pairs are of interest to the head,%
\footnote{For a number of features up to the head dimension; we discuss below.}
all features are typically still aligned with singular vectors.  As an example, we show a model run with 100 features and hidden dimension 50. We set the target logits for feature pairs as $T_{i,i+20} = 26 - i$ for $0 \leq i < 20$ (zero otherwise). This causes 20 feature pairs $(w_i, w_{i+20})$ to be attended by the head, with logit values linearly declining in $i$. Figure~\ref{fig:features-align}(b) shows that all features align with singular vectors, and that the singular values of the head reflect the values of the target logits.

\begin{figure}[t]
    \centering
    \includegraphics[width=0.48\textwidth]{figures/attention-10vec-self-cos-sim-left-svs.png}\\
    \includegraphics[width=0.48\textwidth]{figures/attention-10vec-cos-sim-left-features.png}
    \includegraphics{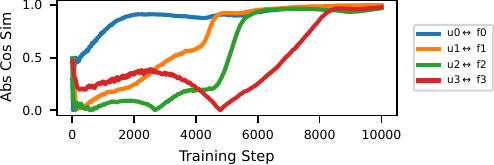}
    \caption{Both singular vectors and features evolve during training, and alignment occurs for highest-logit features first. Above: Cosine similarities showing evolution of singular vectors (top) and features (middle).  Below: Cosine similarities showing evolution of alignment of singular vectors with features.}
    \label{fig:sv-alignment-training}
    \vspace*{-2em}
\end{figure}

We attribute the phenomenon of alignment in the multi-feature case to the interaction of the two effects: alignment of the top singular vectors to most important features, and the resulting orthogonalization of the remaining features.  Figure~\ref{fig:feature-inner-products}(c) shows the geometric arrangement of features in this case, confirming orthogonalization of the remaining features.  We hypothesize that during training, singular vectors align with features to minimize attention loss, while at the same time features orthogonalize to minimize reconstruction loss.  Because singular vectors can shift during training, features can orthogonalize without impacting attention loss. 

Evidence for this hypothetical mechanism comes from observation of the evolution of features and singular vectors during training.  We use as an example a model run with 20 features and 10 hidden dimensions.  We again set linearly declining target logits, in this case for only four pairs of features $(w_i, w_{i+4})$ for $i = 0, 1, 2, 3.$  We confirm at the end of 10,000 training steps that features have aligned with singular vectors.  


The dynamics of SVF alignment during training are shown in Figure~\ref{fig:sv-alignment-training}, which compares vectors across training. Singular vectors align with features individually over time, in order of their impact on attention loss. The middle heatmap shows that feature $w_0$ changes little over training, while the top heatmap shows that left singular vector $u_0$ moves into its final position around step 1500. The bottom plot shows that this movement brings singular vector $u_0$ into alignment with feature $w_0$, matching the behavior formalized by Theorem~\ref{thm:strong-version-no-effective-subspace-caveat} and Corollary~\ref{thm:exact-alignment-isotropy}. The next mode follows later: the top heatmap shows left singular vector $u_1$ shifting over the period up to about step 4000, after which the middle heatmap shows feature $w_1$ abruptly shifting. This is consistent with Theorem~\ref{thm:orthogonalization} in Appendix~\ref{app:feature-orthogonalization}: to reduce reconstruction interference, feature $w_1$ shifts toward a position orthogonal to feature $w_0$. Thus, both singular vectors and features evolve in tandem, with the pattern repeating for the remaining features and singular vectors.

The results shown here are robust over a wide range of model parameters.  In Appendix~\ref{app:model} we show that SVF alignment arises consistently over variations in relative loss weight $\lambda$, number of features $N$, context length $m$, head dimension $H$, and across random seeds. 



\vspace*{-1em}
\paragraph{Alignment under Anisotropy.}
So far, we have used isotropy as a simplifying assumption to show how Theorem~\ref{thm:strong-version-no-effective-subspace-caveat} applies to the toy model via Corollary~\ref{thm:exact-alignment-isotropy}.  However, we emphasize that the presence of feature anisotropy does not necessarily destroy SVF alignment.   To illustrate, we first assess the range of anisotropy found in a real model (GPT-2), using SAE dictionary elements as a proxy for features; we find  that $\|E_X\|_2$ (the metric used in Theorem~\ref{thm:approximate-alignment}) ranges between 10 and 55 depending on the layer (Appendix~\ref{app:anisotropy-of-saes}).  Next, we run an experiment based on a standard configuration of the toy model (Appendix~\ref{app:alignment-under-anisotropy}). We adjust the toy model to freeze the features and only train the head weights. Then, starting from isotropic features, we introduce controlled anisotropy into the feature set varying over the range of values found in GPT-2. Figure~\ref{fig:alignment-anisotropy} shows that even when anisotropy gets large (close to the maximum seen in GPT-2), SVF alignment is good, with mean cosine similarity above 0.75. We also use Figure~\ref{fig:alignment-anisotropy} to illustrate that (a) Theorem~\ref{thm:approximate-alignment} is a lower bound that can be quite loose in practice; and (b) as suggested by Theorem~\ref{thm:strong-version-no-effective-subspace-caveat}, anti-whitening singular vectors using $\Sigma_X$ can significantly improve SVF alignment.


The above results have not taken into account the effects of rotary positional encoding (RoPE) \cite{su2024roformer}.  In Appendix~\ref{app:rope} we extend the toy model to include RoPE and show that SVF alignment can still occur, both for features whose target logits are position-independent and for features with position-dependent logits.

Finally, we note that we have so far studied the setting in which the number of features of interest to the head is less than the head capacity $H$.  We believe that studying this regime is itself important, and can provide a foundation for follow-on studies of the regime in which there are more features of interest than head dimensions.  In Appendix~\ref{app:more-features} we use the toy model to probe the latter regime, noting that some  features move into superposition within the head's representation space -- but for the majority of singular vectors, SVF alignment still holds.

\begin{tcolorbox}[colback=red!5!white, 
                  colframe=red!50!black, 
                  title=Results: Toy Model]
\begin{itemize}[leftmargin=0pt, itemsep=0pt, topsep=0pt, parsep=0pt]
    \item The presence of an attention head causes singular vectors to align with features.
    \item Multiple features will align with corresponding singular vectors, up to the dimension of the head.
    \item Both features and singular vectors evolve during training to come into alignment.
\end{itemize}
\end{tcolorbox}

\section{Sparse Attention Decomposition} \label{sec:models} 

The preceding sections provide theoretical and empirical support for the hypothesis that under some conditions, the singular vectors of an attention head will be aligned with features of interest to the head.  The next question is: \textbf{does SVF alignment happen in real models?} This is important, because if it does happen, then by examining singular vectors in a model we may be able to identify actual features being used by that model.  In fact, this strategy was taken in \cite{Merullo:NeurIPS2024,Ahmad:NeurIPS2025,Pan:NeurIPS2024,FrancoCrovella:arxiv24,FrancoCrovella:Neurips25}.

\begin{figure}
    \centerline{
    \hfill
    \includegraphics[width=0.8\linewidth]{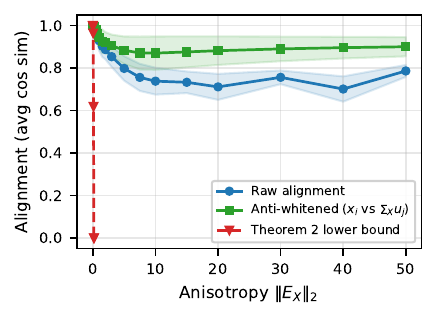}
    \hfill
    }
    \caption{SVF alignment under anisotropic features. Features are varied over a range corresponding to observed anisotropy in GPT-2.}
    \label{fig:alignment-anisotropy}
\end{figure}

In a real model, we cannot conclusively establish that a singular vector is aligned with a particular feature without \emph{a priori}  knowledge of that feature's representation in the model.   However, the hypothesis that singular vectors and features are aligned does lead to predictions that are testable in a real model.  We focus on one prediction: \emph{sparse attention decomposition} \cite{FrancoCrovella:arxiv24,FrancoCrovella:Neurips25}.  We first analyze logits to build intuition, and then we extend the analysis to attention.

\paragraph{Analyzing Logits.} To illustrate, suppose features are aligned with singular vectors, either exactly (as in Corollary~\ref{thm:exact-alignment-isotropy} or Figure~\ref{fig:features-align}) or approximately (as in Theorem~\ref{thm:approximate-alignment} or Figure~\ref{fig:alignment-anisotropy}).
Further, features that are not of interest are (approximately) orthogonal to features that are of interest, as in Figures~\ref{fig:feature-inner-products}(b) or \ref{fig:feature-inner-products}(c) or Theorem~\ref{thm:orthogonalization}.  We'll term this set of assumptions the \emph{alignment hypothesis.}

Now consider a head's logit decomposed in the singular vectors of $\Omega$:
\begin{align}
\ell(r,s)
  = r^{\!\top}\Omega s
  &= \sum_k r^{\!\top} u_k \sigma_k v_k^{\!\top} s
  \label{eq:SAD-1} \\
  &= \sum_k \sum_{i,j}
     f^{(r)}_i
     \underbrace{w_i^{\!\top} u_k}
     \sigma_k
     \underbrace{v_k^{\!\top} w_j}
     f^{(s)}_j .
  \label{eq:SAD-2}
\end{align}

The bracketed terms show that we can think of the attention head as `testing' each feature against each singular vector, and outputting a large value only when two features match a common pair of singular vectors (ie, having the same $k$).  Under the alignment hypothesis, an individual term $(k, i, j)$ in \eqref{eq:SAD-2} will only be large if features $i$ and $j$ are present, and aligned with singular vectors $k$. Now, in a real model we cannot in general observe the terms in \eqref{eq:SAD-2} -- however, we \emph{can} observe the terms in \eqref{eq:SAD-1}.  Because terms in \eqref{eq:SAD-2} will only be large for singular vectors (values of $k$) aligned with features present in the tokens, the same is true of \eqref{eq:SAD-1}.  In other words, when a head assigns a large logit $\ell$ to a pair of tokens because they contain a specific, limited set of features, the logit decomposed in the SVD basis can show a sparse representation.%
\footnote{The observed sparsity is \emph{not} attributable simply to the attention matrix being low-rank or ill-conditioned, as we discuss below.}

Sparse attention decomposition provides a powerful tool for interpretability, for two reasons.  First, the presence of a few large terms in \eqref{eq:SAD-1} suggest that the corresponding singular vectors may encode \emph{representations} of features.   And second, if attention is sparsely decomposed in the SVD basis, then only a small set of dimensions in the tokens contain features important for the attention computation.  This `reduces the search space' of causal features, a concept that is important in \cite{Merullo:NeurIPS2024,FrancoCrovella:arxiv24,FrancoCrovella:Neurips25}.  

\begin{figure}
    \centering
    \includegraphics[width=\linewidth]{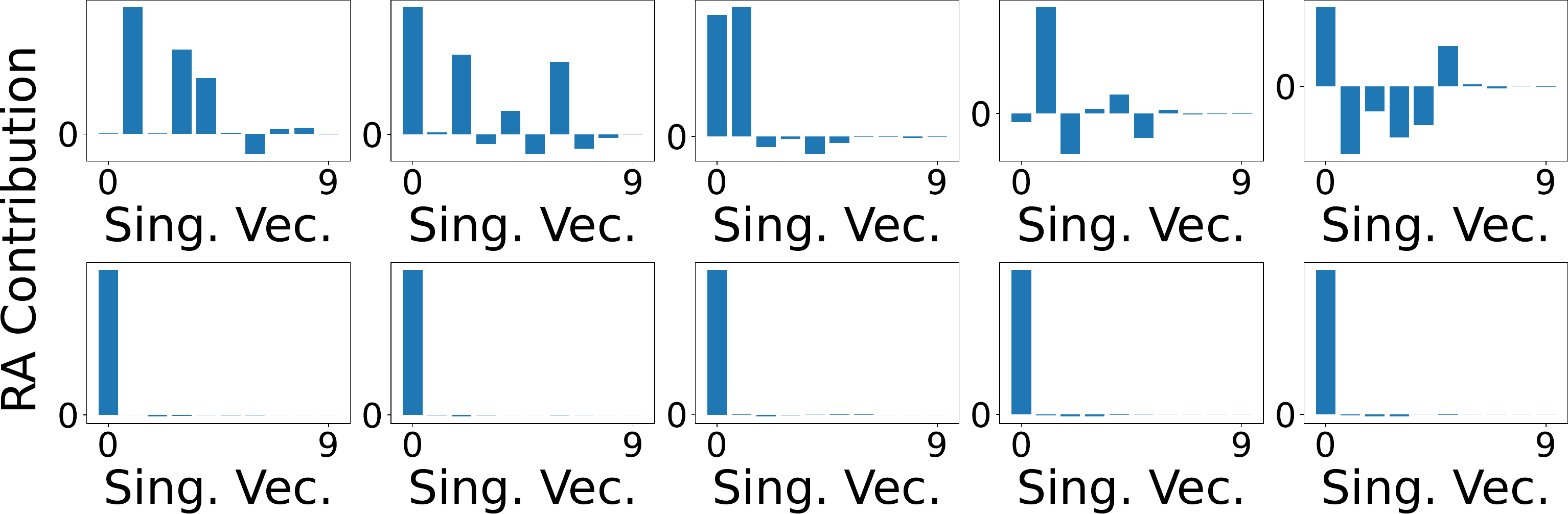}
    \caption{Relative attention decomposition is sparse when a single feature pair is present.  Top: Early in training; Bottom: Late in training.}
    \label{fig:sparse-evolution}
\end{figure}

\paragraph{Extending to Attention.}
Next we consider how computing attention via Softmax affects sparse attention decomposition.  Placing attention on a token pair requires putting a higher logit on the attended token than on the other key tokens.  
Hence when a model attends to $(r, s_j)$, the alignment hypothesis would predict not that $\ell(r, s_j)$ \emph{per se} is sparse in the SVD basis, but rather that $\ell(r, s_j) - \ell(r, s_i),\, j\neq i,$ is sparse in the SVD basis.  

\begin{figure}
    \centering
    \includegraphics[width=\linewidth]{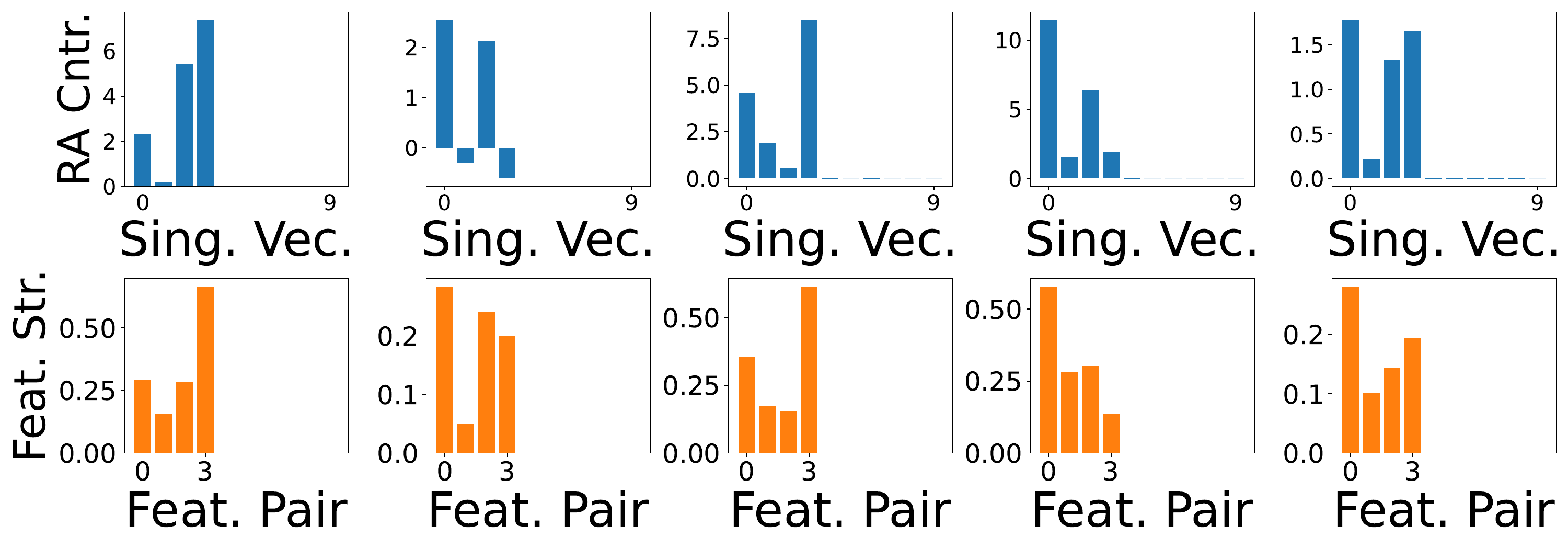}
    \caption{Sparse attention decomposition identifies feature presence.  Top: Decomposition of relative attention across all 10 singular vectors for five token pairs $(r, s)$.  Bottom: Feature strength ($f^{(r)}_i f^{(s)}_{i+4}$) for the four corresponding feature pairs $w_i, w_{i+4}$.}
    \label{fig:logit-contrib-vs-feature}
\end{figure}

To extend this observation to an arbitrary context length $m$, we use the notion of \emph{relative attention} \cite{FrancoCrovella:Neurips25}.  Over a set of logits $\ell_j(r, S)_{j=1}^m,$ relative attention on token $j$ is defined as $\tilde\ell_j = \ell_j - \frac{1}{m-1} \sum_{i\neq j} \ell_i$.  As shown in \cite{FrancoCrovella:Neurips25} this metric has the useful property that if attention on $(r, s_j)$ is greater than $1/m$ (the uniform distribution) then $\tilde\ell_j > 0.$  To decompose relative attention we can simply apply \eqref{eq:SAD-1} with $s = s_j - \frac{1}{m-1} \sum_{i\neq j} s_i$.

\paragraph{Evidence in the Toy Model.}  First, we show that sparse attention decomposition emerges during training. We illustrate using the model run from Figure~\ref{fig:sv-alignment-training} in which there are four feature pairs of interest to the head.  
  Figure~\ref{fig:sparse-evolution} shows the decomposition of relative attention for inputs where feature pair $(w_0, w_4)$ are the only features of interest present, both early and late in training. At first, relative attention does not show sparse decomposition, but later on the single feature of interest results in only one significant contribution to relative attention.  In Appendix~\ref{app:training-evolution} we show additional examples (Figure~\ref{fig:sparse-evolution-feature-not-present}) showing that sparsity does \emph{not} emerge when features of interest are \emph{not} present.  

As Figure~\ref{fig:sparse-evolution} suggests, a contribution from a particular singular vector pair is indicative of the presence of a corresponding feature pair in the inputs.  In Figure~\ref{fig:logit-contrib-vs-feature} we show further confirmation of this effect.  The figure shows that the strength of contribution to relative attention correlates with feature strength in the input.  This supports the use of SVF alignment as a means of identifying the features of interest that are present in a token pair.

We show emergence of sparsity quantitatively using the metric from \cite{10.1152/jn.1995.73.2.713}: $S(v) = (\frac{1}{n}\sum_i |v_i|)^2 / \frac{1}{n}\sum_i v_i^2.$  This metric takes on a value of 1 when inputs are minimally sparse (all equal), and a value of $1/n$ when inputs are maximally sparse (only one nonzero value).  For any given head and token pair, we compute $S(v)$ over the decomposition of relative attention.

Figure~\ref{fig:feature-present-training}(a) shows how sparsity emerges in the toy model.  We consider three classes of token pairs: pairs where only one feature of interest is present, pairs where two are present, and pairs where no features of interest are present;  we measure decomposition sparsity using the $S(v)$ metric.  The figure shows that when features of interest are present, attention decomposition is sparse, and it is sparser when fewer features are present.  On the other hand, when features of interest are absent, attention decomposition is much less sparse.  The dynamics are consistent with the evolution of SVF alignment seen in Figure~\ref{fig:sv-alignment-training}, supporting the notion that as singular vectors align with features, attention decomposition becomes sparse.

\begin{figure}[t]
    \centering
    \includegraphics[width=0.23\textwidth]{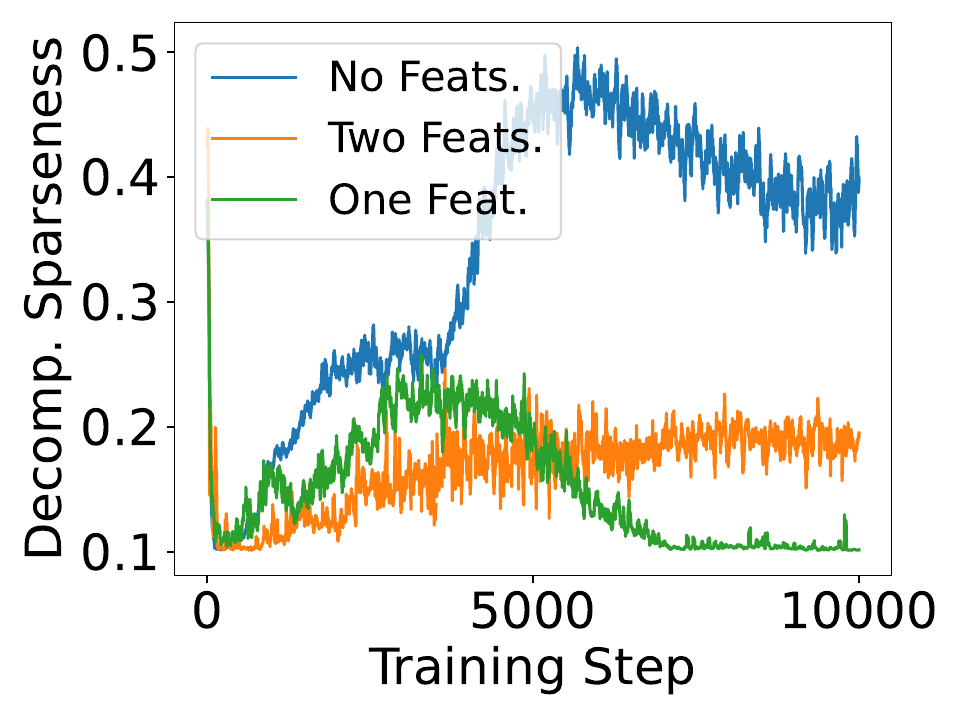}
    \includegraphics[width=0.23\textwidth]{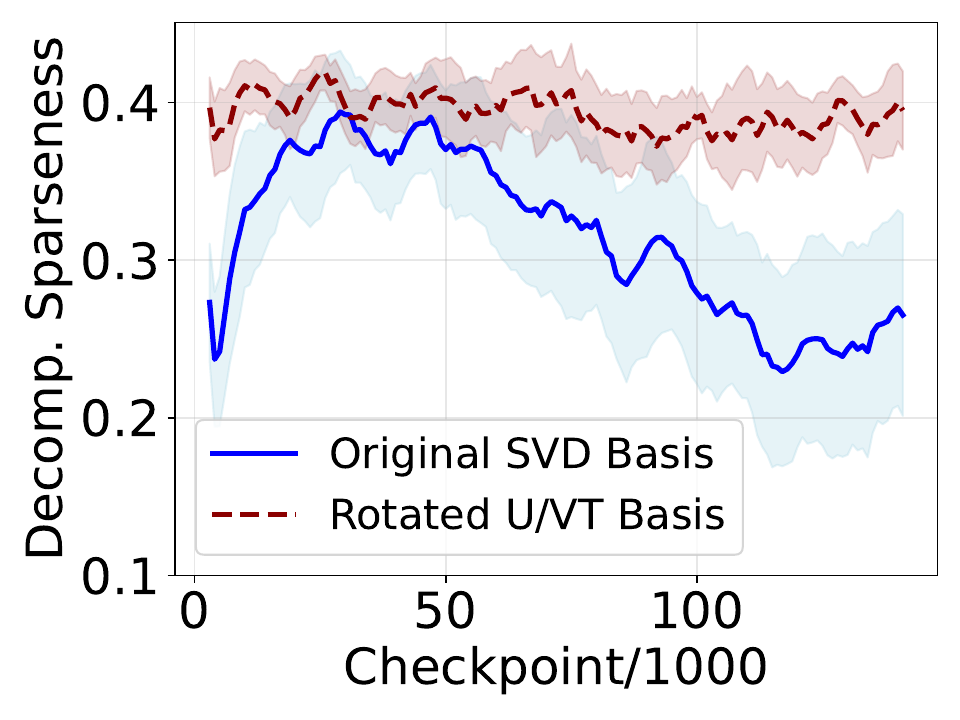}
    \centerline{\hfill(a)\hfill\hfill(b)\hfill}
    \caption{Sparse attention decomposition emerges during training: $S(v)$ metric, smaller is sparser, shaded regions are 95\% confidence intervals.  (a) Toy Model (b) Pythia-160M, for the IOI attention heads and token pairs identified in \cite{tigges2024llm}.
    Sparsity is not due to presence of a small number of large singular value, as shown by the comparison to spectrum-preserving rotations of the SVD bases. }
    \label{fig:feature-present-training}
\end{figure}


\paragraph{Evidence in Language Models.}  As discussed, sparse attention decomposition is a prediction of SVF alignment that is testable in real language models.   We start with Pythia-160M \cite{biderman2023pythia}, which provides 130 checkpoints taken at intervals of 1000 training epochs.  We input to the model a prompt from the Indirect Object Identification (IOI) task, and we capture relative attention at the heads and token pairs previously identified as part of the IOI circuit \cite{tigges2024llm}. Further details are in Appendix~\ref{app:pythia}.

\begin{figure}[t]
    \centering
    \includegraphics[width=\linewidth]{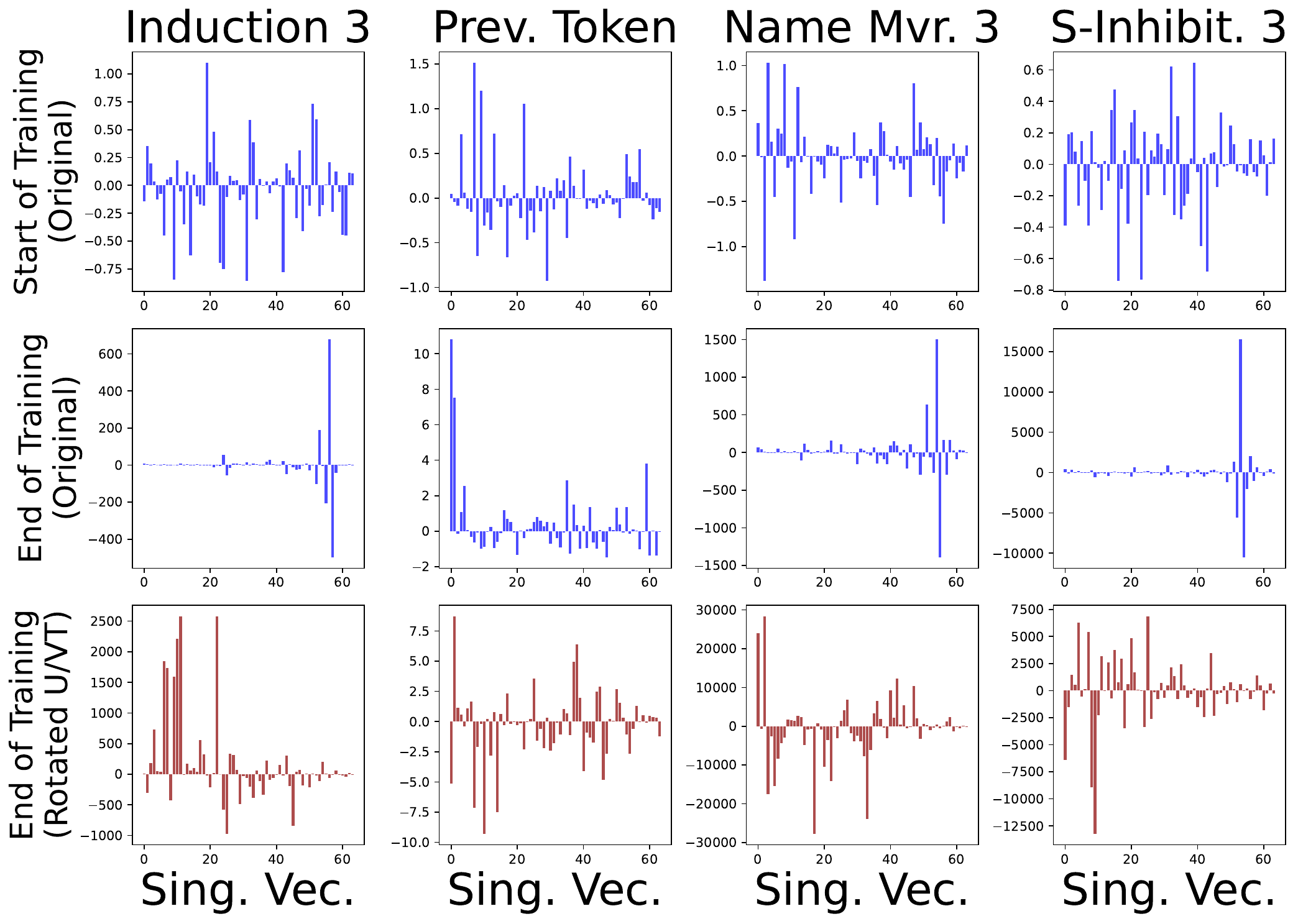}
    \caption{Sparse attention decomposition emerges after training in Pythia.  Top: Early in training; Middle: Late in training.  Bottom: Sparse decomposition does not arise with randomly rotated singular vectors. Note that singular vectors are ordered by decreasing singular value, showing that often the largest contributors to relative attention come from among the \emph{smallest} singular values.}
    \label{fig:pythia-sparse-decomp}
\end{figure}

Figure~\ref{fig:pythia-sparse-decomp} shows the decomposition of relative attention for four attention heads, at the start and end of training.  In each case, relative attention decomposition becomes significantly sparse, with most of the contribution to relative attention coming from a small set of singular vectors of the attention head.  Not all attention heads show such strong sparsity emergence (we present all head decompositions in Appendix~\ref{app:pythia}), which may be due to a larger number of features playing a role in some attention heads.  

Quantitatively, attention sparsity emerges in Pythia in a manner similar to the dynamics of the toy model.  Figure~\ref{fig:feature-present-training}(b) shows the evolution of the $S(v)$ metric during training of Pythia.  Here, $S(v)$ is averaged over all heads in the IOI circuit, and the general shape of sparsity emergence is similar to the single-feature case for the toy model.

Sparse attention decomposition is \emph{not} attributable simply to the presence of a few large singular values in attention matrices. First, Figure~\ref{fig:pythia-sparse-decomp} (middle) shows that the largest terms in \eqref{eq:SAD-1} can be those with the \emph{smallest} singular values, which is the opposite of what would be expected if low-rank properties were the cause. 
More directly, we show that sparse decomposition is not a result of a few large singular values by recomputing $S(v)$ after randomly rotating the $U$ and $V$ singular vector matrices.  This modification reorients singular vectors \emph{without} changing the matrix spectrum or rank.  
Figure~\ref{fig:feature-present-training}(b) shows that the decline in $S(v)$ disappears when the SVD basis is randomly rotated, showing that the observed sparsity is due to the specific alignment of certain directions with certain singular vectors.   Figure~\ref{fig:pythia-sparse-decomp} (bottom) shows visually the absence of sparsity under rotated $U$ and $V$.


\begin{figure}[t]
    \centering
    \begin{minipage}{0.23\textwidth}
    \includegraphics[width=\textwidth]{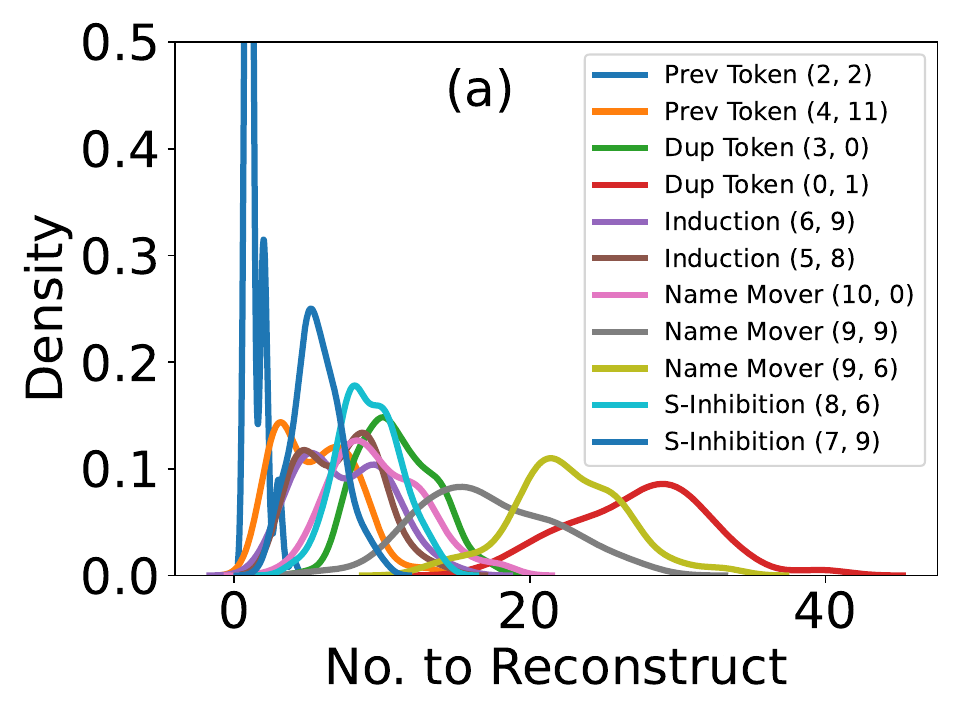}
    \end{minipage} 
    \begin{minipage}{0.23\textwidth}
    \includegraphics[width=\textwidth]{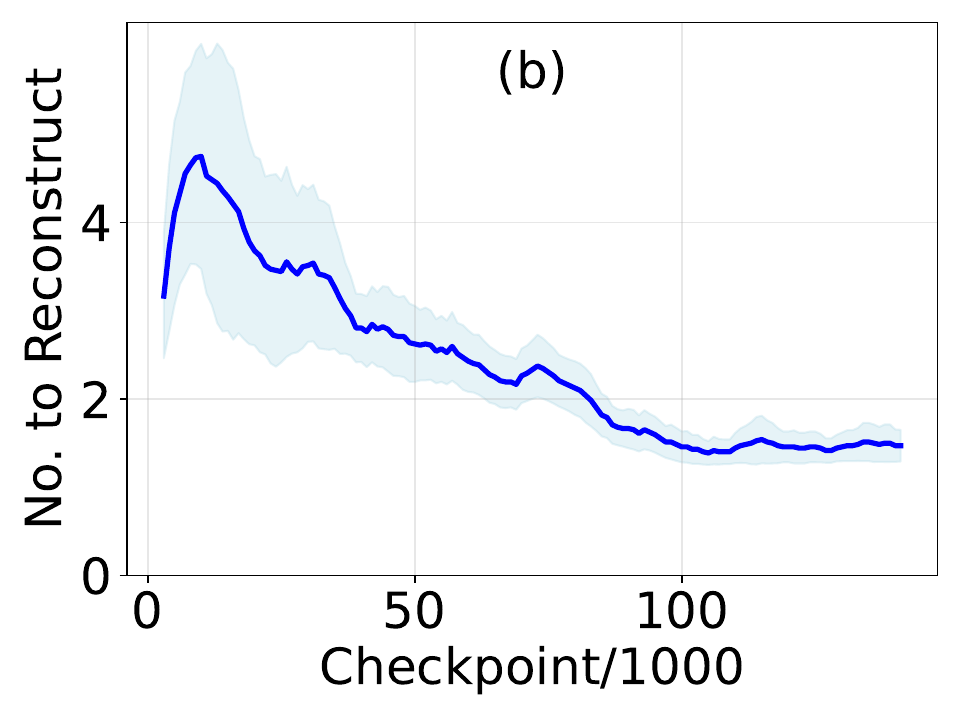}
    \end{minipage}
    \caption{No. of singular vectors to reconstruct relative attention.  (a) GPT-2, after training;  (b) Pythia, during training, 95\% CI.}
    \label{fig:real-models}
    \vspace*{-20pt} 
\end{figure}

As suggested by Figure~\ref{fig:logit-contrib-vs-feature}, the contributions made by singular vectors to relative attention can indicate the presence of features of interest in the input.  As a measure of the potential number of features of interest present in a token pair, we define  $N_\text{recon}(j)$ to count the minimum number of singular vectors needed to `reconstruct' relative attention on key token $s_j$.%
\footnote{Specifically, $N_\text{recon}(j)$ is the size of the smallest set of terms in \eqref{eq:SAD-1} whose sum equals or exceeds  the relative attention on key token $s_j$.}  This metric separates the relative attention contributions of singular vectors into `signal' and `noise.' 

Using $N_\text{recon}$ we can estimate how many features of interest may be present when an attention head attends to a token pair.  Figure~\ref{fig:real-models}(b) shows how this metric changes over the duration of training in Pythia.  It shows that for the heads considered here, typically only 1-4 singular vectors are needed to reconstruct relative attention and shows how this value declines during training (consistent with Figure~\ref{fig:feature-present-training}(b)).


As further confirmation of sparse attention decomposition, we examine the $N_\text{recon}$ metric for GPT-2 on the IOI task \cite{DBLP:conf/iclr/WangVCSS23}.  Figure~\ref{fig:real-models}(a) averages over 128 IOI prompts that vary names, prompt templates (including name order), and objects. It shows distributions of the reconstruction count for different attention heads of the trained model.  Typical values vary reflecting different functional roles of the attention heads, but they show that in most cases, $N_\text{recon}$ is small, showing that features of interest occupy low-dimensional subspaces, and suggesting that only a limited number of features are of interest to each head.

Finally, we note that SAD is present in both toy and real-world models. Within our real-world experiments, we evaluate two distinct architectures to mitigate the risk of architectural dependency in our results: Pythia, which uses RoPE, and GPT-2, which does not.



\begin{tcolorbox}[colback=red!5!white, 
                  colframe=red!50!black, 
                  title=Results: Sparse Attention Decomposition (SAD)]
\begin{itemize}[leftmargin=0pt, itemsep=0pt, topsep=0pt, parsep=0pt]
    \item Under SVF alignment, if a limited number of features of interest are present, attention will decompose sparsely in the SVD basis (SAD).
    \item In both toy and real models, we see SAD.
    \item In real models, SAD evolves during training in a manner consistent with the toy model.
\end{itemize}
\end{tcolorbox}

\section{Discussion} \label{sec:conc} 

\paragraph{Limitations.} In this paper, directly examining features in real models is out of scope.  However, a number of studies have shown in real models that features derived from SVF alignment are interpretable \cite{Ahmad:NeurIPS2025,Pan:NeurIPS2024,FrancoCrovella:arxiv24} and causal for model performance \cite{FrancoCrovella:Neurips25}.  Hence our focus here has instead been on clearly establishing the underlying theoretical and empirical evidence for SVF alignment.

While the evidence shows that singular vectors align closely with features in the toy model, it is possible that in real models at least some singular vectors align with `cone directions' (directions that are overrepresented among features) due to anisotropy \cite{godey-etal-2024-anisotropy, e27040344}. Our preliminary study (Appendix~\ref{app:alignment-under-anisotropy}) shows that SVF alignment can still occur when features are strongly anisotropic.  Further, the fact that singular-vector derived features are often interpretable  \cite{Ahmad:NeurIPS2025,Pan:NeurIPS2024,FrancoCrovella:arxiv24} suggests, at minimum, that many singular vectors are \emph{not} influenced by `cone directions,' but more work is needed to understand the importance of feature anisotropy on SVF alignment.

While our results shed light on why and when the singular vectors of attention heads align with features, we only scratch the surface of the phenomenon.  
For example, there are questions related to the \emph{number} of features that are of interest to a head.  When there are more features of interest to a head than the head has dimensions to represent ($H$ in our model), how are features deconflicted?  We present some evidence in Appendix \ref{app:more-features} that the least `important' features will share a common singular vector pair, but more work is needed, both experimentally and theoretically, to understand what can happen in general.  Second, our work is limited to the study of a single head.  Across the multiple heads in a real model, how are singular vectors in aggregate allocated among features?   Note that there is evidence that an important benefit of multi-head attention is the ability to deconflict features, ie, to suppress noise in the attention mechanism arising from feature interference \cite{adler2025capacityselfattention}.  Further, there is evidence that some singular vectors are nearly identical across multiple heads (the so-called `control signals' in \cite{FrancoCrovella:Neurips25}).  Third, is there a higher level at which to view the set of singular vectors in use across all heads in a model?  The authors in \cite{attn-head-superposition} show a toy-model case in which multiple heads work in superposition to compute a single output.  Can we observe coordinated use of singular vectors across multiple heads?  

\section{Conclusions}
Identifying features in language models is a central and open problem.  We have shown empirical and analytical evidence that the singular vectors of an attention head will tend to align with features of interest to the head, when features are represented linearly.  SVF alignment therefore provides a conceptual bridge between attention head weights and the features used by the model.
Thus, and despite a number of open questions our study leaves, our results show that SVF alignment has potential as a valuable tool for feature identification in language models.  

\vspace*{-1em}
\paragraph{Acknowledgments.}
This work benefited from early feedback from Aaron Mueller and Micah Adler. The authors gratefully acknowledge consultations with ChatGPT in developing the theorems and Claude Code in developing the experiments. This research was funded by a grant from Coefficient Giving and by NSF award CNS-2312711.

\section*{Impact Statement}

This paper presents work whose goal is to advance the field of Machine
Learning. There are many potential societal consequences of our work, none
of which we feel must be specifically highlighted here.
\bibliographystyle{icml2026}
\bibliography{references}

\newpage
\appendix
\onecolumn
\setcounter{thm}{0}
\setcounter{cor}{0}

\section{Robustness of SVF Alignment in Toy Model}
\label{app:model}

\subsection{Toy Model Details}  As described in the body of the paper, our model builds on the toy model used in \cite{Elhage22}.  We add to that model an attention head as described in Section~\ref{sec:methods}. Figure~\ref{fig:toy-model-complete} shows the complete toy-model setup.

The model is optimized using AdamW.  Learning rate begins at 1e-3, and follows a cosine decay.  The model uses a batch size of 1,024 key tokens, and 1,024 / $m$ query tokens.   We train until results stabilize, which takes between 10,000 and 80,000 steps depending on the configuration. 

\begin{figure}[h]
    \centering
    \includegraphics[width=\linewidth]{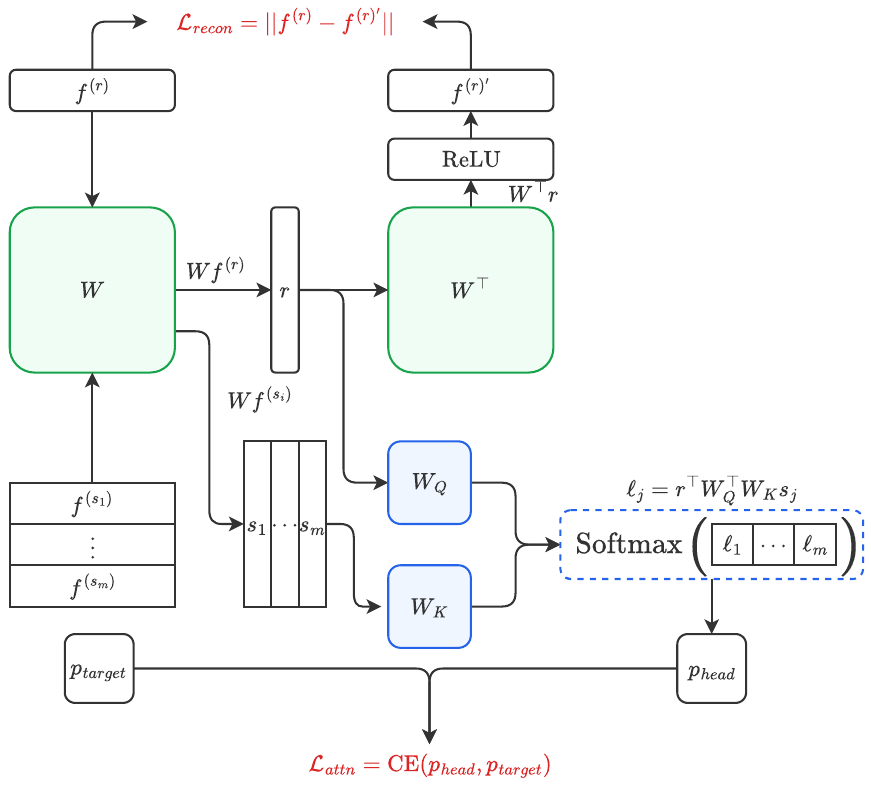}
    \caption{Complete schematic of the toy model used in Section~\ref{sec:methods}. The model converts feature-strength vectors $f$ to tokens $Wf$, applies an attention head to query-key token pairs, and trains with reconstruction and attention losses.}
    \label{fig:toy-model-complete}
\end{figure}

\paragraph{Full Alignment Plots}

Here we show complete plots of the alignment of singular vectors with features for the two cases we discuss in Section~\ref{sec:alignment}.  Figure~\ref{fig:app-features-align} shows that singular vectors align with features in both cases.

\begin{figure}[h]
    \centering
    \includegraphics[width=0.15\linewidth]{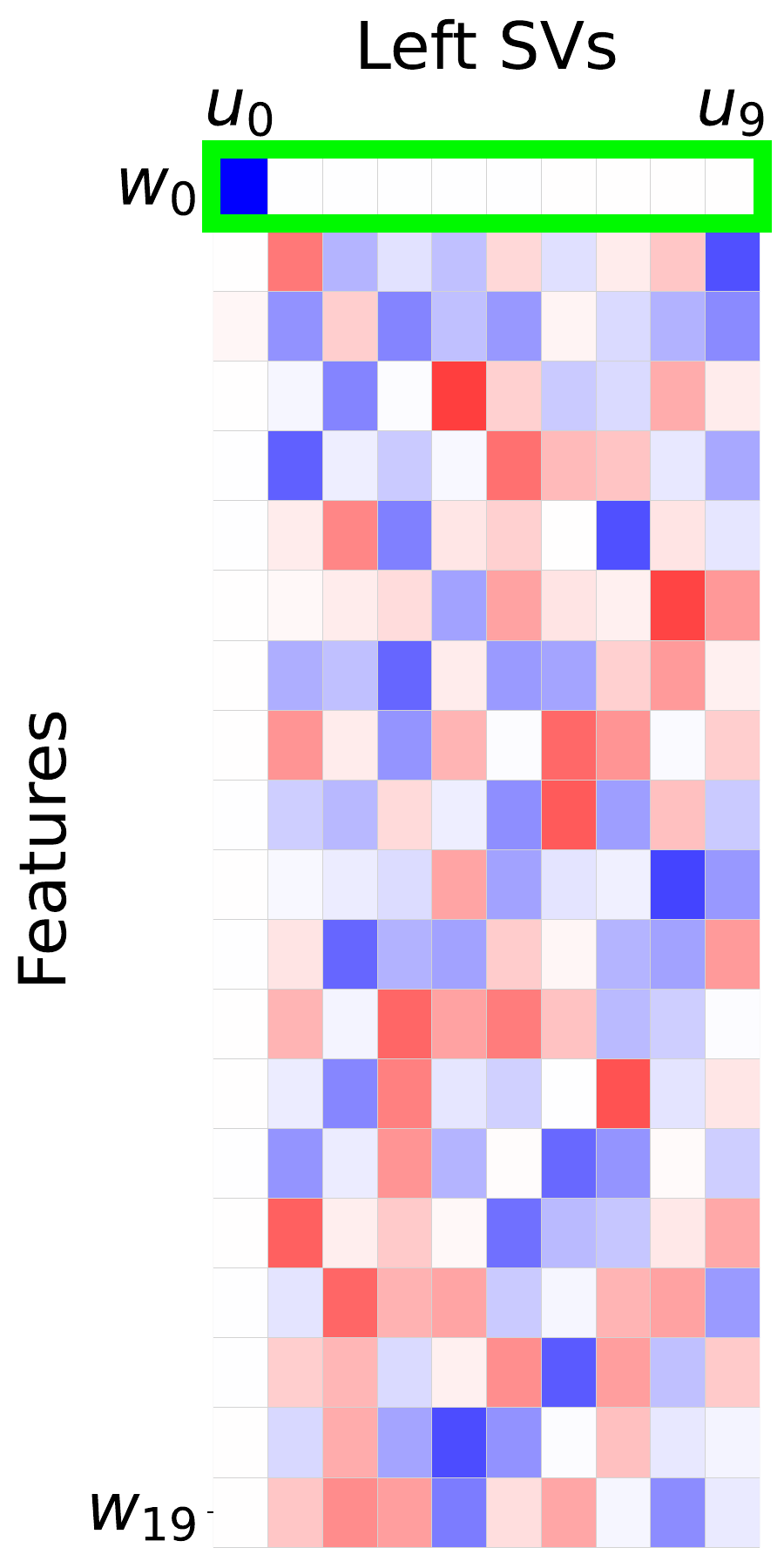}
    \includegraphics[width=0.15\linewidth]{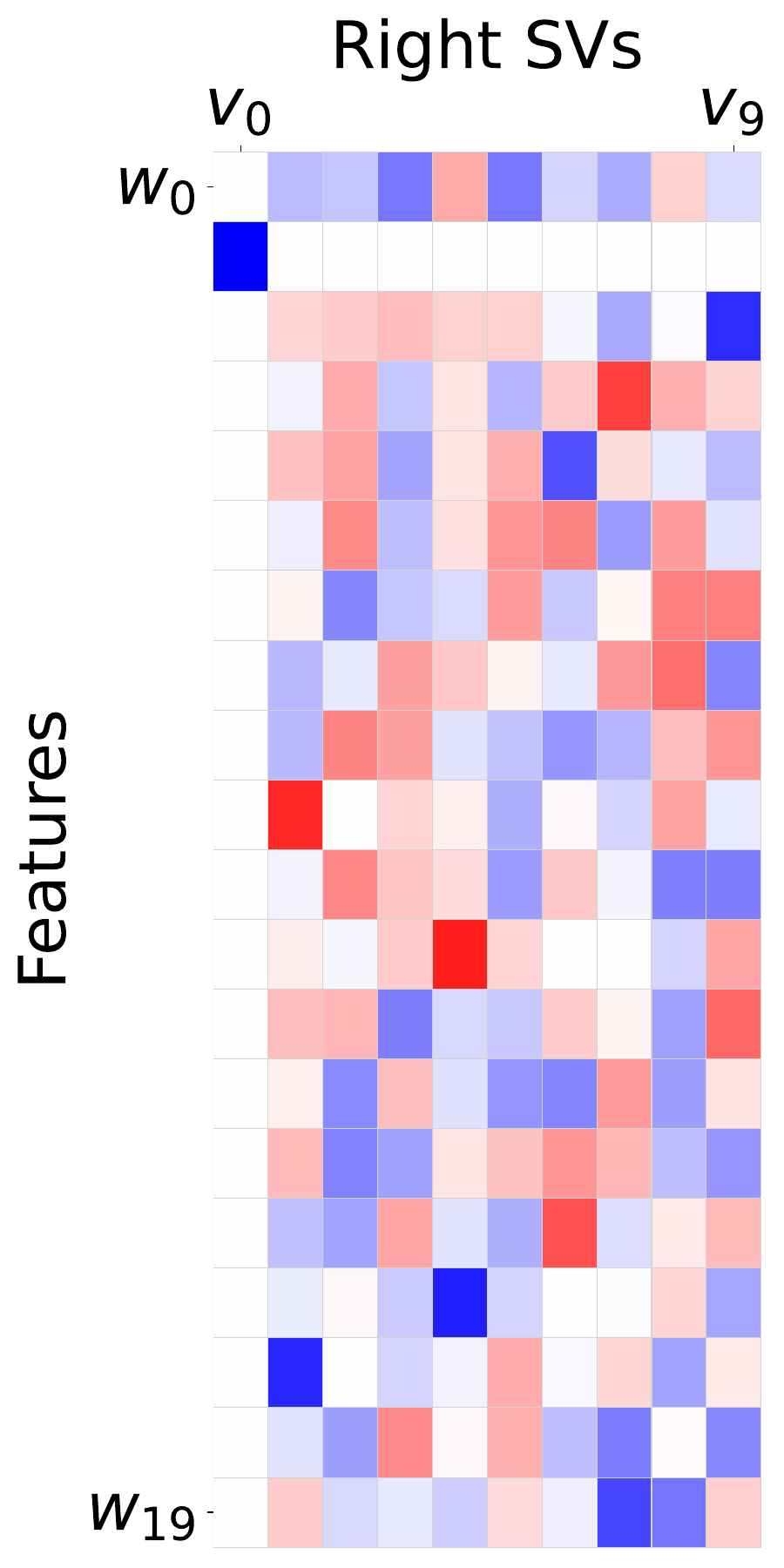}
    \includegraphics[width=0.15\linewidth]{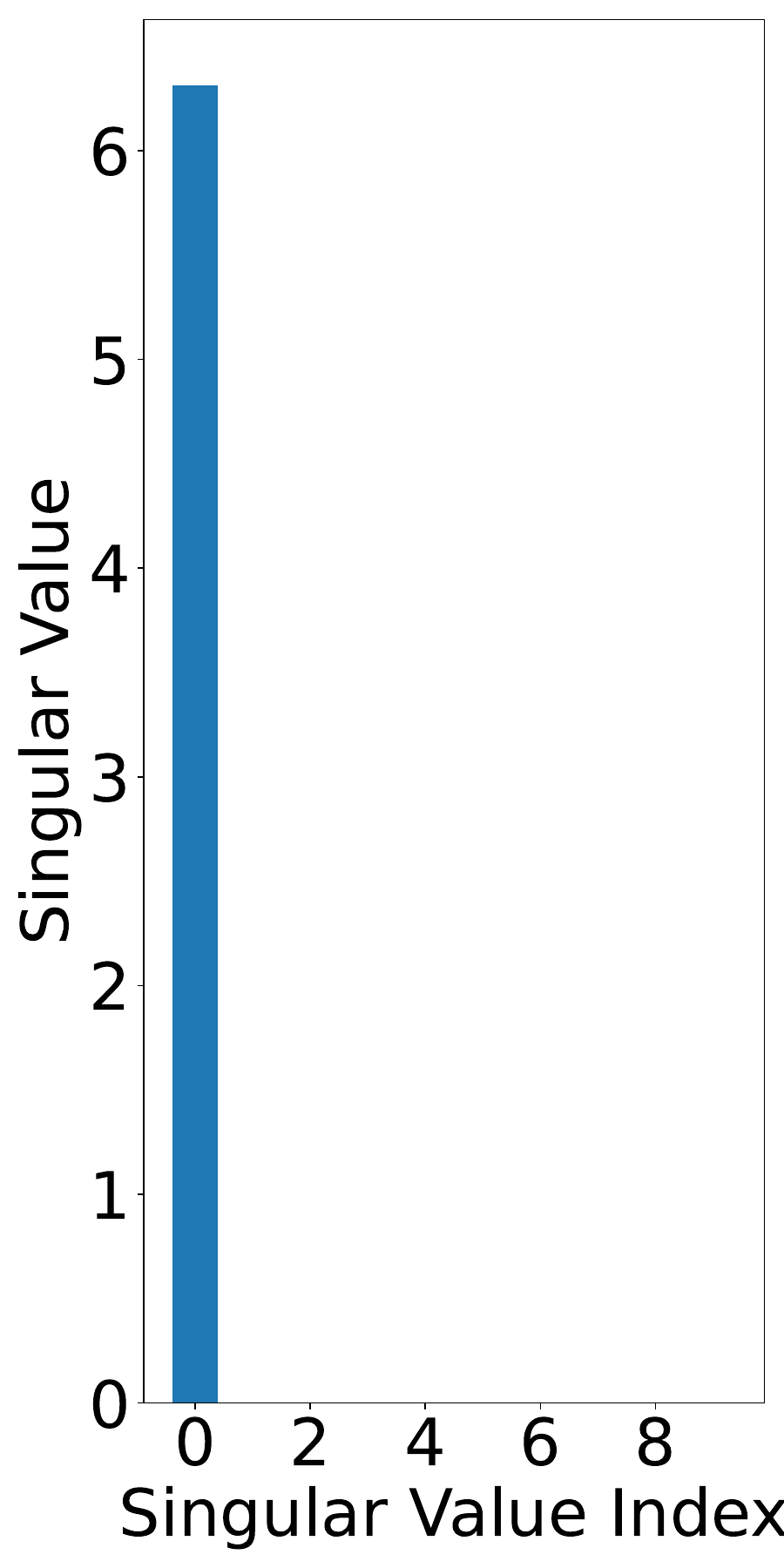}
    \includegraphics[width=0.15\linewidth]{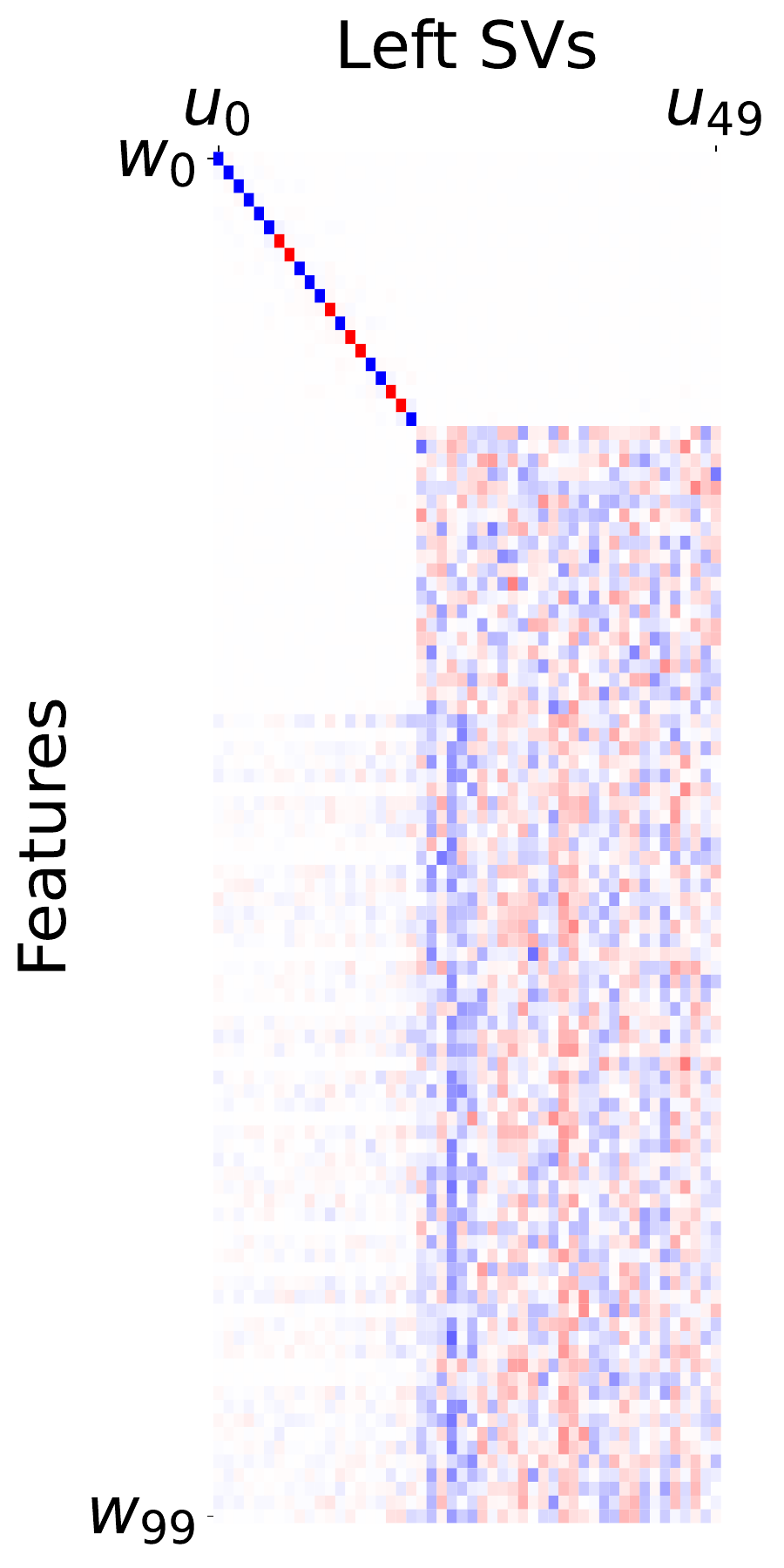}
    \includegraphics[width=0.15\linewidth]{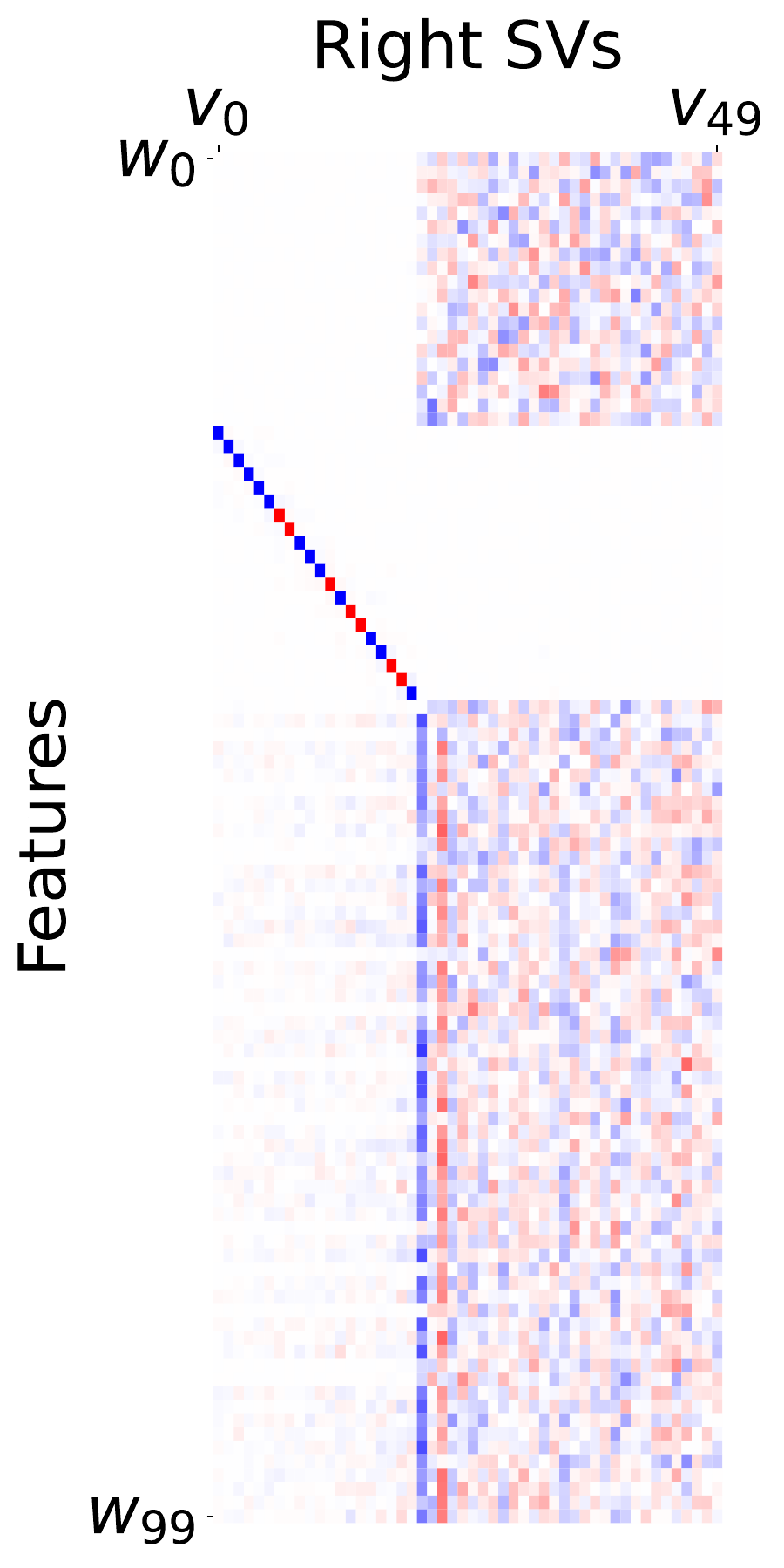}
    \includegraphics[width=0.15\linewidth]{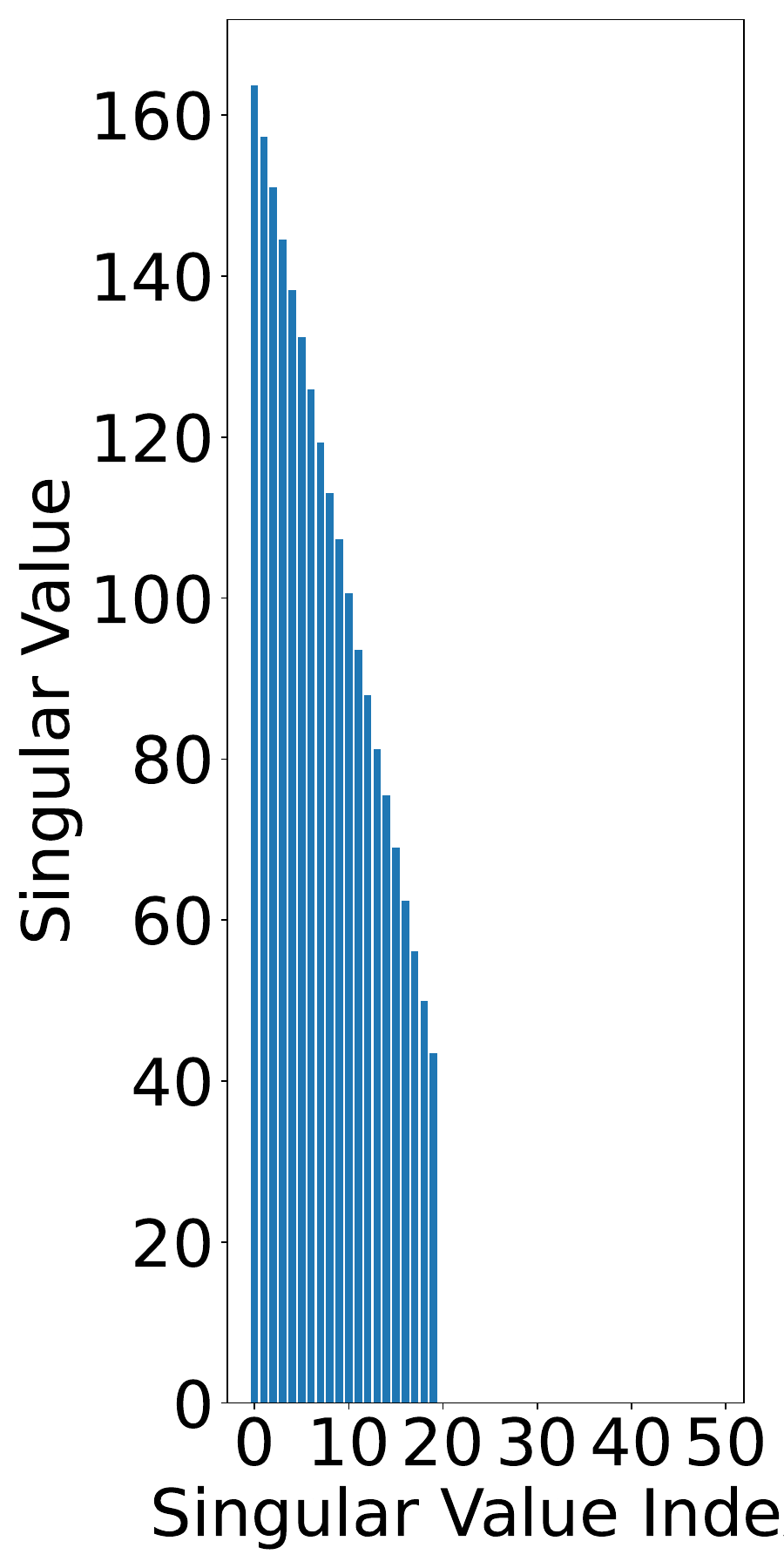}
    \centerline{\hfill(a)\hfill\hfill(b)\hfill}
    \caption{Singular vectors align with features. For $\Omega = U\Sigma V^\top,$ cosine similarities of $W$ and $U$, $W$ and $V$, and spectrum (diagonal of $\Sigma$).  This is an expanded view of Figure~\ref{fig:features-align}. (a) 20 Features of which $w_0, w_1$ are of interest; $w_0$ aligns with $u_0$ and $w_1$ aligns with $v_0.$ (b) 100 Features of which $w_0 \dots w_{39}$ are of interest.  Due to sign ambiguities in SVD, SVF alignment yields cosine similarities that are either both positive or both negative.}
    \label{fig:app-features-align}
\end{figure}

\subsection{Robustness} Here we show that SVF alignment arises robustly across a wide range of model parameters.  We show this by starting from a default model configuration and varying various aspects of the model.   The default configuration is shown in Table~\ref{tab:model-params}.

\begin{table}[h]
    \centering
    \begin{tabular}{|c|c|c|}
    \hline
        $N$ & Number of features & 20 \\
        $D$ & Token dimension & 10 \\
        $H$ & Head dimension & 10 \\
        $\lambda$ & Weight of $\mathcal{L}_\text{attn}$ in loss & 4\\
        $m$ & Context length & 4 \\
        $p$ & Feature probability & 0.52 \\
    \hline
    \end{tabular}
    \caption{Default Model Settings for Parameter Sweeps}
    \label{tab:model-params}
\end{table}

Throughout this section, we study the case in which four feature pairs are of interest.  Feature pair (0, 4) has target logit of 24, feature pair (1, 5) has target logit of 21, feature pair (2, 6) has target logit of 18, and feature pair (3, 7) has target logit of 15.  These values are chosen to provide sufficient difference in attention after application of Softmax.  As discussed in Section~\ref{sec:alignment}, the feature pair having the largest logit will generally map to the first singular vector, etc.   To demonstrate SVF alignment we report the absolute cosine similarity between a feature and the singular vector that it corresponds to, in light of this mapping.  

\paragraph{Feature Probability.}

As described in Section~\ref{sec:methods}, features are present in the token with probability $p$.  We study feature probabilities from 1 down to 0.02 in exponentially declining steps.  Note that for feature probabilities greater than 0.05 in the default case, the token will in expectation be the sum of multiple features. Thus for most cases we study, the attention head does not `see' any features themselves; features are mixed together in a token.   

Figure~\ref{fig:app-sv-alignment} shows that under default settings, alignment occurs robustly across a range of feature sparsities, down to 0.038.  At the level of 0.038 or less, feature pairs do not occur often enough in the input for the model to optimize for them. For example, at the level of $p = 0.02$, a given feature pair will only occur in less than 1\% of inputs.  Figure~\ref{fig:app-sv-alignment-dynamics} demonstrates that alignment occurs more readily during training when features are present more often.

\begin{figure}
    \centering
    \includegraphics[width=\linewidth]{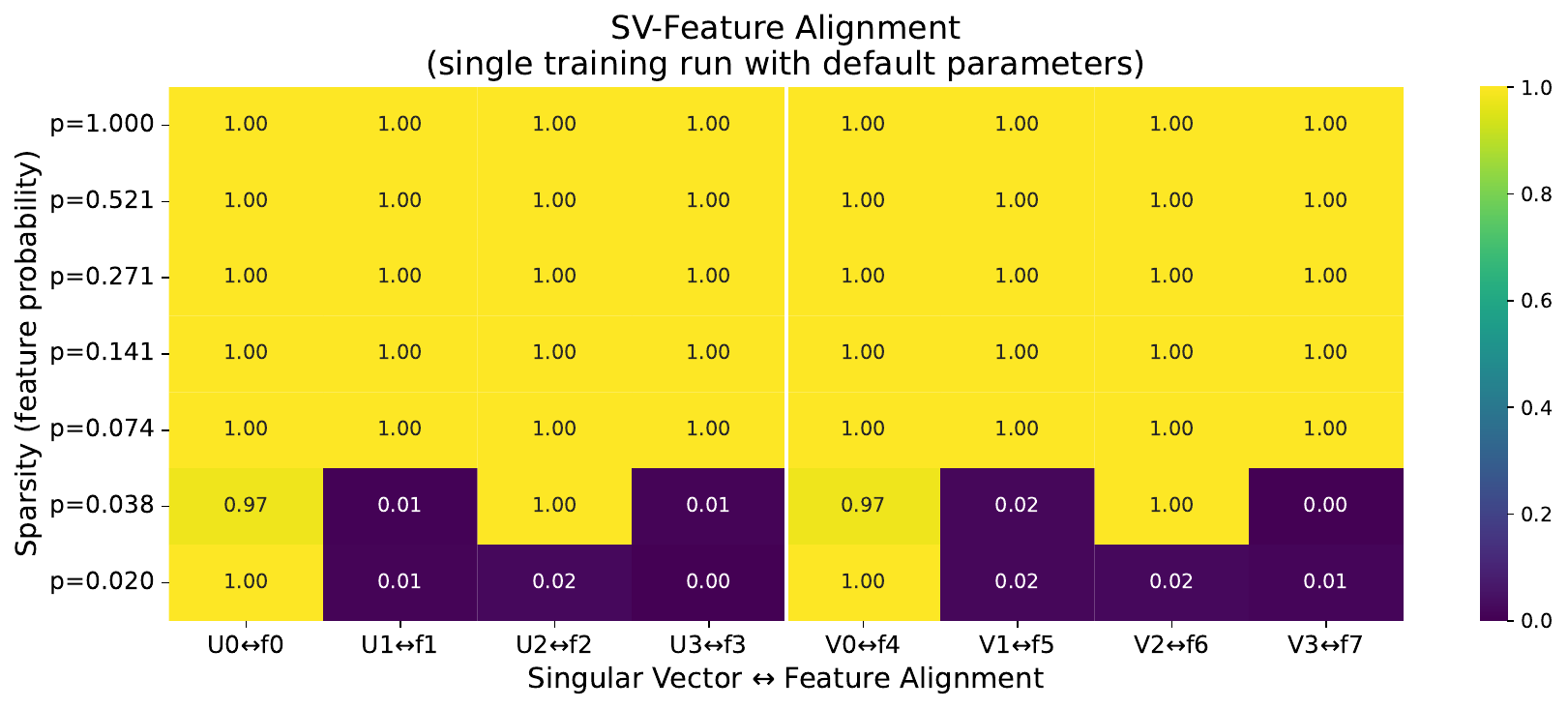}
    \caption{SVF Alignment is robust to feature probability.}
    \label{fig:app-sv-alignment}
\end{figure}

\begin{figure}
    \centering
    \includegraphics[width=0.75\linewidth]{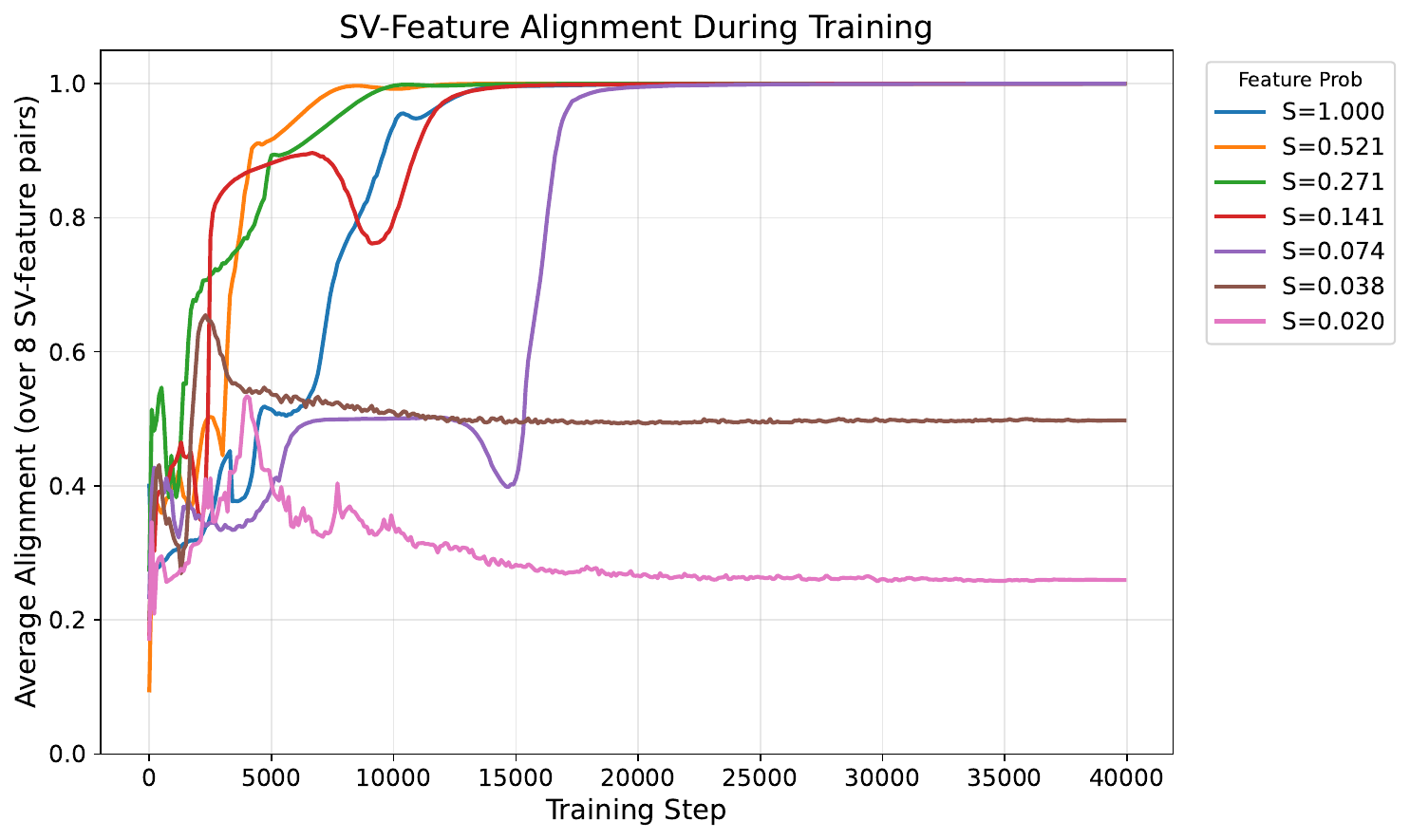}
    \caption{SVF Alignment arises earlier when features occur more frequently}
    \label{fig:app-sv-alignment-dynamics}
\end{figure}

\paragraph{Loss Weight $\lambda$.}  The hyperparameter $\lambda$ is the weight of $\mathcal{L}_\text{attn}$ in the model's loss function and so controls the relative importance of reconstruction loss versus attention loss in training the model.  Figure~\ref{fig:app-lambda-alignment} shows that SVF alignment is robust over a range of $\lambda$ values spanning two orders of magnitude.

\begin{figure}
    \centering
    \includegraphics[width=\linewidth]{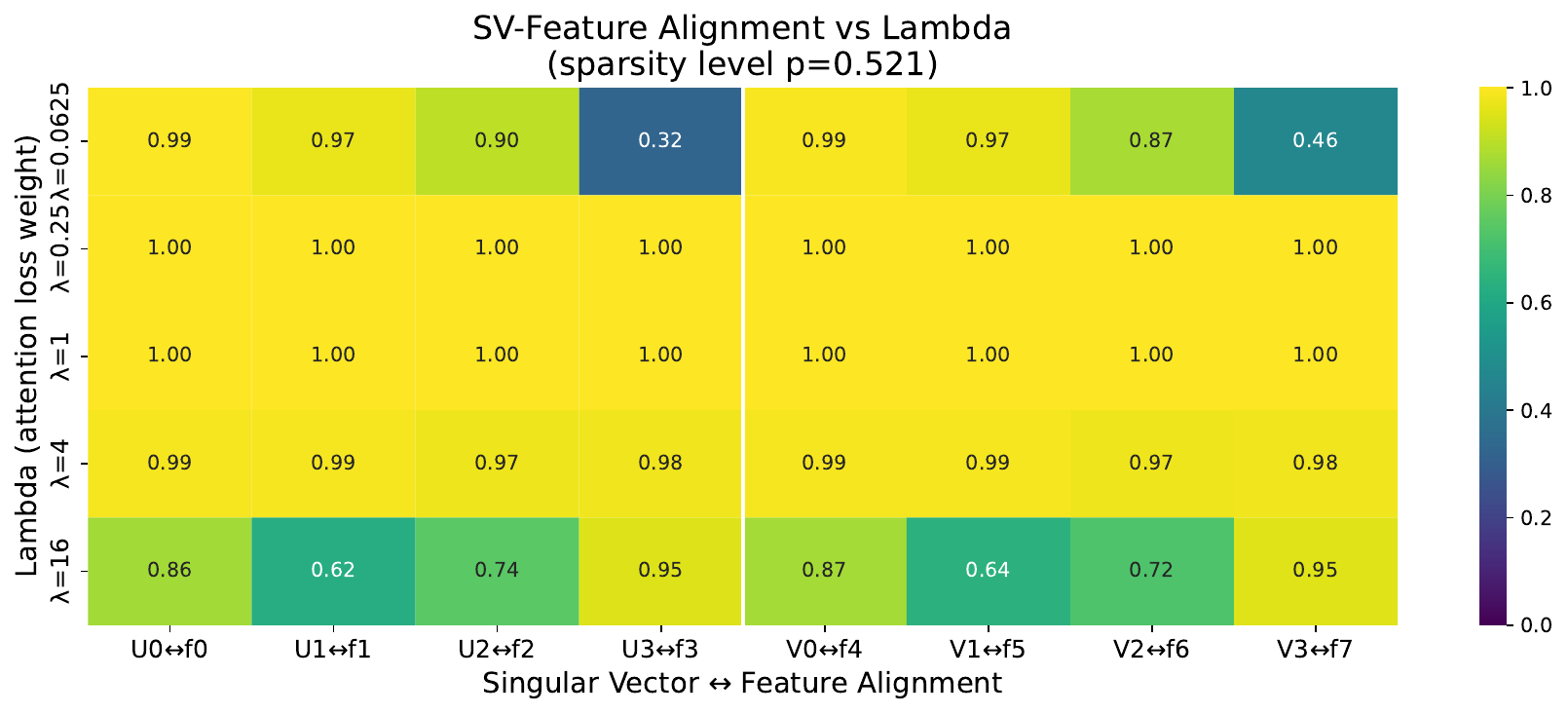}
    \caption{SVF Alignment is robust to $\lambda$.}
    \label{fig:app-lambda-alignment}
\end{figure}

\paragraph{Number of Features.}  There are three dimensional parameters in the model: the number of features $N$, the hidden dimension $D$, and the dimension of the attention head $H$.  We study a range of settings by holding the hidden dimension fixed and varying the other two.

First we show that when the number of features varies from the size of the hidden dimension up to three times its size, SVF alignment robustly occurs.  Figure~\ref{fig:app-N-alignment} shows SVF alignment for $N = 10, 15, 20, 25, 30.$

\begin{figure}
    \centering
    \includegraphics[width=\linewidth]{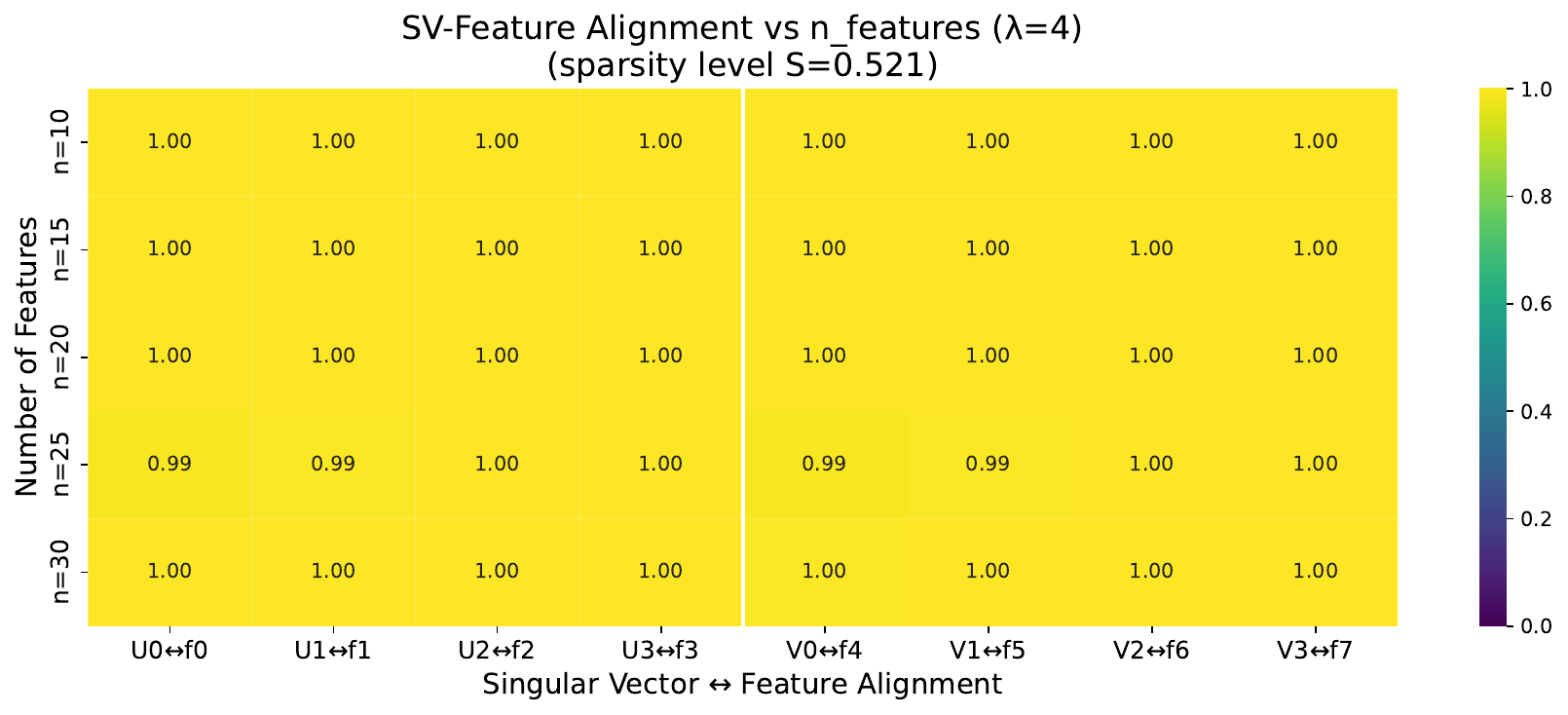}
    \caption{SVF Alignment is robust to varying number of features (compared to $D = 10$).}
    \label{fig:app-N-alignment}
\end{figure}

\paragraph{Head Dimension.} Next we show that our results are consistent across varying dimension of the attention head.   The head's dimension must be less than or equal to the hidden dimension, so we study configurations in which head dimension is 10, 8, 6, and 4.  Figure~\ref{fig:app-H-alignment} shows SVF alignment occurs robustly here as well.

\begin{figure}
    \centering
    \includegraphics[width=\linewidth]{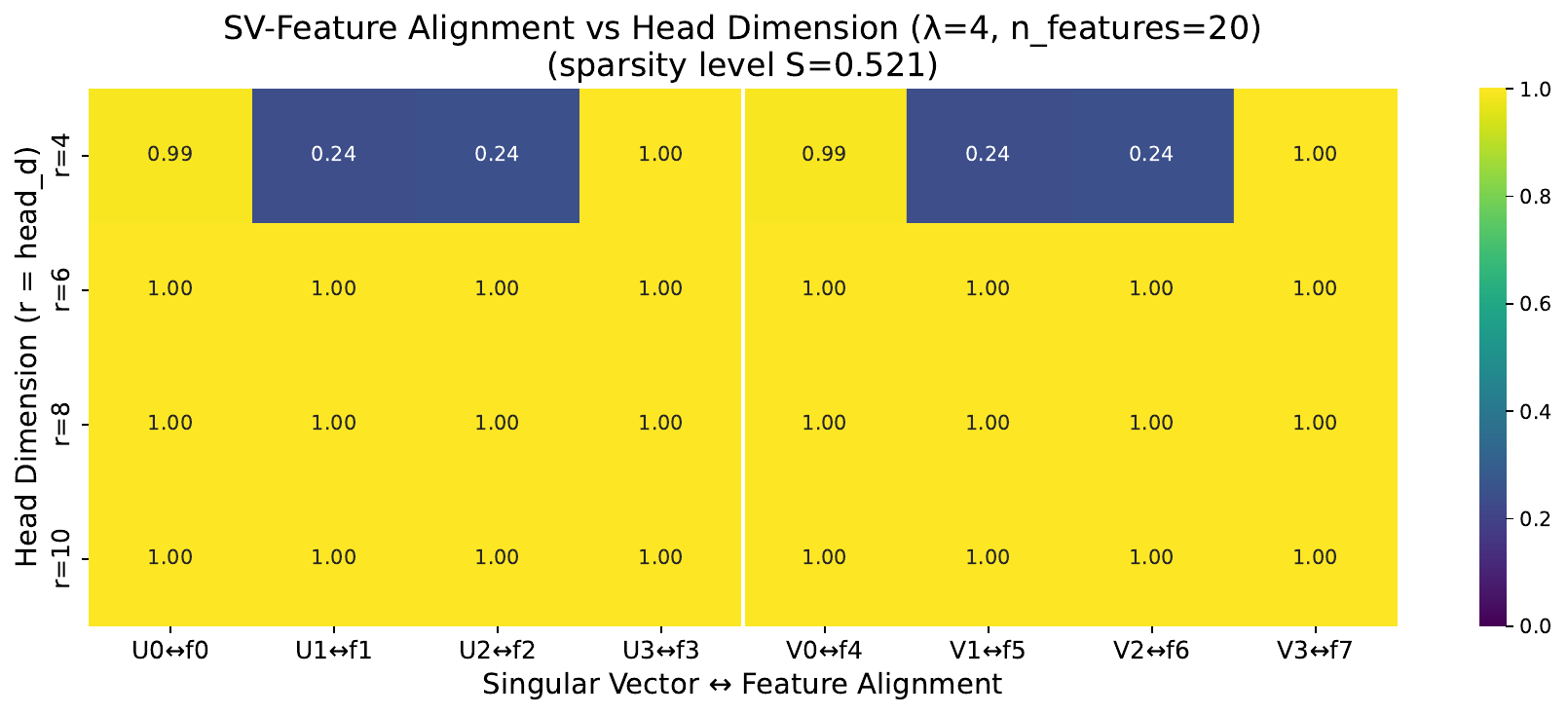}
    \caption{SVF Alignment is robust to varying head dimension (compared to $D = 10$).}
    \label{fig:app-H-alignment}
\end{figure}

\paragraph{Context Length.}  The model applies Softmax to logits over a context of size $m$.  Figure~\ref{fig:app-m-alignment} shows that SVF alignment occurs over a range of context lengths.

\begin{figure}
    \centering
    \includegraphics[width=\linewidth]{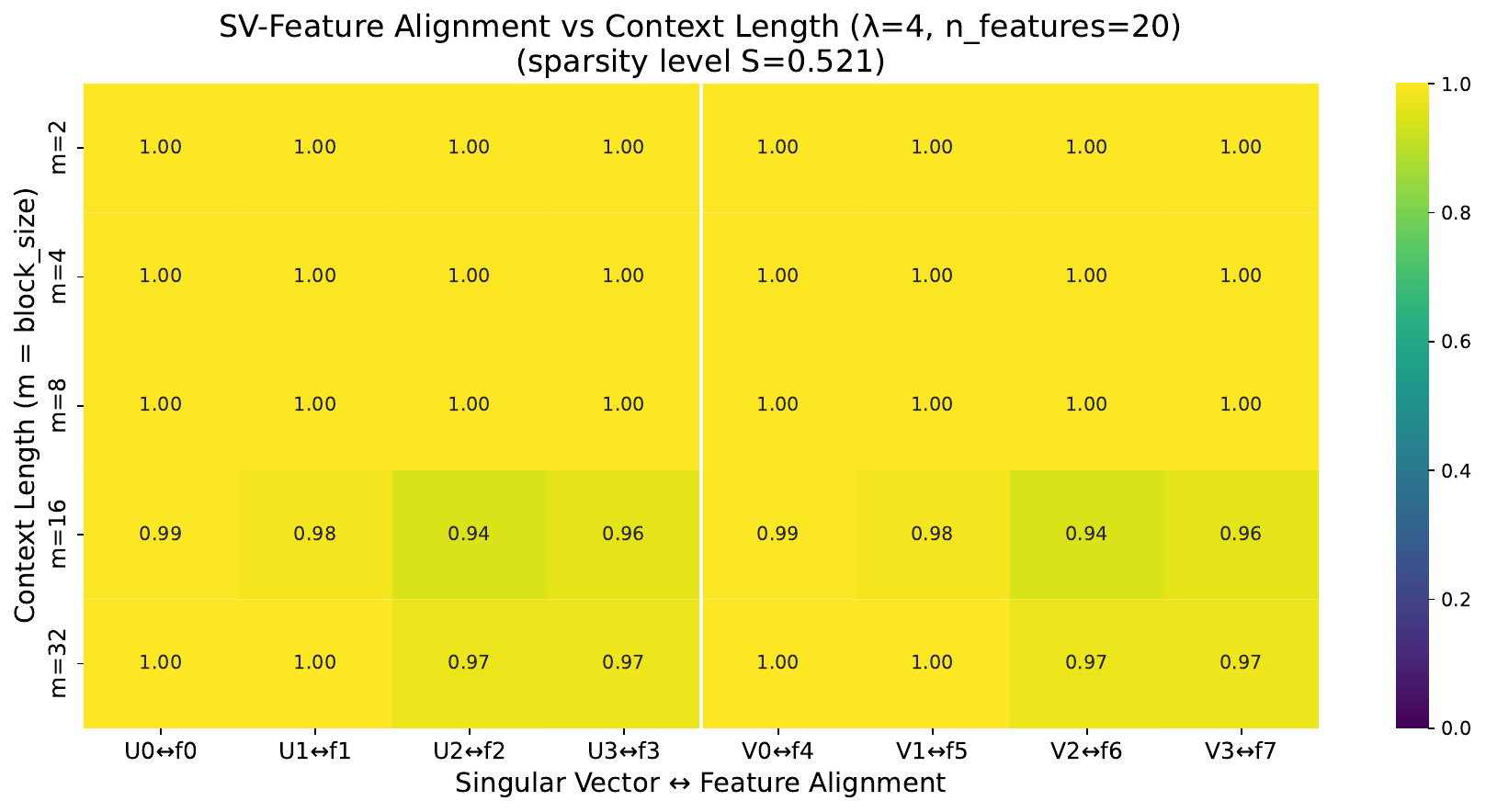}
    \caption{SVF Alignment is robust to varying context length.}
    \label{fig:app-m-alignment}
\end{figure}

\paragraph{Random Seeds.}

Finally, in Figure~\ref{fig:app-random-seed-alignment} we show that SVF alignment reliably occurs over 5 random seeds. 

\begin{figure}
    \centering
    \includegraphics[width=\linewidth]{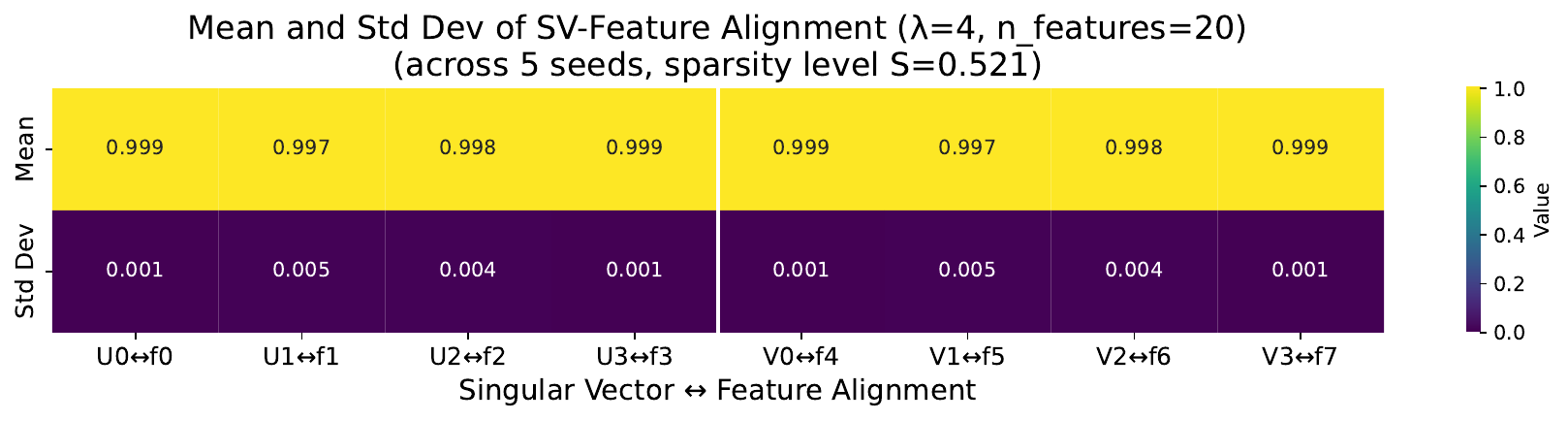}
    \caption{SVF Alignment is robust across a set of random seeds.}
    \label{fig:app-random-seed-alignment}
\end{figure}

\section{Theoretical Results}
\label{app:theorems}
\subsection{Roadmap to the Theorems}  
In this section we present theorems that support the experimental results in Section~\ref{sec:alignment}. 

The first set of theorems (\ref{thm:strong-version-no-effective-subspace-caveat} -  \ref{thm:approximate-alignment}) show alignment of singular vectors when features are fixed, attention weights $\Omega$ are allowed to vary, and a single feature pair $(x_1, y_1)$ is of interest to the head.  
\begin{description}
    \item [Theorem \ref{thm:strong-version-no-effective-subspace-caveat}] shows that $\Omega$ will be rank-1, and it shows the form that the singular vectors take, which depends on $(x_1, y_1)$ and the covariance of features.
    \item [Corollary \ref{thm:exact-alignment-isotropy}] shows that if features are in isotropic position (also called a tight frame), then the top singular vectors of $\Omega$ will exactly align with the features $(x_1, y_1)$.
    \item [Theorem \ref{thm:approximate-alignment}] shows that if the features deviate from isotropic position by a small amount, the top singular vectors will still be directionally close to $(x_1, y_1).$ 
\end{description}

Theorem \ref{thm:orthogonalization} considers the alternative case: attention weights $\Omega$ are fixed, features are allowed to vary.  Here, there is a reconstruction loss penalty on the features (as in our model).

\begin{description}
    \item [Theorem \ref{thm:orthogonalization}] shows that if two feature pairs are of interest to the model, and the first feature pair is already aligned with singular vectors of $\Omega$, the second feature pair will be induced to be orthogonal to the first feature pair.
\end{description}
Together, these theorems provide basic theoretical justification for the phenomena observed in Section~\ref{sec:alignment}: alignment of features with singular vectors, and orthogonality of aligned features.

\subsection{Analysis of SVF Alignment}
\setcounter{thm}{0}
\setcounter{cor}{0}

\paragraph{Features.} 
\label{subsec:features}
In our toy model, we considered a single set of features $\{w_i\}$ which could appear in either the query or the key tokens.   However, since $\Omega$ matrices are not in general symmetric, it is more general to allow the sets of features in the two tokens to differ.  So in our theoretical analysis we allow the sets of features appearing in the two tokens to differ, denoting the features appearing in the query token as $\{x_i\}$ and the features appearing in the key token as $\{y_i\}$.

Hence, let \(X={x_1,\dots,x_N}\subset\mathbb{R}^D\) and
\(Y={y_1,\dots,y_N}\subset\mathbb{R}^D\) be collections of unit vectors, which we will refer to as features.

\label{app:svf-alignment}

\subsubsection{Setting}

We adopt a student-teacher setting, in which the teacher generates the target attention, and the student (model) learns via that attention pattern.  We work in the context of a single head having $\Omega = W_Q^\top W_K.$

Define matrices \(X=[x_1\ \cdots\ x_N]\in\mathbb{R}^{D\times N}\),
\(Y=[y_1\ \cdots\ y_N]\in\mathbb{R}^{D\times N}\), and Gram
matrices \[
\Sigma_X := XX^\top\in\mathbb{R}^{D\times D},\qquad \Sigma_Y := YY^\top\in\mathbb{R}^{D\times D}.
\]

\paragraph{Assumption.}  In the following we assume that \(\Sigma_X\) and \(\Sigma_Y\) are
invertible.  This is expected given that we are in the regime of more features than dimensions in the hidden space.  In a regime where the covariances are not invertible, $\Sigma_X^{-1}$ and $\Sigma_Y^{-1}$ can be replaced by the corresponding Moore-Penrose pseudoinverses.

\paragraph{Tokens.}
\label{subsec:tokens-feature-sum-generator}

Fix a sampling probability \(p\in(0,1)\). Let \(X_p\subseteq X\) be a
random subset where each \(x_i\) is included independently with
probability \(p\), and define the \textbf{query token} 
\begin{equation}
r := \sum_{x\in X_p} x.
\label{eq:r-generator}
\end{equation}
Define keys analogously: sample \(m\) independent subsets
\(Y_p^{(1)},\dots,Y_p^{(m)}\subseteq Y\) (each element included i.i.d.
with probability \(p\)) and define \textbf{key tokens} 
\begin{equation}
s_j := \sum_{y\in Y_p^{(j)}} y,\qquad j=1,\dots,m.
\label{eq:s-generator}
\end{equation}

\paragraph{Student head.}\label{subsec:student-head}
Let \(\Omega\in\mathbb{R}^{D\times D}\) parameterize a single attention
head's bilinear score. Define student logits
over keys: \[
\ell^{(\Omega)}_j(r,s_{1:m}) := r^\top\Omega s_j,\qquad j=1,\dots,m,
\] and student attention distribution \[
p_\Omega(j\mid r,s_{1:m}) := \frac{\exp(\ell^{(\Omega)}_j)}{\sum_{k=1}^m \exp(\ell^{(\Omega)}_k)}.
\]

\paragraph{Teacher head.}
\label{subsec:teacher-head-smooth-feature-pair-and}
The goal of the teacher head is to output an attention value that depends on whether specific feature pairs are present.  In our case, we seek to output a large attention value when features ($x_1, y_1$) are present.  These are the analogs of features ($w_0, w_1$) in Figure~\ref{fig:feature-inner-products}(b).  

In the body of the paper, for intuitive clarity we parameterize the teacher head using logits, defining in the case of Figure~\ref{fig:feature-inner-products}(b) that the teacher logit should be 1 iff features $(w_0, w_1)$ are present.  We capture the setting of Figure~\ref{fig:feature-inner-products}(b) by defining $T \in \mathbb{R}^{N\times N}$ with $T_{11} = 1$ and $T_{ij} = 0$ everywhere else. Then the teacher head should satisfy:
\[
X^\top \Omega Y = T = e_1 e_1^\top.
\]
So
\[
\Sigma_X \Omega \Sigma_Y = X X^\top \Omega Y Y^\top = X T Y^\top = x_1 y_1^\top.
\]
Hence we define the feature detectors 
\[
u :=\Sigma_X^{-1}x_1,\qquad v:=\Sigma_Y^{-1}y_1.
\]
Given that tokens are generated as sums of features, \(u^\top r\)
is a (whitened) linear statistic that increases when the component
\(x_1\) is present in \(r\) \emph{despite correlations among features};
likewise \(v^\top s_j\) increases when \(y_1\) is present in \(s_j\).
Their product is thus a differentiable analog of ``attend iff \((x_1,y_1)\) is
present.''

Fix a scale \(\alpha>0\) and define the teacher matrix
\[
\Omega_T := \alpha u v^\top = \alpha (\Sigma_X^{-1}x_1)(\Sigma_Y^{-1}y_1)^\top.
\] 
Teacher logits are then: 
\[
\ell^{(T)}_j(r,s_{1:m}) := r^\top \Omega_T s_j = \alpha(u^\top r)(v^\top s_j),
\] 
and teacher attention distribution is: \[
p_T(j\mid r,s_{1:m}) := \frac{\exp(\ell^{(T)}_j)}{\sum_{k=1}^m \exp(\ell^{(T)}_k)}.
\]

\paragraph{Training objective.}\label{training-objective-teacherstudent-cross-entropy}
The population objective is the cross-entropy of $p_\Omega$ and $p_T$:
\[
\mathcal{L}(\Omega) := \mathbb{E}_{r,s_{1:m}}\Big [ -\sum_{j=1}^m p_T(j\mid r,s_{1:m})\log p_\Omega(j\mid r,s_{1:m}) \Big ],
\] where \((r,s_{1:m})\) are constructed via \eqref{eq:r-generator} and \eqref{eq:s-generator}.

\subsubsection{Lemmas}\label{lemmas-softmax-identifiability}

\begin{lemma}
Let \(a,b\in\mathbb{R}^m\). Then \[
\mathrm{softmax}(a)=\mathrm{softmax}(b)
\quad\Longleftrightarrow\quad
a=b+c\mathbf{1}
\ \text{ for some scalar }c\in\mathbb{R},
\] where \(\mathbf{1}\in\mathbb{R}^m\) is the all-ones vector.
\label{lem:softmax-equality-implies-logit-shift}
\end{lemma}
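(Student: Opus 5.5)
We prove the two directions separately. Both rest on the explicit formula $\mathrm{softmax}(a)_j = e^{a_j}/\sum_k e^{a_k}$ and on the fact that every entry of a softmax output is strictly positive, so logarithms are always defined.

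For the direction ($\Leftarrow$), suppose $a = b + c\mathbf{1}$. Then $e^{a_j} = e^{c}e^{b_j}$ for every $j$. Consequently
\[
\mathrm{softmax}(a)_j = \frac{e^{c}e^{b_j}}{e^{c}\sum_k e^{b_k}} = \mathrm{softmax}(b)_j
\]
for all $j$. The common factor $e^{c}>0$ cancels between numerator and denominator, which gives the claim immediately.

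For the direction ($\Rightarrow$), suppose $\mathrm{softmax}(a)=\mathrm{softmax}(b)$. Take logarithms coordinatewise, which is legitimate because all entries are positive. This yields
\[
a_j - \log\sum_k e^{a_k} = b_j - \log\sum_k e^{b_k}\qquad\text{for all } j.
\]
Now set $c := \log\sum_k e^{a_k} - \log\sum_k e^{b_k}$. This is a single real number that does not depend on $j$. Rearranging gives $a_j = b_j + c$ for every $j$, that is, $a = b + c\mathbf{1}$.

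Neither direction presents a real obstacle. The only points needing care are, first, noting that the normalizers $\sum_k e^{a_k}$ and $\sum_k e^{b_k}$ are positive, so their logarithms are defined; and second, observing that $c$ is independent of $j$, which is exactly what makes the shift a multiple of $\mathbf{1}$.
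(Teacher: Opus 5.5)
Your proof is correct and follows essentially the same route as the paper: the ($\Leftarrow$) direction is identical, since the common factor $e^{c}$ cancels in the normalization. For ($\Rightarrow$), the paper removes the normalizer by taking ratios $e^{a_i}/e^{a_j}=e^{b_i}/e^{b_j}$ to get $a_i-a_j=b_i-b_j$, whereas you take logarithms coordinatewise and read off the explicit shift $c=\log\sum_k e^{a_k}-\log\sum_k e^{b_k}$; this difference is cosmetic, and your version has the small advantage of writing down $c$ explicitly.
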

\begin{proof}
If \(a=b+c\mathbf{1}\), then \(\exp(a_i)=e^c\exp(b_i)\),
and normalization cancels \(e^c\). Conversely, if
\(\mathrm{softmax}(a)=\mathrm{softmax}(b)\), then for all \(i,j\), \[
\frac{e^{a_i}}{e^{a_j}}=\frac{e^{b_i}}{e^{b_j}}
\Rightarrow a_i-a_j=b_i-b_j,
\] which implies \(a=b+c\mathbf{1}\).
\end{proof}

\begin{lemma}
For any fixed \((r,s_{1:m})\), \[
-\sum_{j} p_T(j)\log p_\Omega(j) = H(p_T)+\mathrm{KL}(p_T|p_\Omega),
\] so
\(\mathcal{L}(\Omega)=\mathbb{E}[H(p_T)]+\mathbb{E}[\mathrm{KL}(p_T|p_\Omega)]\).
\label{lem:cross-entropy-decomposition}
\end{lemma}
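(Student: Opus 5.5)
This is a pointwise identity, so I will fix a single sample $(r,s_{1:m})$, prove the decomposition there, and then take expectations. For fixed $(r,s_{1:m})$, both $p_T(\cdot)=p_T(\cdot\mid r,s_{1:m})$ and $p_\Omega(\cdot)=p_\Omega(\cdot\mid r,s_{1:m})$ are softmax outputs over $m$ finite logits. Hence every entry is strictly positive, so all logarithms below are finite and no $0\log 0$ conventions are needed.

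The first step is to add and subtract $\log p_T(j)$ inside the sum:
\[
-\sum_j p_T(j)\log p_\Omega(j) = -\sum_j p_T(j)\log p_T(j) + \sum_j p_T(j)\log\frac{p_T(j)}{p_\Omega(j)}.
\]
By definition, the first term is the Shannon entropy $H(p_T)$ and the second is $\mathrm{KL}(p_T\|p_\Omega)$. This gives the pointwise identity.

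The second step is to take $\mathbb{E}_{r,s_{1:m}}$ of both sides and use linearity of expectation. This requires each expectation to be finite. That holds because the token distribution is supported on finitely many pairs: $r$ and each $s_j$ are sums over subsets of the finite sets $X$ and $Y$. So every expectation is a finite weighted sum, and $\mathcal{L}(\Omega)=\mathbb{E}[H(p_T)]+\mathbb{E}[\mathrm{KL}(p_T\|p_\Omega)]$.

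I see no real obstacle. The only care needed is the integrability check just described, which the finite support settles immediately. Note that $\mathbb{E}[H(p_T)]$ does not depend on $\Omega$, and that $\mathrm{KL}\ge 0$ with equality iff $p_\Omega=p_T$. This is presumably how the decomposition will combine with Lemma~\ref{lem:softmax-equality-implies-logit-shift} later on.
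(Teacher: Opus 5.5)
Your proof is correct and matches the paper, whose proof just says ``Standard identity.'' Your remarks that softmax outputs are strictly positive and that the token distribution has finite support supply the small rigor details the paper leaves implicit.
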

\begin{proof}
    Standard identity.
\end{proof}

\begin{lemma}
Define differences \(d_j = s_j - s_1\) for \(j=2,\dots,m\), and let 
\[
\Delta = [d_2\ \cdots\ d_m] \in \mathbb{R}^{D \times (m-1)},\qquad
\Sigma_\Delta = \mathbb{E}[\Delta \Delta^\top].
\] 
Then given that $r$ and $s_j$ are constructed via \eqref{eq:r-generator} and \eqref{eq:s-generator}.
\[
\Sigma_\Delta = 2(m-1)p(1-p)\Sigma_Y.
\] 
In particular, if \(\Sigma_Y\) is invertible, then \(\Sigma_\Delta\) is
invertible.
\label{lem:population-key-difference-moment-is-full-rank}
\end{lemma}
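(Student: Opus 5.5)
We compute $\Sigma_\Delta$ directly from the generative model of the key tokens. Write each key token as $s_j = Y a^{(j)}$, where $a^{(j)}\in\{0,1\}^N$ has i.i.d.\ Bernoulli$(p)$ entries and the vectors $a^{(1)},\dots,a^{(m)}$ are mutually independent; this is just a rewriting of \eqref{eq:s-generator}. Then $d_j = Y(a^{(j)}-a^{(1)})$. Since $\Delta\Delta^\top=\sum_{j=2}^m d_j d_j^\top$, linearity of expectation gives
\[
\Sigma_\Delta=\sum_{j=2}^m Y\,\mathbb{E}\big[(a^{(j)}-a^{(1)})(a^{(j)}-a^{(1)})^\top\big]\,Y^\top .
\]
So everything reduces to computing the second-moment matrix of $b:=a^{(j)}-a^{(1)}$ for a fixed $j\neq 1$.

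The key step is to show that $\mathbb{E}[bb^\top]=2p(1-p)I_N$. Because $a^{(j)}$ and $a^{(1)}$ are i.i.d., $\mathbb{E}[b]=0$, so $\mathbb{E}[bb^\top]=\mathrm{Cov}(b)$. This covariance equals $\mathrm{Cov}(a^{(j)})+\mathrm{Cov}(a^{(1)})=2\,\mathrm{Cov}(a^{(1)})$. The coordinates of $a^{(1)}$ are independent Bernoulli$(p)$, so $\mathrm{Cov}(a^{(1)})=p(1-p)I_N$. Note that the mean term $p^2\mathbf{1}\mathbf{1}^\top$ appearing in $\mathbb{E}[aa^\top]$ cancels precisely because we take differences. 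This cancellation is the only place where some care is needed, and it is what makes the answer proportional to $\Sigma_Y$ rather than $\Sigma_Y$ plus a rank-one term. Summing the $m-1$ identical terms then gives
\[
\Sigma_\Delta = 2(m-1)p(1-p)\,YY^\top = 2(m-1)p(1-p)\,\Sigma_Y .
\]

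For invertibility: when $p\in(0,1)$ and $m\ge 2$, the scalar $2(m-1)p(1-p)$ is strictly positive. A positive multiple of an invertible matrix is invertible, so $\Sigma_\Delta$ is invertible whenever $\Sigma_Y$ is.

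There is no substantial obstacle in this argument. The only points to watch are the mean cancellation above and the implicit requirement $m\ge 2$, without which $\Delta$ is empty.
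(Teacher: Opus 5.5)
Your proposal is correct and follows essentially the same route as the paper: expand each $d_j$ in the feature basis, show $\mathbb{E}[d_j d_j^\top]=2p(1-p)\Sigma_Y$, and sum the $m-1$ identical terms. Your phrasing via $\mathrm{Cov}(b)=2p(1-p)I_N$ simply makes explicit that the cross terms $i\neq i'$ vanish, a step the paper's coordinate computation leaves implicit.
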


\begin{proof} 
Write each key as \[
s_j = \sum_{i=1}^N a_{j,i} y_i,
\] with \(a_{j,i} \sim \mathrm{Bernoulli}(p)\) independent across \(j,i\).
Then \[
d_j = s_j - s_1 = \sum_{i=1}^N (a_{j,i} - a_{1,i}) y_i,
\qquad
\mathbb{E}[d_j] = 0.
\] 
For \(j \ge 2\), 
\[
\mathbb{E}[d_j d_j^\top]
= \sum_{i=1}^N \mathbb{E}[(a_{j,i}-a_{1,i})^2]\, y_i y_i^\top
= \sum_{i=1}^N 2p(1-p)\, y_i y_i^\top
= 2p(1-p)\Sigma_Y.
\] 
Now 
\[
\Sigma_\Delta = \mathbb{E}[\Delta \Delta^\top]
= \sum_{j=2}^m \mathbb{E}[d_j d_j^\top].
\] 
There are \((m-1)\) terms, so \[
\Sigma_\Delta
= 2(m-1)p(1-p)\Sigma_Y 
\] 
So if \(\Sigma_Y\) invertible and \(p(1-p)>0\) and \(m\ge 2\), then
\(\Sigma_\Delta\) is invertible.
\end{proof}

A similar calculation gives the query second moment \[
\Sigma_r = \mathbb{E}[r r^\top] = p(1-p)\Sigma_X + p^2 m_x m_x^\top,
\quad m_x = \sum_{i=1}^N x_i,
\] so \(\Sigma_r\) is invertible whenever \(\Sigma_X\) is invertible and
\(p \in (0,1)\).

\subsubsection{Results}

\begin{thm}
    \label{thm:strong-version-no-effective-subspace-caveat}
\noindent\emph{\textbf{Unique Minimizer is Rank-1.}}
Assume \(p \in (0,1)\), \(m \ge 2\), \(\Sigma_X\) and \(\Sigma_Y\)
invertible. Then the population objective \(\mathcal{L}(\Omega)\) is
uniquely minimized at \[
\Omega^\star = \Omega_T = \alpha (\Sigma_X^{-1}x_1)(\Sigma_Y^{-1}y_1)^\top.
\] Consequently, \(\Omega^\star\) is rank-1 and \[
u_1(\Omega^\star) \propto \Sigma_X^{-1} x_1,
\qquad
v_1(\Omega^\star) \propto \Sigma_Y^{-1} y_1.
\]
\end{thm}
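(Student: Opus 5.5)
The plan is to use Lemma~\ref{lem:cross-entropy-decomposition} to reduce the problem to identifiability. That lemma writes $\mathcal{L}(\Omega)=\mathbb{E}[H(p_T)]+\mathbb{E}[\mathrm{KL}(p_T\|p_\Omega)]$. The first term does not depend on $\Omega$, and the second is nonnegative. At $\Omega=\Omega_T$ the student and teacher distributions coincide for every $(r,s_{1:m})$, so the KL term vanishes and $\Omega_T$ is a global minimizer. For uniqueness, I will show that any other minimizer $\Omega$ must equal $\Omega_T$.

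If $\mathcal{L}(\Omega)=\mathcal{L}(\Omega_T)$, then $\mathbb{E}[\mathrm{KL}(p_T\|p_\Omega)]=0$. The token distribution is discrete: finitely many subsets, each with probability $p^k(1-p)^{N-k}>0$ since $p\in(0,1)$. Hence the KL divergence is zero on every realization, and $p_\Omega=p_T$ pointwise. Lemma~\ref{lem:softmax-equality-implies-logit-shift} then gives $r^\top\Omega s_j - r^\top\Omega_T s_j = c(r,s_{1:m})$ for all $j$. Subtracting the $j=1$ equation removes the unknown shift:
\[
r^\top M d_j = 0,\qquad d_j=s_j-s_1,\quad M:=\Omega-\Omega_T,
\]
for every realization and every $j\ge 2$.

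Next I convert this pointwise identity into $M=0$ using second moments. Squaring and summing over $j$ gives $\mathbb{E}\bigl[\sum_j (r^\top M d_j)^2\bigr]=0$. The query is independent of the keys, so this expectation equals
\[
\mathrm{tr}\bigl(M^\top \Sigma_r M\,\Sigma_\Delta\bigr)=\bigl\|\Sigma_r^{1/2} M \Sigma_\Delta^{1/2}\bigr\|_F^2 .
\]
Here $\Sigma_\Delta=2(m-1)p(1-p)\Sigma_Y$ by Lemma~\ref{lem:population-key-difference-moment-is-full-rank}, and $\Sigma_r=p(1-p)\Sigma_X+p^2m_xm_x^\top$ as computed after that lemma. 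Both are positive definite under the hypotheses ($m\ge2$, $p\in(0,1)$, invertible $\Sigma_X,\Sigma_Y$); for $\Sigma_r$, a PD matrix plus a PSD matrix is PD. The Frobenius norm vanishing then forces $M=0$, so $\Omega=\Omega_T$.

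The rank-1 conclusion follows immediately. $\Omega^\star=\alpha uv^\top$ with $u=\Sigma_X^{-1}x_1\neq0$ and $v=\Sigma_Y^{-1}y_1\neq0$, since $x_1,y_1$ are unit vectors and the inverses are nonsingular. Its SVD is
\[
\Omega^\star=\bigl(\alpha\|u\|\|v\|\bigr)\,\frac{u}{\|u\|}\,\frac{v^\top}{\|v\|},
\]
so the top singular vectors are $u_1\propto\Sigma_X^{-1}x_1$ and $v_1\propto\Sigma_Y^{-1}y_1$, up to sign.

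I expect the main obstacle to be the step from "KL is zero in expectation" to "logit differences vanish pointwise, and hence $M=0$". It requires handling the softmax shift, via differences with $s_1$, and justifying the factorization of the expectation through independence of $r$ and $s_{1:m}$. This is exactly where the full-rank second-moment lemmas are needed.
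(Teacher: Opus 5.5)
Your proof is correct and follows the paper's argument step by step: the cross-entropy/KL decomposition, the softmax-shift lemma, subtracting the $j=1$ logit, then independence of $r$ and the keys together with invertibility of $\Sigma_r$ and $\Sigma_\Delta$. The only differences are cosmetic. You conclude via the quadratic form $\|\Sigma_r^{1/2} M \Sigma_\Delta^{1/2}\|_F^2=0$, where the paper takes expectations of $rr^\top M\Delta\Delta^\top=0$ to get $\Sigma_r M \Sigma_\Delta = 0$. You also state explicitly two points the paper leaves implicit: that $\Omega_T$ attains zero KL, and that the finite token distribution turns ``almost surely'' into ``for every realization.''
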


\begin{proof}
By Lemma~\ref{lem:cross-entropy-decomposition},
\[
\mathcal{L}(\Omega) = \text{const} + \mathbb{E}[\mathrm{KL}(p_T | p_\Omega)],
\] 
so any minimizer must satisfy \(\mathrm{KL}(p_T | p_\Omega)=0\)
almost surely, hence 
\[
p_\Omega(\cdot \mid r,s_{1:m}) = p_T(\cdot \mid r,s_{1:m})\quad \text{a.s.}
\] 
Fix a sample \((r,s_{1:m})\) in this event. By Lemma~\ref{lem:softmax-equality-implies-logit-shift}, equality of
softmax distributions implies that the corresponding logits differ by a constant shift: 
\[
r^\top \Omega s_j = r^\top \Omega_T s_j + c(r,s_{1:m}) \quad \forall j.
\] 
Subtract the equation for \(j=1\): 
\[
r^\top(\Omega-\Omega_T)(s_j-s_1) = 0 \quad \forall j=2,\dots,m.
\] 
In matrix form with \(\Delta=[s_2-s_1\ \cdots\ s_m-s_1]\), 
\[
r^\top(\Omega-\Omega_T)\Delta = 0.
\] 
Multiply by \(r\) on the left and by \(\Delta^\top\) on the right: 
\[
r r^\top (\Omega-\Omega_T) \Delta \Delta^\top = 0.
\] 
Take expectations and use independence of \(r\) and \(\Delta\): 
\[
\mathbb{E}[r r^\top] (\Omega-\Omega_T) \mathbb{E}[\Delta \Delta^\top] = \Sigma_r(\Omega-\Omega_T)\Sigma_\Delta = 0.
\] 
By Lemma~\ref{lem:population-key-difference-moment-is-full-rank}, \(\Sigma_\Delta\) is invertible (since \(\Sigma_Y\) is
invertible, \(p \in (0,1)\), \(m \ge 2\)). Also \(\Sigma_r\) is
invertible (since \(\Sigma_X\) is invertible and \(p \in (0,1)\)). Therefore,
\[
\Omega-\Omega_T = 0,
\] 
so \(\Omega^\star=\Omega_T\) uniquely.

Finally, since \(\Omega_T=\alpha u v^\top\) is rank-1, its top left and
right singular vectors are proportional to \(u=\Sigma_X^{-1}x_1\) and
\(v=\Sigma_Y^{-1}y_1\).
\end{proof}

\begin{cor}
\label{thm:exact-alignment-isotropy}
\noindent\emph{\textbf{Exact Alignment.}}
Assume \(p\in(0,1)\), \(m\ge 2\), and the teacher is
\(\Omega_T=\alpha(\Sigma_X^{-1}x_1)(\Sigma_Y^{-1}y_1)^\top\) with
\(\alpha>0\). 
Further assume that each set of features is in isotropic position (also called a \emph{tight frame}):
\[
\Sigma_X = a I,\qquad \Sigma_Y = b I,
\]
for some positive scalars $a, b$.

Then the unique population
minimizer satisfies 
\[
\Omega^\star = \Omega_T = \frac{\alpha}{ab}x_1 y_1^\top.
\] 
In particular, \(\Omega^\star\) is rank-1 and its unique nonzero left
and right singular vectors satisfy 
\[
u_1(\Omega^\star)\parallel x_1,\qquad v_1(\Omega^\star)\parallel y_1.
\]
\end{cor}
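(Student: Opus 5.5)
The plan is to obtain the corollary as a direct specialization of Theorem~\ref{thm:strong-version-no-effective-subspace-caveat}, so the work is simply to confirm that its hypotheses hold and then to simplify the resulting formula. The theorem requires $p\in(0,1)$, $m\ge 2$, and invertibility of $\Sigma_X$ and $\Sigma_Y$. The first two conditions are assumed outright. For the third, under the isotropy assumption $\Sigma_X = aI$ and $\Sigma_Y = bI$ with $a,b>0$, so both matrices are invertible, with $\Sigma_X^{-1} = a^{-1}I$ and $\Sigma_Y^{-1} = b^{-1}I$. Theorem~\ref{thm:strong-version-no-effective-subspace-caveat} then says that the population objective $\mathcal{L}$ has a unique minimizer, namely $\Omega^\star = \Omega_T$.

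Next I substitute the inverses into the teacher:
\[
\Omega_T = \alpha(\Sigma_X^{-1}x_1)(\Sigma_Y^{-1}y_1)^\top = \alpha (a^{-1}x_1)(b^{-1}y_1)^\top = \frac{\alpha}{ab}\,x_1y_1^\top .
\]
This gives the displayed formula for $\Omega^\star$.

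For the singular-vector claim, I use the fact that $x_1$ and $y_1$ are unit vectors. This lets me write
\[
\Omega^\star = \frac{\alpha}{ab}\, x_1 \cdot 1 \cdot y_1^\top,
\]
which is already a valid SVD with singular value $\alpha/(ab) > 0$, left singular vector $x_1$, and right singular vector $y_1$. The matrix has rank exactly $1$ because $\alpha/(ab)\neq 0$ and $x_1, y_1 \ne 0$. Its column space is $\mathrm{span}(x_1)$ and its row space is $\mathrm{span}(y_1)$, so the nonzero left and right singular vectors are unique up to a joint sign flip. That gives $u_1\parallel x_1$ and $v_1\parallel y_1$. The same conclusion also follows from the last line of Theorem~\ref{thm:strong-version-no-effective-subspace-caveat}, since $\Sigma_X^{-1}x_1 = x_1/a \propto x_1$ and $\Sigma_Y^{-1}y_1 = y_1/b \propto y_1$.

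I do not expect a real obstacle. The only point that needs care is the uniqueness of the singular vectors, which holds only up to sign. This is why the statement uses ``$\parallel$'' rather than equality, and it is consistent with the sign ambiguity noted for Figure~\ref{fig:app-features-align}.
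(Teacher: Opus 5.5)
Your proposal is correct and follows the paper's proof: you invoke Theorem~\ref{thm:strong-version-no-effective-subspace-caveat} to get $\Omega^\star=\Omega_T$, substitute $\Sigma_X^{-1}=a^{-1}I$ and $\Sigma_Y^{-1}=b^{-1}I$ to obtain $\frac{\alpha}{ab}x_1y_1^\top$, and then read off the singular vectors. Your extra remarks make explicit some details the paper leaves implicit: checking that $\Sigma_X,\Sigma_Y$ are invertible, using $\|x_1\|=\|y_1\|=1$ to exhibit the SVD directly, and noting the joint sign ambiguity.
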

\begin{proof}
By Theorem~\ref{thm:strong-version-no-effective-subspace-caveat} \(\Omega^\star=\Omega_T\). Under
the assumption of isotropy \(\Sigma_X^{-1}=\frac{1}{a} I\) and
\(\Sigma_Y^{-1}=\frac{1}{b}I\), hence \[
\Omega^\star = \alpha(\Sigma_X^{-1}x_1)(\Sigma_Y^{-1}y_1)^\top = \alpha\left(\frac{1}{a}x_1\right)\left(\frac{1}{b}y_1\right)^\top = \frac{\alpha}{ab}x_1y_1^\top.
\] Thus $\Omega^\star$ has left and right singular vectors
proportional to \(x_1\) and \(y_1\).
\end{proof}

\begin{thm}
\label{thm:approximate-alignment}
\noindent\emph{\textbf{Approximate Alignment.}}
Let
\begin{equation}
\Sigma_X = XX^\top = \frac{N}{D} (I + E_X), \qquad
\Sigma_Y = YY^\top = \frac{N}{D} (I + E_Y),
\label{eq:bounded-alignment}
\end{equation}
for some symmetric ``error'' matrices $E_X,E_Y \in \mathbb{R}^{D\times D}$. Assume the sets are close to isotropic with $\max\{\|E_X\|_2, \|E_Y\|_2\} <1/2.$ 
Define $\tau = 4\|E_X\|_2\|E_Y\|_2 +  2 \|E_X\|_2 + 2\|E_Y\|_2.$  Then if $\tau < 1,$
\begin{equation*}
       \sin \angle(u_1, x_1), \ \sin \angle(v_1, y_1)
\;<\; \frac{\tau}{1-\tau},
\end{equation*}
and in particular if $\tau < 1/2,$
\begin{equation}
       \sin \angle(u_1, x_1), \ \sin \angle(v_1, y_1)
\;<\; 8\|E_X\|_2\|E_Y\|_2 +  4 \|E_X\|_2 + 4\|E_Y\|_2.
\label{eq:bounded-alignment-final}
\end{equation}
\end{thm}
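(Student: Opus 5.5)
By Theorem~\ref{thm:strong-version-no-effective-subspace-caveat}, the minimizer is $\Omega^\star = \alpha(\Sigma_X^{-1}x_1)(\Sigma_Y^{-1}y_1)^\top$. This matrix is rank-1, so its singular vectors are determined exactly:
\[
u_1 \parallel \Sigma_X^{-1}x_1, \qquad v_1 \parallel \Sigma_Y^{-1}y_1 .
\]
The problem therefore reduces to a purely deterministic question: how far does $\Sigma_X^{-1}x_1$ point away from $x_1$ when $\Sigma_X = \frac{N}{D}(I+E_X)$ with $\|E_X\|_2<1/2$? The same question applies to $y_1$. Since $\sin\angle(u_1,x_1)$ is scale invariant, I will discard the factor $N/D$ and $\alpha$ and study the direction of $(I+E_X)^{-1}x_1$, where $x_1$ is a unit vector.

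\textbf{Step 1: perturbation of the inverse.} Write $(I+E_X)^{-1}x_1 = x_1 + z$ with
\[
z = \bigl((I+E_X)^{-1}-I\bigr)x_1 = -(I+E_X)^{-1}E_X x_1 .
\]
The Neumann series gives $\|(I+E_X)^{-1}\|_2 \le 1/(1-\|E_X\|_2) < 2$. Hence $\|z\| \le \|E_X\|_2/(1-\|E_X\|_2) \le 2\|E_X\|_2$.

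\textbf{Step 2: angle bound.} For a unit vector $x$ and a perturbation $z$ with $\|z\|<1$,
\[
\sin\angle(x+z,\,x) = \frac{\|P_{x^\perp}z\|}{\|x+z\|} \le \frac{\|z\|}{1-\|z\|} .
\]
This already gives $\sin\angle(u_1,x_1) \le 2\|E_X\|_2/(1-2\|E_X\|_2)$. To match the stated symmetric quantity, I will bound $\|z\|$ by a common $\tau$. Because $\tau \ge 2\|E_X\|_2$ and $\tau \ge 2\|E_Y\|_2$, the estimate $\|z\|\le\tau$ holds for both the $x$ and the $y$ side. Then monotonicity of $t\mapsto t/(1-t)$ on $[0,1)$ yields $\sin\angle < \tau/(1-\tau)$ whenever $\tau<1$.

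The cross term $4\|E_X\|_2\|E_Y\|_2$ in $\tau$ suggests the authors instead bound the rank-1 matrix directly. One would write $\Omega^\star \propto (x_1+z_X)(y_1+z_Y)^\top = x_1y_1^\top + \text{error}$, bound $\|\text{error}\|_2 \le \|z_X\|+\|z_Y\|+\|z_X\|\|z_Y\| \le \tau$ using $\|z\|\le 2\|E\|$, and then apply Wedin's $\sin\theta$ theorem with gap $1-\tau$. That route also yields $\tau/(1-\tau)$, and I would present it as the primary argument since it reproduces the constants exactly. The direct computation above serves as a sanity check.

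\textbf{Step 3: simplified bound.} If $\tau<1/2$, then $1/(1-\tau)<2$, so $\sin\angle < 2\tau$, which is exactly \eqref{eq:bounded-alignment-final}.

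\textbf{Main obstacle.} The main care point is Wedin's theorem. I need the gap: the perturbed matrix must have its second singular value at most $\|\text{error}\|_2\le\tau$ while the unperturbed matrix has top singular value $1$. I also need strict versus non-strict inequalities and the sign ambiguity of the singular vectors handled correctly. I will handle the sign issue by working with $\sin$ of the angle between lines.
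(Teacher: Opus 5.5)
Your proposal is correct. The Wedin argument you plan to present as primary is exactly the paper's proof. The paper writes $\Omega^\star \propto x_1y_1^\top + \Delta$ with $\Delta = (A-I)x_1y_1^\top(C-I) + (A-I)x_1y_1^\top + x_1y_1^\top(C-I)$, where $A=(I+E_X)^{-1}$ and $C=(I+E_Y)^{-1}$; this is your $z_Xz_Y^\top + z_Xy_1^\top + x_1z_Y^\top$. It then bounds $\|A-I\|_2\le 2\|E_X\|_2$ by the Neumann series to get $\|\Delta\|_2\le\tau$, and applies Wedin with gap $\delta\ge 1-\|\Delta\|_2$ from Weyl's inequality.

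Your Steps~1--2 are a genuinely different, more elementary route, and they are actually the stronger one. Since $\Omega^\star$ is exactly rank one, no perturbation theorem is needed. You can even avoid the $1/(1-\|z\|)$ loss, because $\sin\angle(x_1+z,x_1) = \|P_{(x_1+z)^\perp}z\| \le \|z\|$. This gives $\sin\angle(u_1,x_1)\le \|E_X\|_2/(1-\|E_X\|_2)$, a bound that depends only on $E_X$ and needs no hypothesis on $\tau$. The coupled quantity $\tau$, cross term included, is an artifact of perturbing the whole matrix $x_1y_1^\top$. What the Wedin route buys is a template that would still work if $\Omega^\star$ were only approximately rank one (say, an inexact minimizer), which this theorem does not need. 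So there is no reason to relegate the direct argument to a sanity check.

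On the strict-versus-non-strict issue you flagged: the strict inequalities in the statement fail when $E_X=E_Y=0$, since then $\tau=0$ and both sides vanish. Both Wedin's theorem and your angle estimate deliver only $\le$; the paper's proof passes to $<$ without comment. What actually holds is $\le$ in general and $<$ whenever $\tau>0$. On a side with nonzero error, $\|z\|\le \|E\|_2/(1-\|E\|_2) < 2\|E\|_2 \le \tau$ because $\|E\|_2<1/2$. On a side with zero error, the angle is $0$ while the right-hand side is positive.
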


We note that the operator norm bound on $E_X$ also establishes a bound on feature interference, because $XX^\top$ and $X^\top X$ have the same spectra.

\begin{proof}
From Theorem~\ref{thm:strong-version-no-effective-subspace-caveat} the minimizer is
\[
\Omega^\star
= \alpha \Sigma_X^{-1} x_1 y_1^\top \Sigma_Y^{-1}.
\]
From \eqref{eq:bounded-alignment}, we have
\[
\Sigma_X^{-1} = \frac{D}{N} (I + E_X)^{-1}, \qquad
\Sigma_Y^{-1} = \frac{D}{N} (I + E_Y)^{-1}.
\]
Therefore
\begin{align}
\Omega^\star
&= \alpha \Big(\frac{D}{N}\Big)^2 (I + E_X)^{-1}
x_1 y_1^\top
(I + E_Y)^{-1}.
\label{eq:Omega-approx}
\end{align}
We will rewrite $\Omega^\star$ as
\begin{equation}
\label{eq:Omega-decomp}
\Omega^\star = \lambda (x_1 y_1^\top + \Delta),
\qquad
\lambda := \alpha \Big(\frac{D}{N}\Big)^2,
\end{equation}
where $\Delta$ collects all terms that prevent $\Omega^\star$ from being exactly rank~$1$ in the direction $x_1 y_1^\top$.  Then
\begin{equation}
    \Delta = (I + E_X)^{-1} x_1 y_1^\top (I + E_Y)^{-1} - x_1 y_1^\top
    \label{eq:D-iso}
\end{equation}
Defining $A = (I + E_X)^{-1}$ and $C = (I + E_Y)^{-1},$ \eqref{eq:D-iso} can be rewritten as 
\begin{equation}
    \Delta = (A - I)x_1 y_1^\top (C-I) + (A-I)x_1 y_1^\top + x_1 y_1^\top(C-I)
    \label{eq:D-iso2}
\end{equation}

To bound $\Delta$ in operator norm, an important first step is to bound $\|A-I\|_2$ and  $\|C-I\|_2$.  Since by assumption $\|E_X\|_2 < 1$ and $\|E_Y\|_2 < 1$, the inverses can be expanded in a Neumann series
\begin{equation*}
\label{eq:neumann}
(I + E_X)^{-1} = I - E_X + E_X^2 - \dots,
\qquad
(I + E_Y)^{-1} = I - E_Y + E_Y^2 - \dots,
\end{equation*}
which converges absolutely in operator norm. 
So we can bound $\|A-I\|_2$:
\begin{equation*}
\|A-I\|_2 = \|\sum_{k=1}^\infty (-E_X)^k \|_2 \leq \sum_{k=1}^\infty \|E_X\|_2^k = \frac{\|E_X\|_2}{1-\|E_X\|_2}
\end{equation*}
and similarly for $\|C-I\|_2$. 
Since we assume $\|E_X\|_2, \|E_Y\|_2 \leq 1/2$:
\begin{equation}
\|A-I\|_2 \le \frac{\|E_X\|_2}{1-\|E_X\|_2} \le 2\|E_X\|_2.
\label{eq:IEU-operator-norm}
\end{equation}
 
Next, to bound the influence of $\Delta$, we use submultiplicativity of the operator norm applied to (\ref{eq:D-iso2}):
\[
\|\Delta\|_2  \leq \|A-I\|_2 \|x_1y_1^\top\|_2 \|C-I\|_2 + \|A-I\|_2\|x_1y_1^\top\|_2 + \|x_1y_1^\top\|_2 \|C-I\|_2
\]
and noting (\ref{eq:IEU-operator-norm}) as well as that $\|x_1y_1^\top\|_2 = 1$:
\begin{equation}
\|\Delta\|_2  \leq 4\|E_X\|_2\|E_Y\|_2 +  2 \|E_X\|_2 + 2\|E_Y\|_2.
\label{eq:D-iso-norm}
\end{equation}

To bound the deviation of the singular vectors of $\Omega^\star$ from $x_1, y_1$, we use a standard singular-vector perturbation theorem. Recalling \eqref{eq:Omega-decomp}:
\[
\Omega^\star = \lambda (x_1 y_1^\top + \Delta),
\]
let $u_1, v_1$ be the top left and right singular vectors of $\Omega^\star$. 
Applied to our setting, Wedin's theorem  \cite{Wedin-bounds} states that 
\begin{equation}
    \sin \angle(u_1, x_1) \leq \frac{\|\Delta\|_2}{\delta}
\end{equation}
where $\delta = \sigma_1(x_1 y_1^\top) - \sigma_2(x_1y_1^\top + \Delta)$.  Weyl's inequality \citet[Theorem~4.3.1]{HornJohnson2012} shows that $|\sigma_2(x_1y_1^\top) - \sigma_2(x_1y_1^\top + \Delta)| \leq \|\Delta\|_2$, so $\delta \geq 1 - \|\Delta\|_2$. So as long as $\|\Delta\|_2<1,$
\begin{equation*}
   \sin \angle(u_1, x_1), \ \sin \angle(v_1, y_1)
\;<\; \frac{\|\Delta\|_2}{1-\|\Delta\|_2}
\end{equation*}
and in particular if $\|\Delta\|_2 \leq 1/2$, then
\begin{equation*}
       \sin \angle(u_1, x_1), \ \sin \angle(v_1, y_1)
\;<\; 8\|E_X\|_2\|E_Y\|_2 +  4 \|E_X\|_2 + 4\|E_Y\|_2.
\end{equation*}
\end{proof}

\subsection{Feature Orthogonalization}
\label{app:feature-orthogonalization}

Next we show that when features vary and $\Omega$ is fixed, interference between features will result in orthogonalization of features.

\subsubsection{Setting}

For clarity we analyze the setting with two $X$ features $(x_1, x_2)$ and two $Y$ features $(y_1, y_2)$.  The query token $r$ can be either $x_1$ or $x_2,$ and the key token can be either $y_1$ or $y_2$.  We assume that SVF alignment has taken place for features $(x_1, y_1),$ meaning that the top singular vectors of $\Omega$ are $(u_1, v_1) = (x_1, y_1).$  We will show that under these conditions, if $(x_2, y_2)$ are allowed to vary, they will become orthogonal to $(u_1, v_1)$.

Since there are only two possible key tokens, softmax reduces to a sigmoid on a logit gap $\delta := \ell_1 - \ell_2$. 
In the two-token case, $p_\Omega(1) = \sigma(\delta)$ with $\sigma(t) = \frac{1}{1+e^{-t}}.$   The teacher sets a probability $p^\star$ and training uses two-key cross-entropy:
\begin{equation}
  \mathrm{CE}(p^\star, \sigma(\delta))
= -p^\star \log \sigma(\delta) - (1-p^\star)\log(1-\sigma(\delta)).
\label{eq:two-key-CE}
\end{equation}

We assume the teacher matrix is rank-2: 
\begin{equation}
\label{eq:teacher-matrix}
\Omega_T = \sigma_1 u_1 v_1^\top + \sigma_2 u_2 v_2^\top,
\qquad
\sigma_1 > \sigma_2 > 0,
\end{equation}
with orthonormal \({u_1,u_2}\) and \({v_1,v_2}\).

Note that $(x_1, y_1) = (u_1, v_1)$. Let \(x_2, y_2\) be trainable unit vectors.
We consider two contexts, each with two key tokens \(s_1 = y_1\) and \(s_2 =y_2\).  In each context, we define the \emph{teacher gap} $\delta^\star$ which is the difference of logits between $r^\top\Omega y_1$ and $r^\top\Omega y_2.$  This is the parameterization of the attention head.

We focus on establishing the orthogonality of $y_2$ and $y_1$.

To capture the need for accurate reconstruction of inputs, we penalize interference between features.  
Thus our overall training objective is 
\begin{equation}
\mathcal{J}(x_2,y_2)
= \mathrm{CE}(\delta^\star_A, \delta_A(x_2, y_2)) + \mathrm{CE}(\delta^\star_B(x_2, y_2)) + \frac{\lambda}{2}(y_2^\top y_1)^2
\label{eq:J}
\end{equation}
where the student gaps are
\[
\delta_A(x_2, y_2) = x_1^\top\Omega_T(y_1 - y_2), \qquad \delta_B(x_2, y_2) = x_2^\top \Omega_T (y_1 - y_2)
\]

\subsubsection{Results}

\begin{thm}
\emph{\textbf{CE minimization with reconstruction loss forces orthogonalization.}}
Let $(x_2^\lambda, y_2^\lambda)$ be any global minimizer of \eqref{eq:J}, with the constraints $\|x_2\| = \|y_2\| = 1.$  Then
\[
y_2^{\lambda\top}y_1 \leq \sqrt{\frac{2}{\lambda}\left(\mathcal{J}_\lambda(x'_2, y'_2) - \inf_{x_2,y_2}[\mathrm{CE}(\delta^\star_A, \delta_A(x_2, y_2)) + \mathrm{CE}(\delta^\star_B, \delta_B(x_2, y_2))]\right)}
\]
for any comparison pair $(x'_2, y'_2)$ with  $\|x'_2\| = \|y'_2\| = 1.$   In particular, if there exists a feasible $(x'_2, y'_2)$ achieving minimal CE while also having $y'^\top_2 y_1 = 0$, then the optimizer must satisfy $y_2^{\lambda\top} y_1 = 0$

\label{thm:orthogonalization}
\end{thm}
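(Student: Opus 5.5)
The argument is a direct comparison of objective values, treating the feasible set as the product of unit spheres (compact, so a global minimizer exists). Write the objective as
\[
\mathcal{J}_\lambda(x_2,y_2) = C(x_2,y_2) + \frac{\lambda}{2}(y_2^\top y_1)^2,
\qquad C := \mathrm{CE}(\delta^\star_A,\delta_A) + \mathrm{CE}(\delta^\star_B,\delta_B),
\]
and let $C^\star := \inf_{x_2,y_2} C(x_2,y_2)$ over the same feasible set. Since cross-entropy is bounded below (by the entropy of the teacher distribution, via Lemma~\ref{lem:cross-entropy-decomposition}, applied to the two-key case), $C^\star$ is finite.

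The key chain of inequalities will be, for a global minimizer $(x_2^\lambda,y_2^\lambda)$ and any feasible comparison pair $(x_2',y_2')$,
\[
C^\star + \frac{\lambda}{2}\big(y_2^{\lambda\top}y_1\big)^2 \;\le\; C(x_2^\lambda,y_2^\lambda) + \frac{\lambda}{2}\big(y_2^{\lambda\top}y_1\big)^2 \;=\; \mathcal{J}_\lambda(x_2^\lambda,y_2^\lambda) \;\le\; \mathcal{J}_\lambda(x_2',y_2').
\]
The first inequality is the definition of the infimum and the last is global optimality. Rearranging gives $(y_2^{\lambda\top}y_1)^2 \le \frac{2}{\lambda}\big(\mathcal{J}_\lambda(x_2',y_2') - C^\star\big)$. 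Taking square roots bounds $|y_2^{\lambda\top}y_1|$, which is at least as strong as the stated one-sided bound. The right-hand side is nonnegative because $\mathcal{J}_\lambda \ge C \ge C^\star$.

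For the ``in particular'' clause, take $(x_2',y_2')$ with $C(x_2',y_2') = C^\star$ and $y_2'^\top y_1 = 0$. Then $\mathcal{J}_\lambda(x_2',y_2') = C^\star$, so the bound is zero and $y_2^{\lambda\top}y_1 = 0$.

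I do not expect a real obstacle, since the statement is essentially a penalty-comparison argument. The only care points are bookkeeping:
\begin{itemize}
\item making sure $C^\star$ is the infimum over the same constrained set as the minimizer;
\item noting the slight notational mismatch in \eqref{eq:J}, where $\mathrm{CE}(\delta^\star_B(x_2,y_2))$ should read $\mathrm{CE}(\delta^\star_B,\delta_B(x_2,y_2))$.
\end{itemize}
A substantive question the statement sidesteps is whether a CE-optimal orthogonal comparison pair actually exists. For example, one would take $y_2' \perp y_1$ in $\mathrm{span}(v_1,v_2)$ and $x_2'$ chosen to match the teacher gaps. That existence is assumed in the hypothesis, so I would only remark on it rather than prove it.
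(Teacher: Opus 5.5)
Your proposal is correct and follows the paper's proof. Both split the objective into the CE part plus the penalty, compare the global minimizer with a comparison pair, and bound the CE value at the minimizer below by its infimum. The paper takes the comparison pair with $y'^\top_2 y_1 = 0$, so that $\mathcal{J}_\lambda(x_2',y_2')$ reduces to its CE value; you allow any feasible pair, which matches the statement as written. Your squared bound on $|y_2^{\lambda\top}y_1|$ is what actually gives the equality in the ``in particular'' clause.
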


\begin{proof}
Fix any $\lambda > 0$.  For any $(x_2, y_2)$ we can decompose the objective as 
\[
\mathcal{J}(x_2,y_2)
= \underbrace{\mathrm{CE}(\delta^\star_A, \delta_A(x_2, y_2)) + \mathrm{CE}(\delta^\star_B, \delta_B(x_2, y_2))}_{=:\, \mathcal{L}_\text{CE}(x_2, y_2)} + \frac{\lambda}{2}(y_2^\top y_1)^2
\]
Let $(x_2^\lambda, y_2^\lambda)$ be a global minimizer.

Now choose any comparison pair $(x'_2, y'_2)$ with $y'^\top_2 y_1 = 0$. Optimality gives
\[
\mathcal{L}_\text{CE}(x_2^\lambda, y_2^\lambda) + \frac{\lambda}{2}(y_2^{\lambda\top}y_1)^2 \leq \mathcal{L}_\text{CE}(x'_2, y'_2) + \frac{\lambda}{2}\underbrace{(y'^\top_2 y_1)^2}_{= 0}.
\]
Rearranging yields the finite-$\lambda$ bound
\[
\frac{\lambda}{2}(y_2^{\lambda\top}y_1)^2 \leq \mathcal{L}_\text{CE}(x'_2, y'_2) - \mathcal{L}_\text{CE}(x^\lambda_2, y^\lambda_2)
\leq \mathcal{L}_\text{CE}(x'_2, y'_2) - \inf_{x_2, y_2} \mathcal{L}_\text{CE}(x_2, y_2)
\]
which establishes the result. 
\end{proof}


\section{Isotropy}

\subsection{Isotropy in Toy Model}
\label{app:isotropy}

\begin{figure}
    \centerline{
    \hfill
    \includegraphics[width=0.35\linewidth]{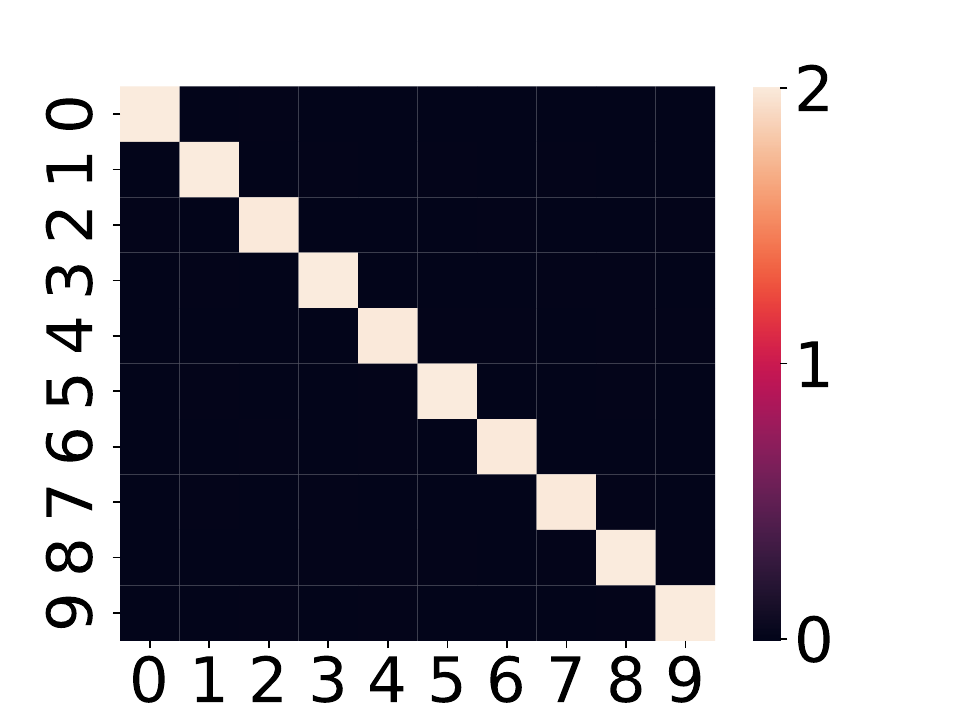}
    \hfill\hfill
    \includegraphics[width=0.35\linewidth]{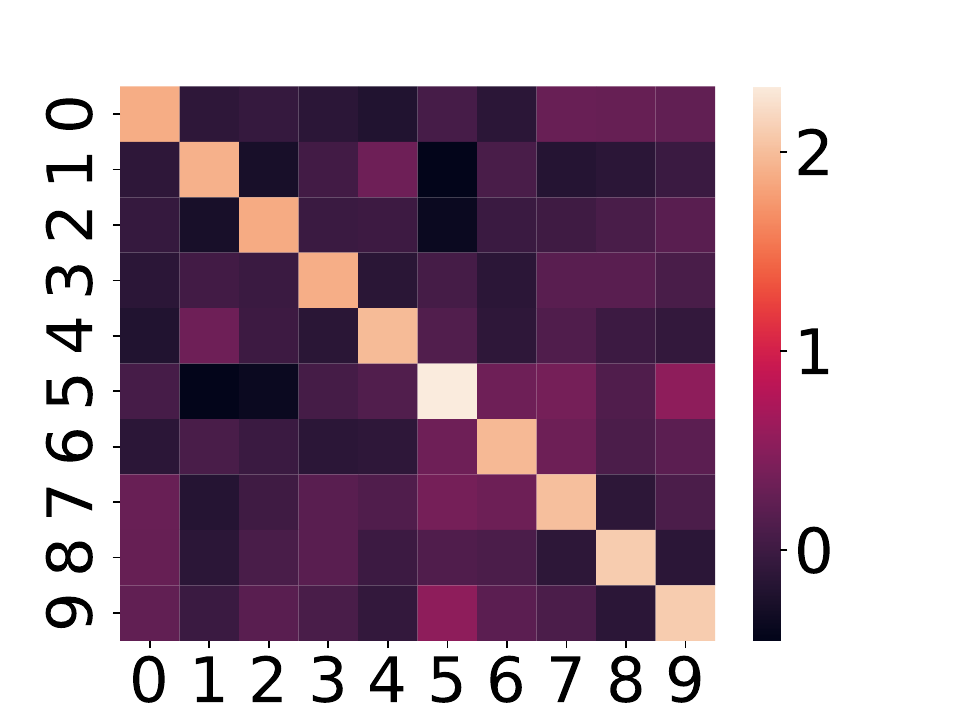}
    \hfill
    }
    \centerline{\hfill(a)\hfill\hfill(b)\hfill}
    \caption{$WW^\top$ matrices showing isotropic arrangement of features (a) without attention, and (b) when two features are of interest to the head.}
    \label{fig:isotropic-heatmaps}
\end{figure}

Here we show evidence of the isotropy of features in the toy model.   We consider two cases, corresponding to Figures~\ref{fig:feature-inner-products}(a) and (b).  In Figure~\ref{fig:isotropic-heatmaps}(a) we show $WW^\top$ for the case where no attention head is present.  The figure shows that the features are arranged isotropically.   In Figure~\ref{fig:isotropic-heatmaps}(b) we show $WW^\top$ for the case where two features are of interest.  Here, features are still arranged approximately isotropically, suggesting that the requirements of Corollary~\ref{thm:exact-alignment-isotropy} and Theorem~\ref{thm:approximate-alignment} for subsequent SVF alignment can still be met over the remaining features.

\subsection{Anisotropy of SAEs in GPT-2}
\label{app:anisotropy-of-saes}
To assess realistic degrees of feature anisotropy in a real model, we examine a GPT-2 sparse autoencoder (SAE) dictionary.

For each of GPT-2 Small's 12 layers, we load the residual-stream SAEs from \cite{bloom2024gpt2saes}. These consist of $N=24{,}576$ dictionary elements in $D=768$ dimensions. We row-normalize the decoder $W_{\text{dec}}$, and compute the feature Gram
  matrix $\Sigma_X = W_{\text{dec}}^\top W_{\text{dec}}$ together with the deviation-from-isotropy operator $E_X = (D/N)\Sigma_X - I$, which is the same operator used to characterize anisotropy in Theorem~\ref{thm:approximate-alignment}. 
  
The results in Figure~\ref{fig:GPT-SAE-anisotropies} show that GPT-2 features are anisotropic, consistent with prior work \cite{ethayarajh2019contextual,gao2019representation,e27040344} with values of $\|\Sigma_X\|_2$ ranging from 10 to 55.

\begin{figure}
    \centerline{
    \hfill
    \includegraphics[width=0.5\linewidth]{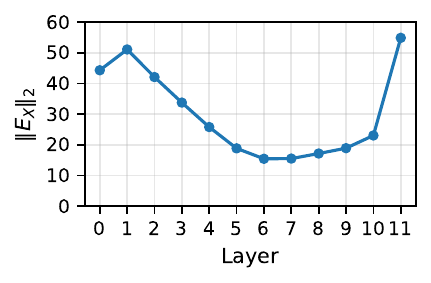}
    \hfill
    }
    \caption{Measured values of $\|E_X\|_2$ for SAEs trained on each layer of GPT-2 range between 10 and 55.}
    \label{fig:GPT-SAE-anisotropies}
\end{figure}

\subsection{Alignment under Anisotropy}
\label{app:alignment-under-anisotropy}
\textbf{Setup.} We test Theorems \ref{thm:strong-version-no-effective-subspace-caveat} and \ref{thm:approximate-alignment} in a standard configuration of our toy model ($N = 20$ features, $D = 10$ hidden dimensions and head dimension $H = 10$). In this experiment, we freeze features and test whether singular vectors align with them. The feature dictionary $W \in \mathbb{R}^{N \times D}$ is constructed so that in the absence of anisotropy, singular vectors can align exactly.  Specifically, $W^{\!\top} W = (N/D)\,I$ exactly: eight features of interest occupy their own orthogonal subspace at norm $\sqrt{2}$, and twelve auxiliary features form six antipodal pairs in the remaining two-dimensional subspace at norm $\sqrt{1/3}$. A random unit vector $u \in \mathbb{R}^D$ is drawn and the dictionary is distorted by
$$W_c \;=\; W\bigl(I + (\sqrt{1+c}-1)\,uu^{\!\top}\bigr),$$
yielding $\Sigma_X = W_c^{\!\top} W_c = (N/D)(I + c\,uu^{\!\top})$, so $\|E_X\|_2 = c$ by construction. We sweep $c$ over values in $[0, 50]$ reflecting the range of values of $\|E_X\|_2$ seen in GPT-2 SAEs (previous subsection).

\textbf{Training.} For each $c$, the dictionary is frozen and the head matrices $W_Q, W_K \in \mathbb{R}^{H \times D}$ are trained for $20{,}000$ steps with AdamW at a constant learning rate of $10^{-3}$ on a pure softmax-matching loss (no reconstruction term). The loss assigns four feature pairs $(\ell, r) \in \{(0,4), (1,5), (2,6), (3,7)\}$ using standard target attention logits $\{24, 21, 18, 15\}$. We use the standard attention loss for batches of one query and $K = 4$ candidate keys (i.e., cross-entropy between the softmax of these target logits and the softmax of the scaled-dot-product attention scores). The procedure is repeated for five random $u$.

\textbf{Measurement.} After training we form $\Omega = W_Q^{\!\top} W_K = U \Sigma V^{\!\top}$ and, for each pair, measure the maximum cosine alignment of $w_\ell$ against the columns of $U$ and of $w_r$ against the columns of $V$, averaged across the four pairs. Figure~\ref{fig:alignment-anisotropy} reports mean $\pm$ one standard deviation across runs. The figure plots three quantities against $c$: the raw alignment $\max_j |\langle \hat{w}, u_j \rangle|$, the anti-whitened alignment $\max_j |\langle \hat{w}, \Sigma_X u_j / \|\Sigma_X u_j\| \rangle|$, and the Theorem 3 lower bound $\sqrt{1 - s^2}$ where $s = \tau/(1-\tau)$ and $\tau = 4c^2 + 4c$ (shown only where $\tau < 1$).


\section{SVF Alignment Can Occur Under RoPE}
\label{app:rope}

The analysis in Theorems \ref{thm:strong-version-no-effective-subspace-caveat} -- \ref{thm:orthogonalization} assumes a model in which a fixed QK matrix $\Omega$ is used to compute attention on tokens $r, s$ regardless of the positions of those tokens in the context window.   However RoPE \cite{su2024roformer} has become the de facto positional encoding in modern LLMs \cite{dubey2024llama,jiang2023mistral,team2024gemma,bai2023qwen}. Here we present a preliminary study showing how SVF alignment occurs in a model that uses RoPE.

RoPE extends the QK matrix by including a position-dependent rotation of each token.  For query and key tokens $r, s \in \mathbb{R}^{D}$ at positions $p_r, p_s \in \{1,\dots,m\}$ the
attention logit under RoPE is
\begin{equation}
\ell(r, s; p_r, p_s)
=
\bigl\langle
  \Rop_{p_r} W_Q r,\;
  \Rop_{p_s} W_K s
\bigr\rangle,
\label{eq:logit}
\end{equation}
where $\Rop_{p}$ is RoPE's block-diagonal rotation (acting on the $H$-dimensional
head vector) by angles $\omega_j p$ on the $j$-th coordinate pair, with
frequencies
\[
\omega_{j} \;=\; \mathrm{base}^{-2j/H}, \qquad j=0,1,\dots,H/2-1.
\]
For a token pair at positions $p_r, p_s$ we have $\Omega^{(p_r-p_s)} = W_Q^\top \Rop_{p_r}^\top \Rop_{p_s} W_K.$  Thus the $\Omega$ used in Theorems \ref{thm:strong-version-no-effective-subspace-caveat} -- \ref{thm:orthogonalization} corresponds, under RoPE, to $\Omega^{(0)}.$

Note that we can think of the vector $q = W_Qr \in \mathbb{R}^{H}$ as a concatenation of
$H/2$ two-dimensional pieces, $q = (q^{(0)},\,q^{(1)},\,\dots,\,q^{(H/2-1)})$
with each $q^{(j)}\in\mathbb{R}^{2}$.  $\Rop_p$ acts on each piece
independently as the 2D rotation $R(\omega_j p)$, i.e.\
$(\Rop_p q)^{(j)} = \Rop(\omega_j p)\, q^{(j)}$.  So the logit \eqref{eq:logit} decomposes band-by-band:
\begin{equation}
\ell(r,s;p_r,p_s)
\;=\;
\sum_{j=0}^{H/2-1}
\bigl\langle q^{(j)},\, R\!\left(\omega_j (p_r - p_s)\right) k^{(j)}\bigr\rangle,
\label{eq:band_decomp}
\end{equation}
with $q = W_Q W f^{(r)}$ and $k = W_K W f^{(s)}$.

\paragraph{Experimental Setup.}

We use $m = 16, \mathrm{base} = 100$, giving five bands:
\[
(\omega_0,\dots,\omega_4) \;\approx\; (1.000,\;0.398,\;0.158,\;0.063,\;0.025).
\]
The corresponding periods $2\pi/\omega_j$ are $(6.28, 15.8, 39.7, 100, 251)$;
relative to $m = 16$, band~1's period essentially equals the context length.

In our experiment we study both position-independent and position-dependent features.

The target logit for a query/key pair at $(p_r, p_s)$ combines
terms for position-independent features and zero or more
\emph{positional pair} terms:
\begin{equation}
\ell^T(r, s; p_r, p_s)
\;=\;
\sum_{(i,j)\in\mathcal{F}} T_{ij}\, f_i^{(r)} f_j^{(s)}
\;+\;
\sum_{\kappa} A_\kappa\, \cos(\omega_\kappa (p_r - p_s))\, f^{(r)}_{r_\kappa}\, f^{(s)}_{s_\kappa}.
\label{eq:target}
\end{equation}
$\mathcal{F}$ is the set of position-independent feature indices -- for these, we use three pairs (0, 3), (1, 4), (2, 5) with logit strengths $T_{ij} = 24, 18, 15$.  
We use one positional pair, so $\kappa = \{1\}$.  The positional feature pair $(r_1,  s_1) = (6, 7)$; we match it to band 1, $\omega_1 = 0.398,$ with logit strength $A_1 = 21.$  

\paragraph{Results.} 

Figure~\ref{fig:rope-svf} shows alignment of singular vectors and features under RoPE.  Note that attention is computed using the logits defined by \eqref{eq:logit}, but the singular vectors used here are those of $\Omega = \Omega^{(0)}.$ Nonetheless we observe that, for features that are not position-dependent, singular vector alignment is strong (just as in the non-RoPE case).   Additionally, we note that the positional feature pair is \emph{also} aligned with singular vectors.

We can gain further insight into how the model jointly arranges features and weights by noting the following.  For a position-$p$ RoPE rotation $\Rop_p$, let $\Rop_p^{(j)}$ be its rows $2j, 2j+1$ for $j = 0,1,\dots,H/2-1.$  Then in computing logits (Equation \ref{eq:logit}) $\Rop_p^{(j)}$ acts only the head's encoding vectors $W_Q^{(j)}, W_K^{(j)}$ (rows $2j, 2j+1$ of the Q and K matrices).  Thus the magnitude of a feature's projection into the 2D subspaces $W_Q^{(j)}, W_K^{(j)}$ determines the extent to which the head is positionally-sensitive to the feature with frequency $\omega_j$.  For small $j$, the head is relatively sensitive to positional changes to the feature, and for large $j$, the head is relatively insensitive to positional changes to the feature.  

Figure~\ref{fig:rope-bands} shows the projection of features into each subspace of $W_Q^{(j)}$ and of $W_K^{(j)}$. The figure shows the model has aligned features $w_i$ with subspaces $W_Q^{(j)}$, $W_K^{(j)}$ according to positional sensitivity.   Position-insensitive features $w_0, \dots, w_5$ are allocated to subspaces with low positional sensitivity (low-frequencies $\omega_2, \dots, \omega_4$).  On the other hand, positionally-sensitive features $w_6$ and $w_7$ are allocated to frequency $\omega_1$.  

Intuitively, this happens because allocating features $w_6$ and $w_7$ to any band $j \neq 1$ would produce a term in \eqref{eq:target} oscillating at frequency $\omega_j$.  Since the training data samples $p_r, p_s$ uniformly from $1,\dots,m$, the mismatch between $\cos(\omega_{j}\Delta p)$ and $\cos(\omega_1\Delta p)$ prevents an exact match between $\ell$ and $\ell^T.$ Hence gradient descent will drive the feature to the band 1 subspace.
On the other hand, a position-independent feature has no $\Delta p$ dependence.  No RoPE band has $\omega_j = 0$ exactly, so such features cannot be represented exactly through a single band.  However for bands with small $\omega_j$ (high index $j$, slow rotation) the error over the context range $1, \dots, m$ is small.  Thus the model allocates position-independent features to high-$j$ bands.

\begin{figure}
    \centering
    \includegraphics[width=0.2\linewidth]{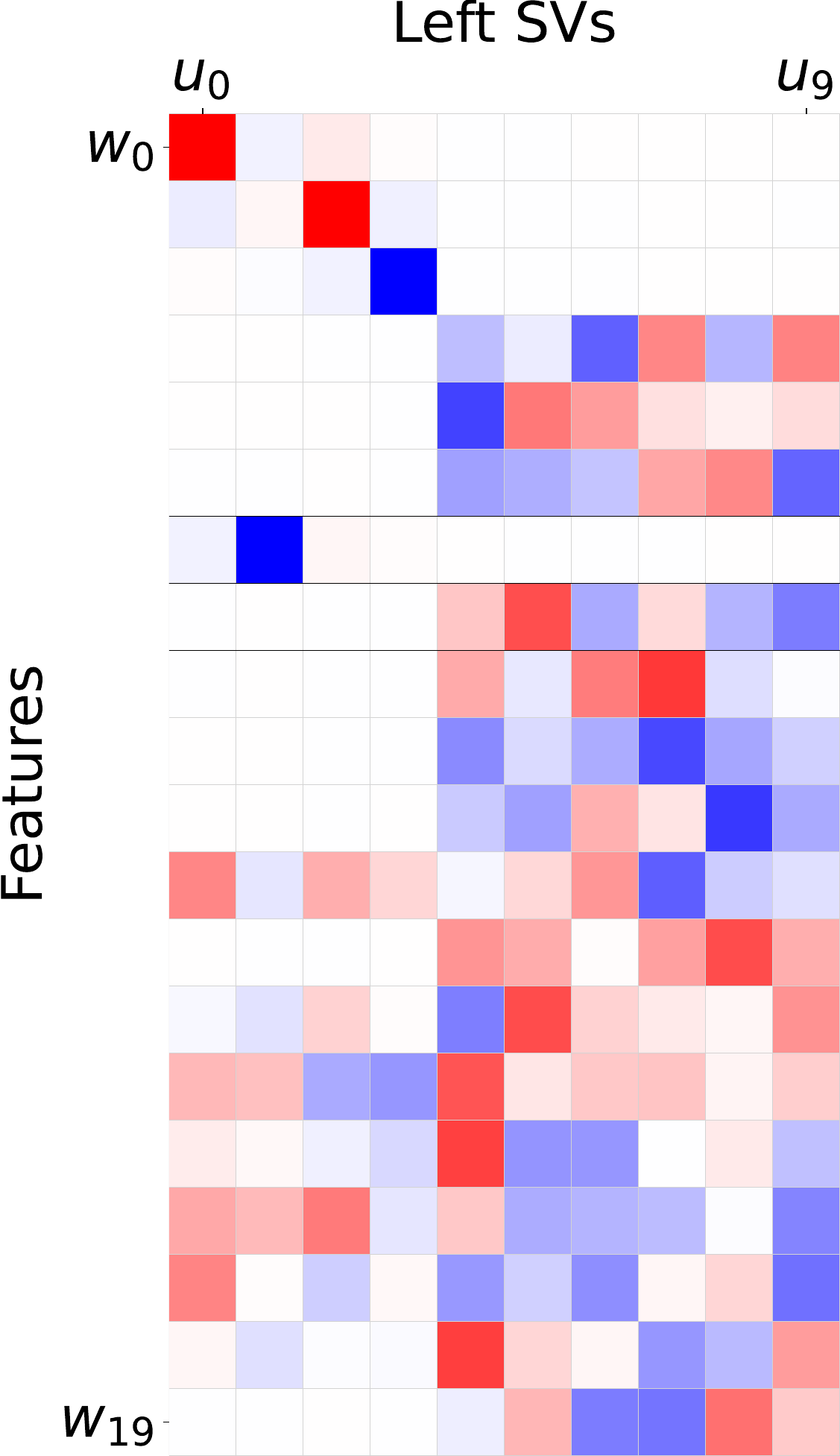}\hspace{0.25in}
    \includegraphics[width=0.2\linewidth]{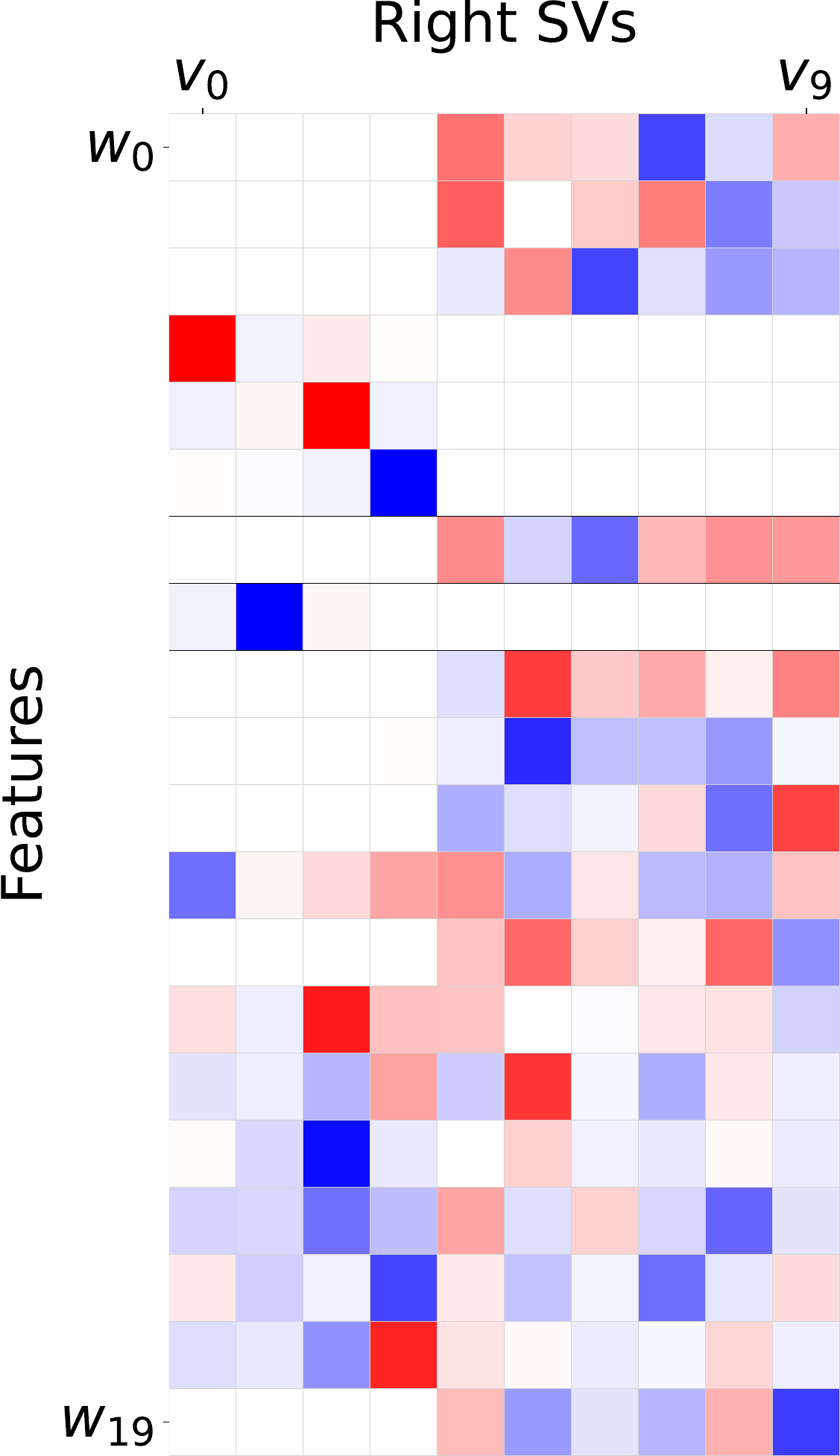}\hspace{0.25in}
    \includegraphics[width=0.17\linewidth]{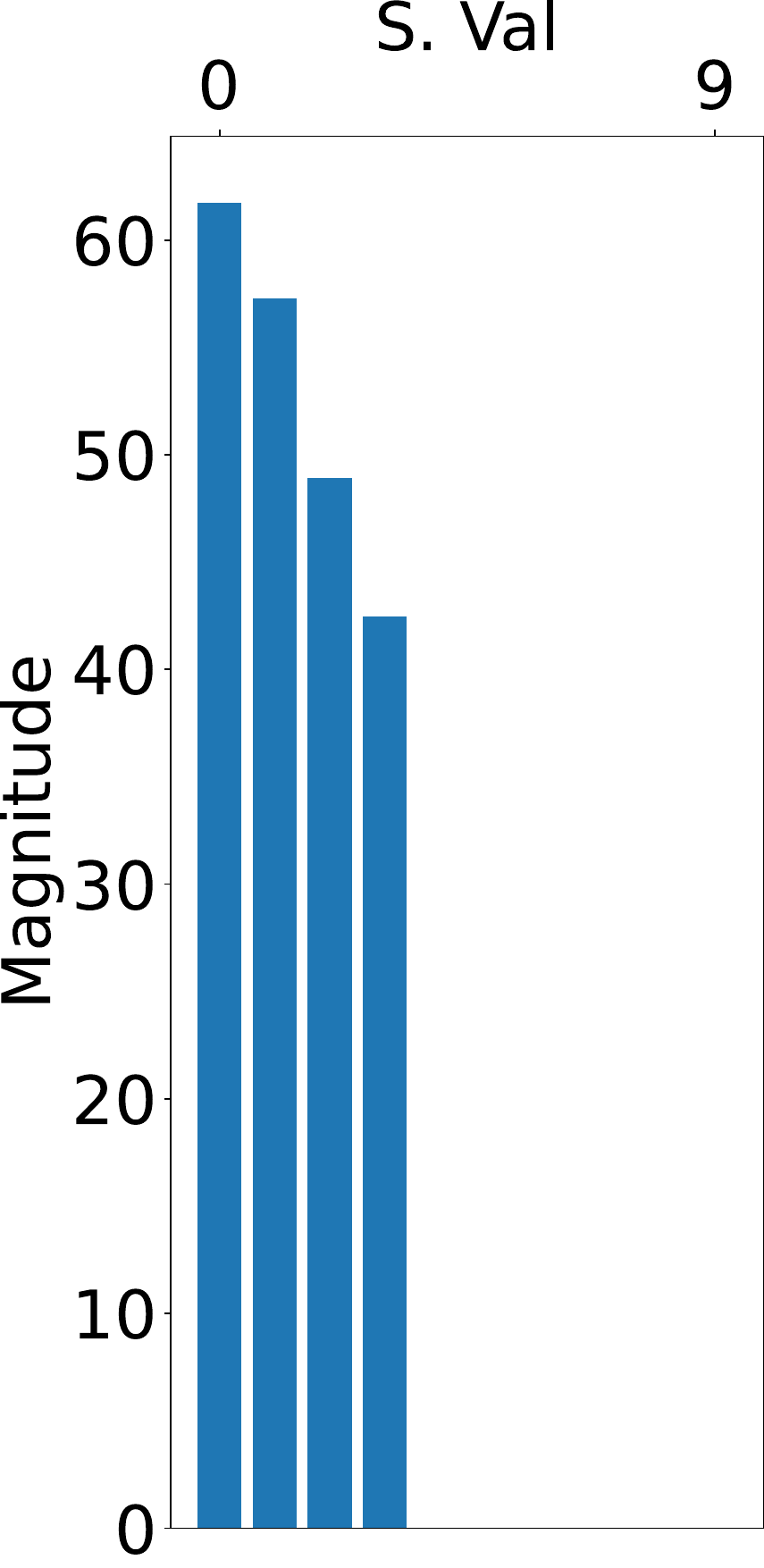}
    \caption{Singular vectors of $\Omega^{(0)}$ align with features under RoPE.  Cosine similarities of singular vectors and features, and magnitudes of singular values.  Feature pair (6, 7) (marked with black lines) has varying target logit depending on position of tokens; all other feature pairs have position-independent target logits.}
    \label{fig:rope-svf}
\end{figure}

\begin{figure}
    \centering
    \includegraphics[width=0.25\linewidth]{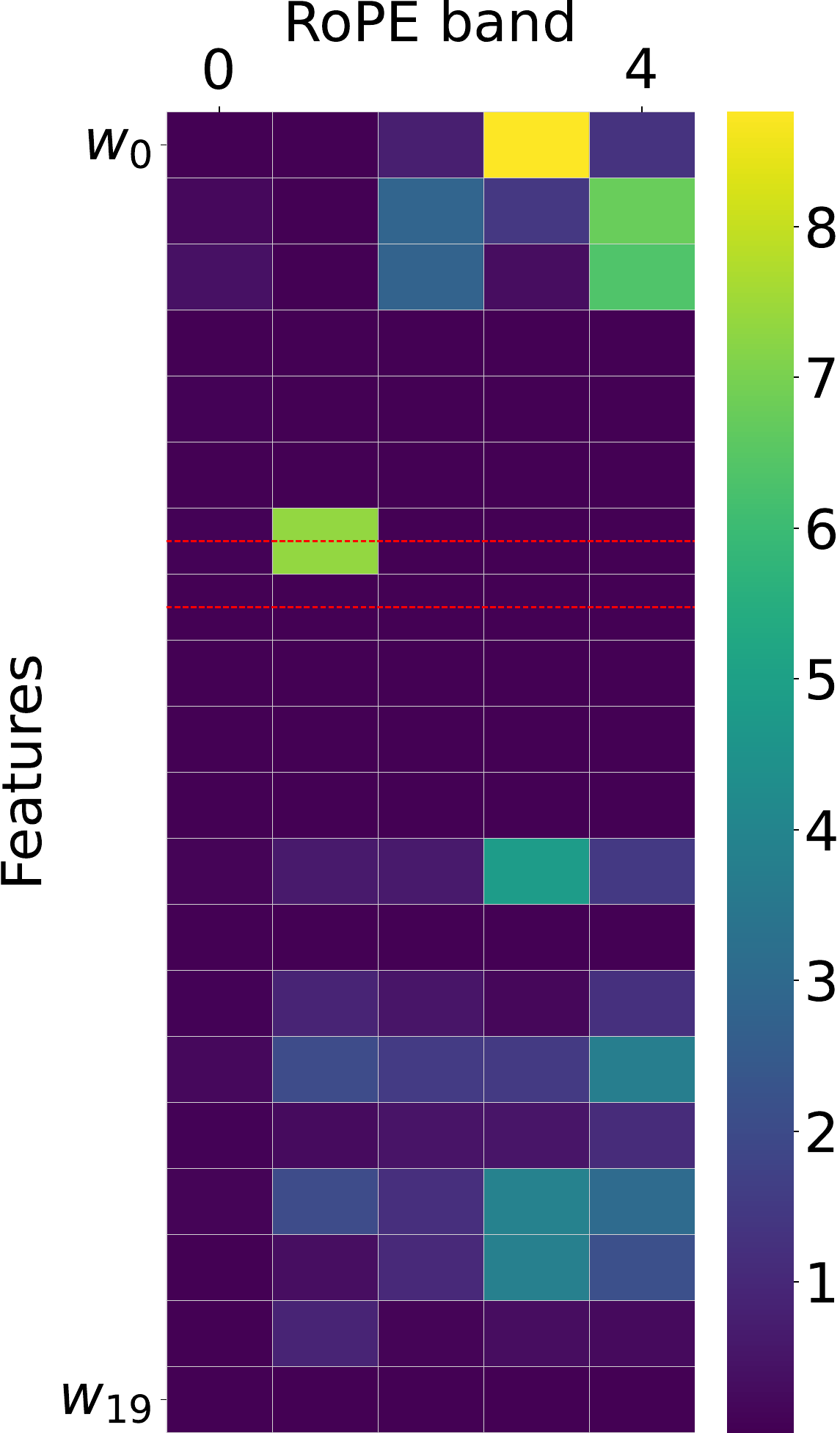}
    \hspace{0.25in}
    \includegraphics[width=0.25\linewidth]{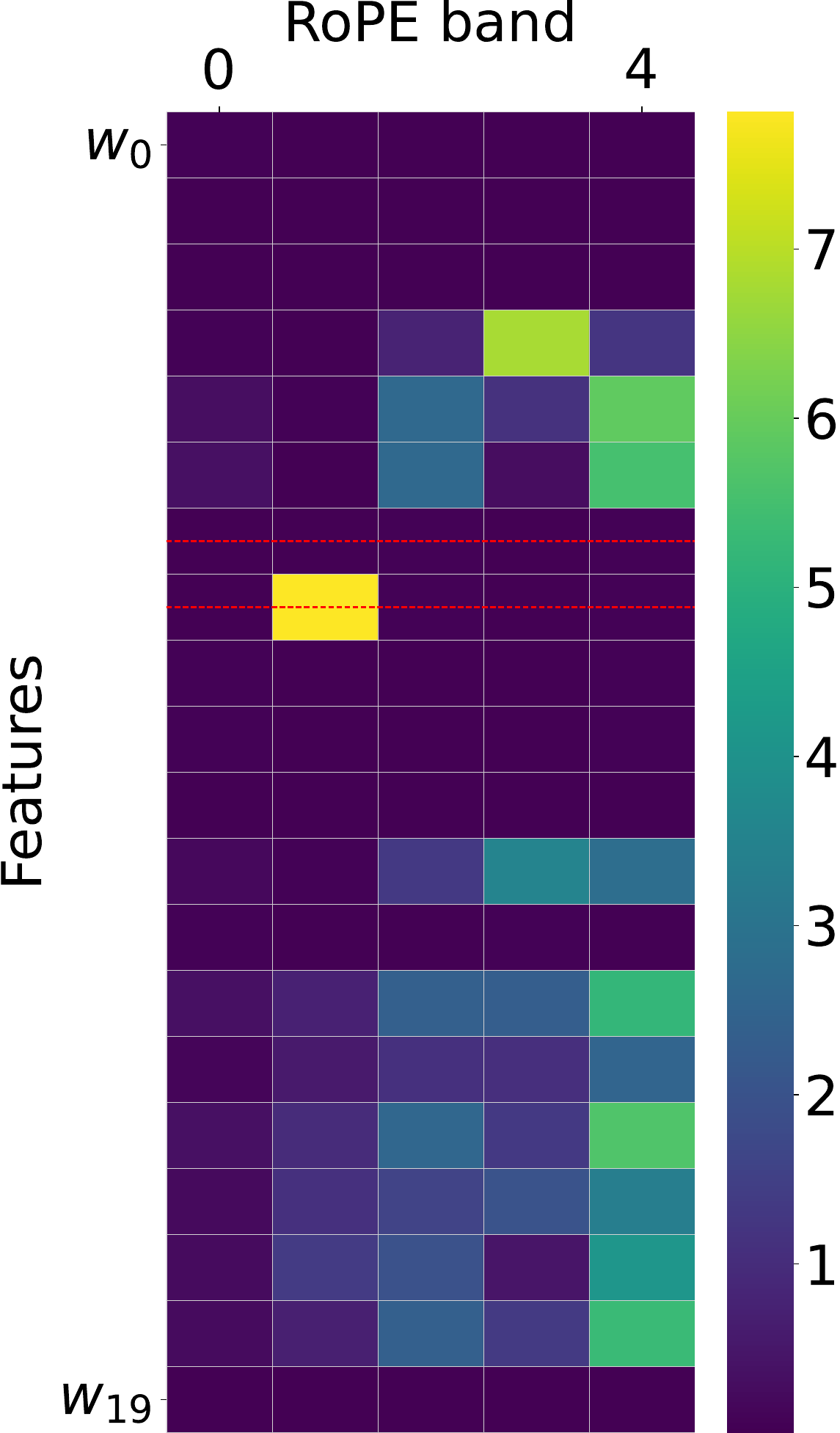}
    \caption{Features align with the head's encodings according to RoPE band. Magnitude of projection of each feature $w_i$ into the query-side and key-side 2D subspaces of band $j$ ($W_Q^{(j)}$ and $W_K^{(j)}$). Dashed red lines denote positional features 6 and 7.}
    \label{fig:rope-bands}
\end{figure}

\section{More Features of Interest than Head Capacity}
\label{app:more-features}

The experiments in Section~\ref{sec:alignment} show that the toy model assigns one feature pair to each singular vector direction.  However, when the number of feature pairs is greater than $H$, this is no longer possible because the rank of $\Omega$ is $H$.  Here we present a preliminary study of the nature of SVF alignment when there are more features of interest to the head than can be represented orthogonally in $\mathbb{R}^H.$

To study this setting we constrain the representational capacity of the head by setting $H = 5.$  We then study the case in which the number of features of interest ranges from $H$ (5) to $2H$ (10).  As usual, the target logits decline linearly with the pair index. Here, $\ell^T$ of pair $i$ is set to 1 + the number of features $-$ $i$.

In Figure~\ref{fig:more-features}(a), we show alignment of the five singular vectors of the head with the features for each case.  Surprisingly, the model assigns all of the features having the lowest target logits to the smallest singular vector (singular vector with the smallest singular value).  Figure~\ref{fig:more-features}(b) shows that this effect persists when model capacities are higher ($H$ = 10, features of interest range from 10 to 20). 

This behavior has a number of consequences.  It means that the model cannot distinguish the features that mapped to the same singular vector, and in fact can mistakenly associate features from different pairs.   On the other hand, the model still allocates high-logit features one-to-one with singular vectors. As a result, most singular vectors of the model still correspond to features. We note that sparse decomposition will still arise for a model in this setting, although less robustly.

Clearly, more study is needed to understand the behavior of SVF alignment when the number of features of interest to the head exceeds its capacity, but these results give an initial indication that SVF alignment still can have utility in this regime.

\begin{figure}
    \centering
    \includegraphics[width=\linewidth]{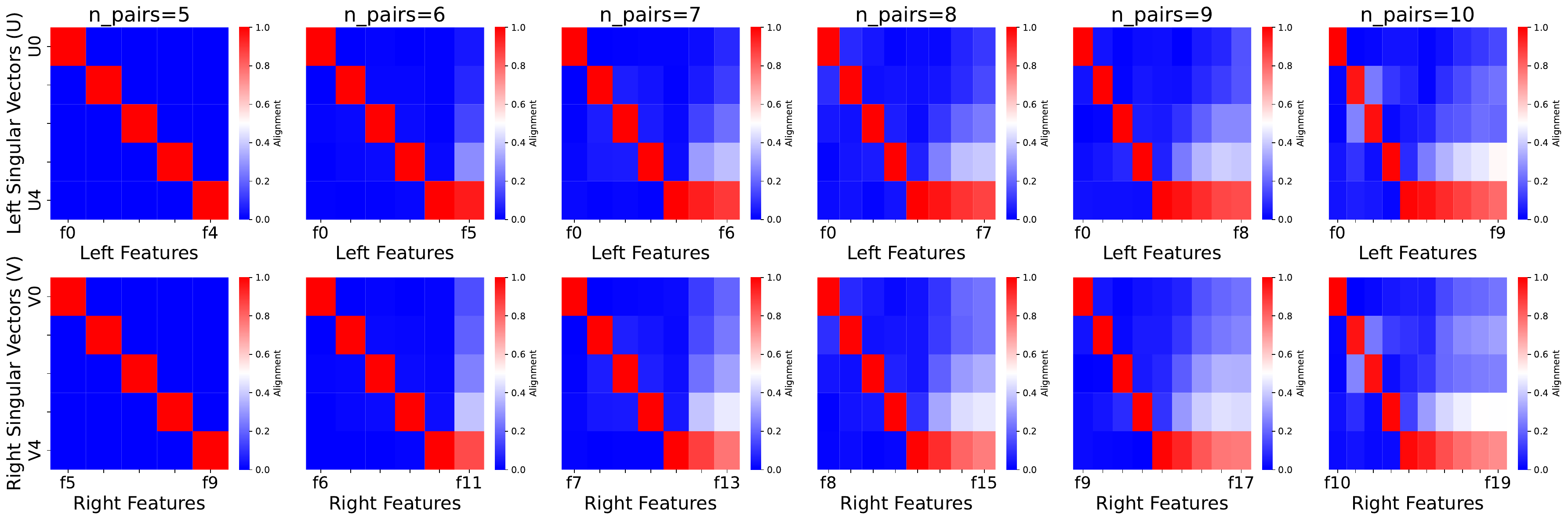}
    \begin{centering}
            (a)  $N = $ 20 features, hidden dimension $D = 10$, head dimension $H = 5$.\\ 
            Number of feature pairs of interest varies from 5 (head capacity) to 10 (model capacity).\\
    \end{centering}
    \includegraphics[width=\linewidth]{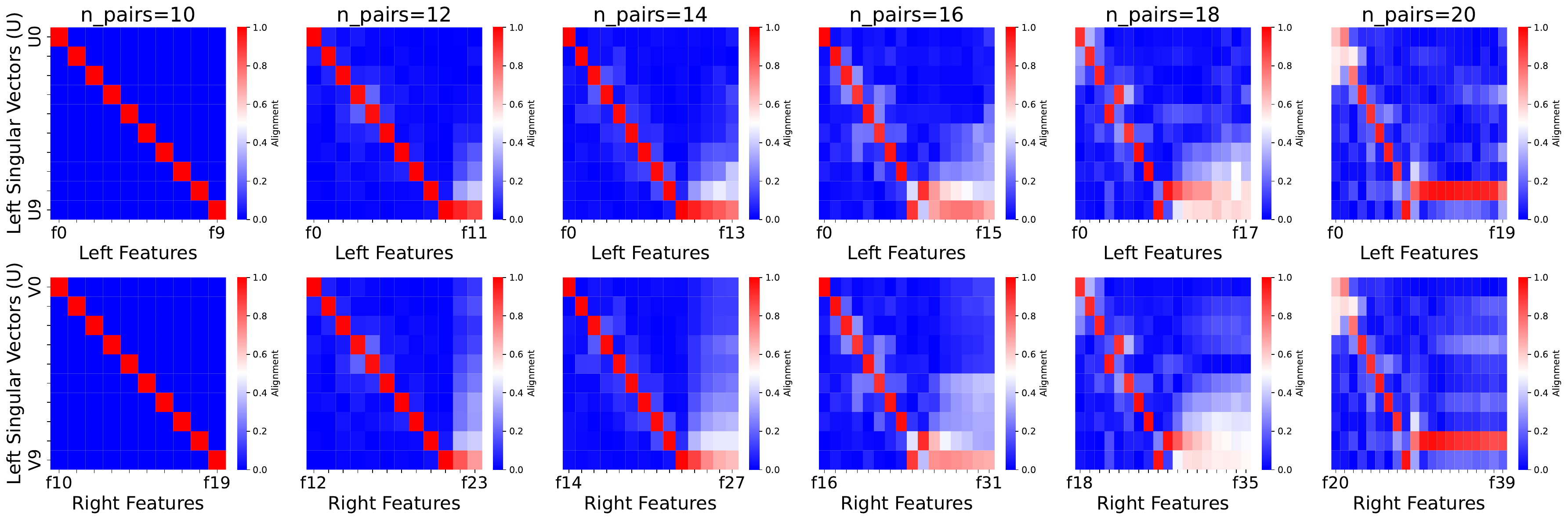}
        \begin{centering}
    (b) $N = $ 40 features, hidden dimension $D = 20$, head dimension $H = 10$. \\
    Number of feature pairs of interest varies from 10 (head capacity) to 20 (model capacity).\\
    \end{centering}

    \caption{When there are more features of interest than head capacity, features are superposed primarily on the smallest singular vector, and most singular vectors still map to a single feature. Absolute cosine similarity of features and singular vectors. We show absolute value of cosine similarity for clarity due to sign ambiguity of SVD.
    }
    \label{fig:more-features}
\end{figure}


\section{Sparse Attention Decomposition in Toy Model Indicates that Features of Interest are Present}
\label{app:training-evolution}

Here we show that in the toy model, sparse attention decomposition occurs only when at least one feature pair of interest is present.  To show this, we consider the complementary case to Figure~\ref{fig:sparse-evolution}.  Figure~\ref{fig:sparse-evolution-feature-not-present} shows that when features of interest are \emph{not} present, sparsity of attention decomposition is \emph{absent}. 

\begin{figure}
    \centering
    \includegraphics[width=\linewidth]{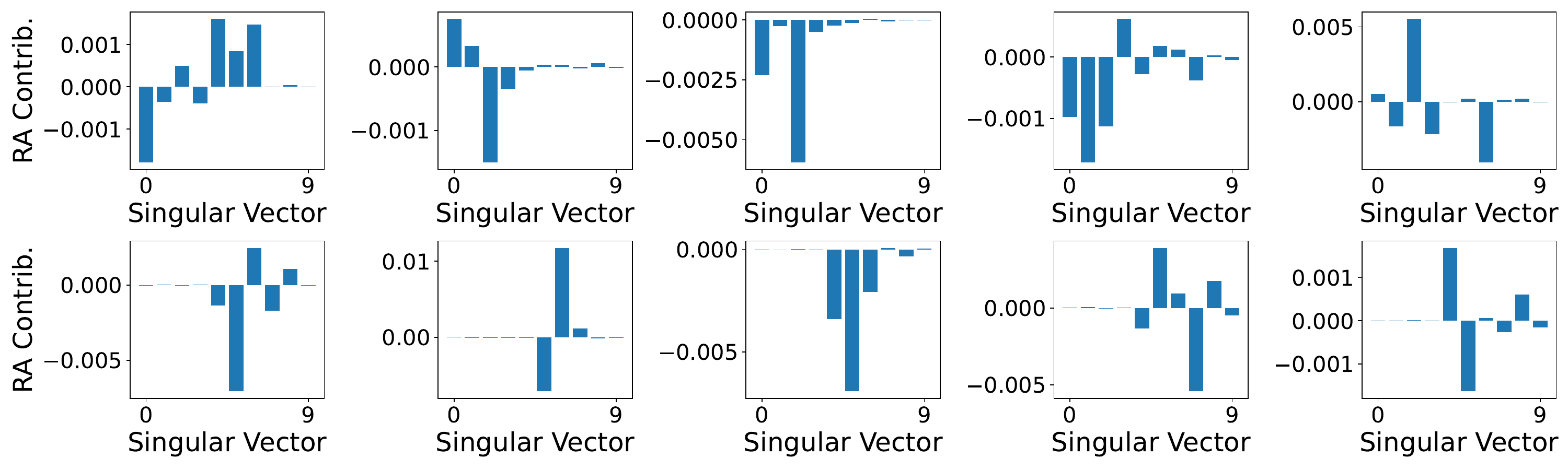}
    \caption{When features of interest are not present, attention is not sparsely decomposable.  Top: Token pairs without features of interest, early in training; Lower: Late in training.  Compare to Figure~\ref{fig:sparse-evolution}.}
    \label{fig:sparse-evolution-feature-not-present}
\end{figure}



\section{Pythia Experiments}
\label{app:pythia}

\subsection{Details of Experiments}
Here we provide details of our study of sparse attention decomposition in Pythia (Section~\ref{sec:models}).  We used the following prompt from the Indirect Object Identifiation task \cite{tigges2024llm}: \emph{Then, Simon and Andrew were working at the restaurant. Simon decided to give a basketball to}.  The highest-logit output token is ``Andrew", indicating that the model successfully completes the task.

We denote the tokens that participate in the circuit as follows: `Simon': S1; `and': S1+1; `Andrew': IO; `Simon' (second occurrence): S2; and `to' (second occurrence): END.   We use the heads and token pairs identified as participating in the circuit by \cite{tigges2024llm}.  Table~\ref{tab:pythia-heads} lists the heads and token pairs used.

Our results in Figures~\ref{fig:feature-present-training}(b) and \ref{fig:real-models}(b) are averaged over the heads in Table~\ref{tab:pythia-heads}, and smoothed with a window size of 8.

\begin{table}[h]
\centering
\begin{tabular}{lcccc}
\hline
Name & Layer & Head  & Destination Token & Source Token \\
\hline
Previous Token & 2 & 6 & S1+1 & S1 \\
Induction 1 & 4 & 11 & S2 & S1+1 \\
Induction 2 & 4 & 6 & S2 & S1+1 \\
Induction 3 & 5 & 0 & S2 & S1+1 \\
Name Mover 1 & 10 & 7 & END & IO \\
Name Mover 2 & 9 & 4 & END & IO \\
Name Mover 3 & 8 & 2 & END & IO \\
Name Mover 4 & 8 & 10 & END & IO \\
S-Inhibition 1 & 7 & 9 & END & S2 \\
S-Inhibition 2 & 6 & 6 & END & S2 \\
S-Inhibition 3 & 7 & 2 & END & S2 \\
Positive Copy Suppression 1 & 8 & 9 & END & S2 \\
\hline
\end{tabular}
\caption{Attention heads used in studying Pythia, Section~\ref{sec:models}} 
\label{tab:pythia-heads}
\end{table}

\subsection{Relative Attention Sparsifies as a Result of Training}

In Figure~\ref{fig:pythia-sparse-decomp-all} we present relative attention decompositions for all of the heads studied in Pythia, at the start and end of training.  These heads and token pairs were identified as participating in the IOI circuit in \cite{tigges2024llm}.

\begin{figure}[h]
    \vspace*{-0.05in}
    \centering
    \includegraphics[width=0.49\linewidth]{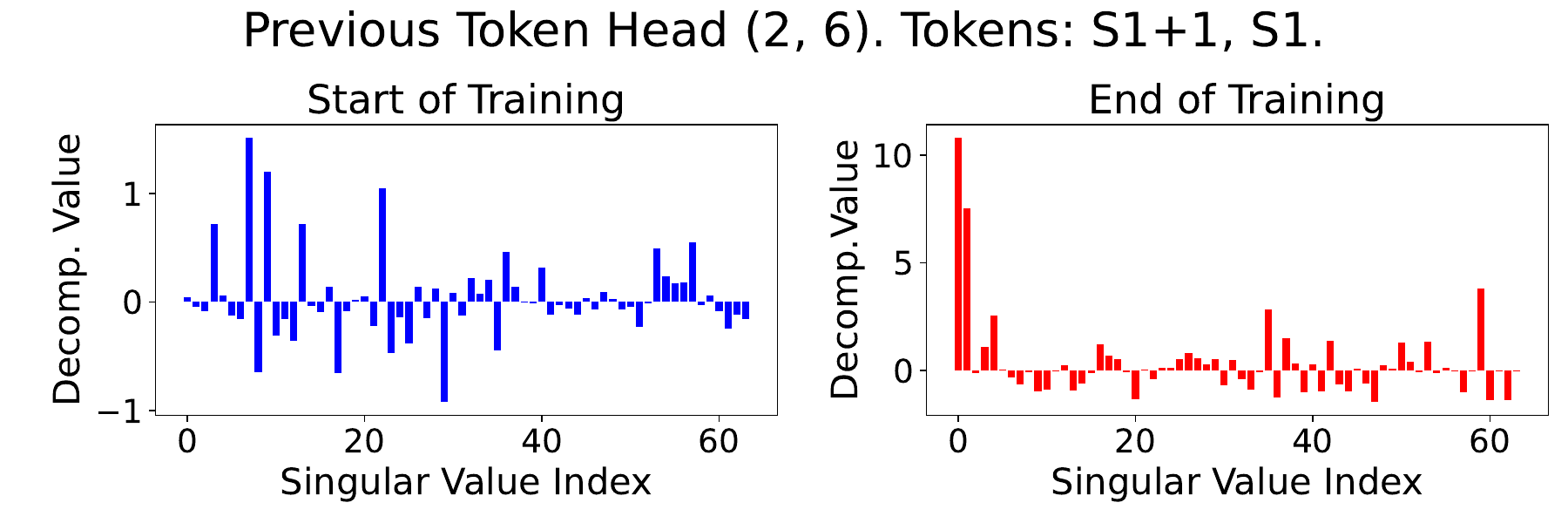}
    \includegraphics[width=0.49\linewidth]{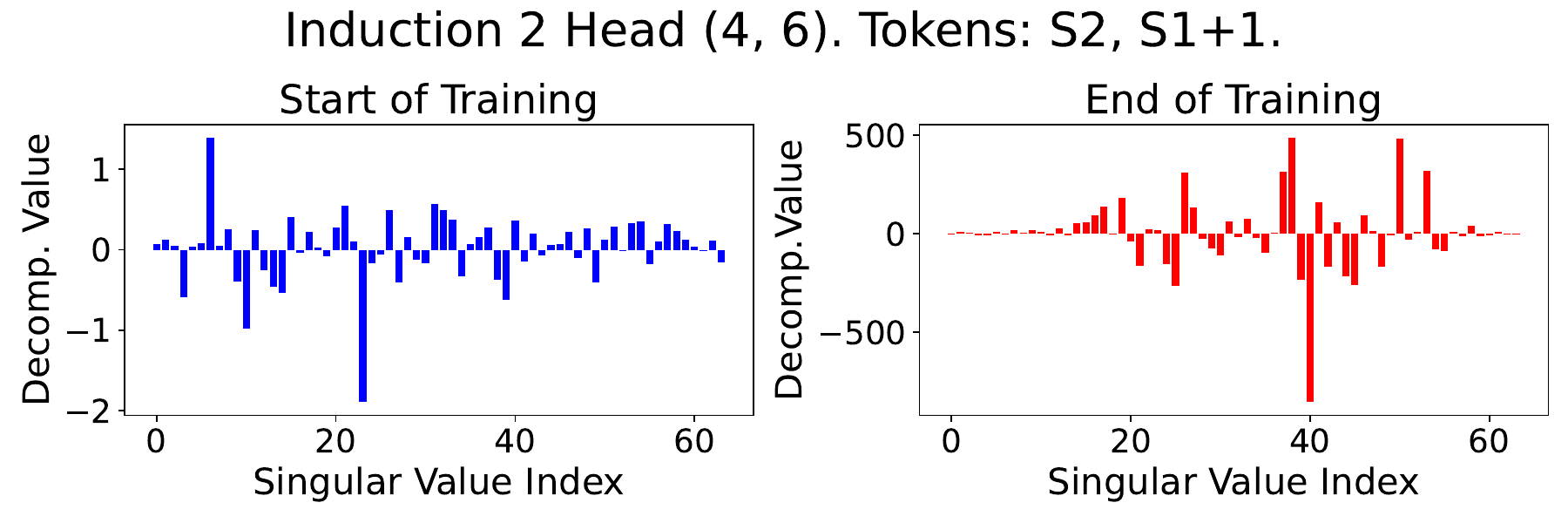}\\
    \includegraphics[width=0.49\linewidth]{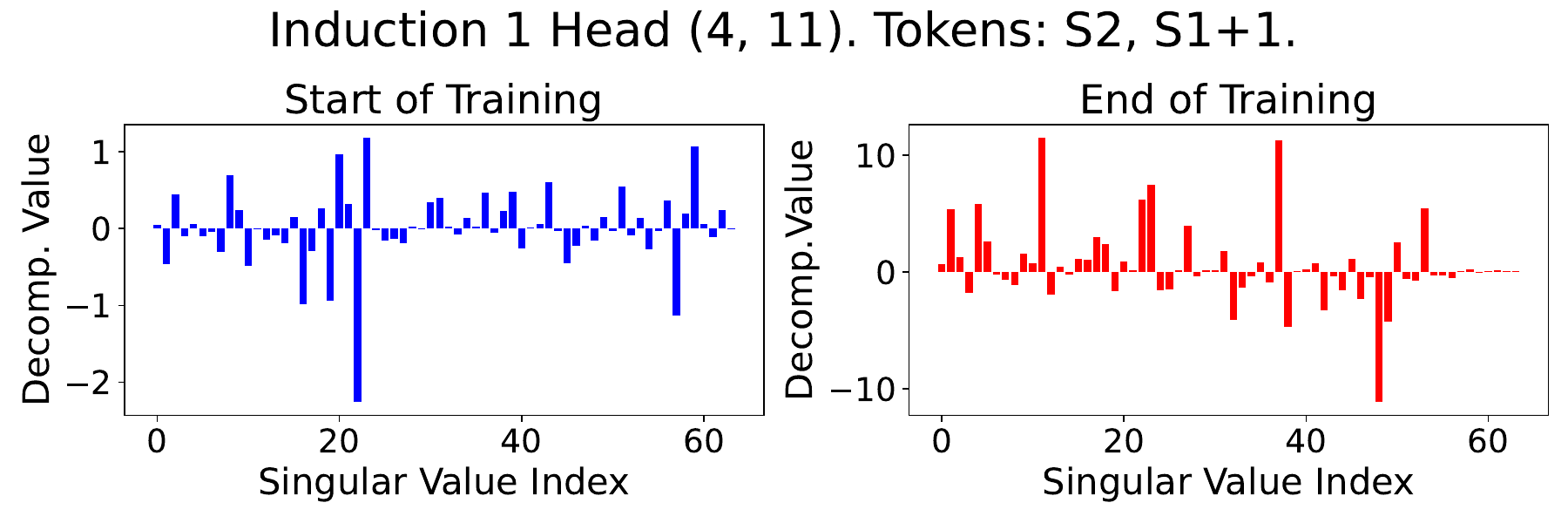}
    \includegraphics[width=0.49\linewidth]{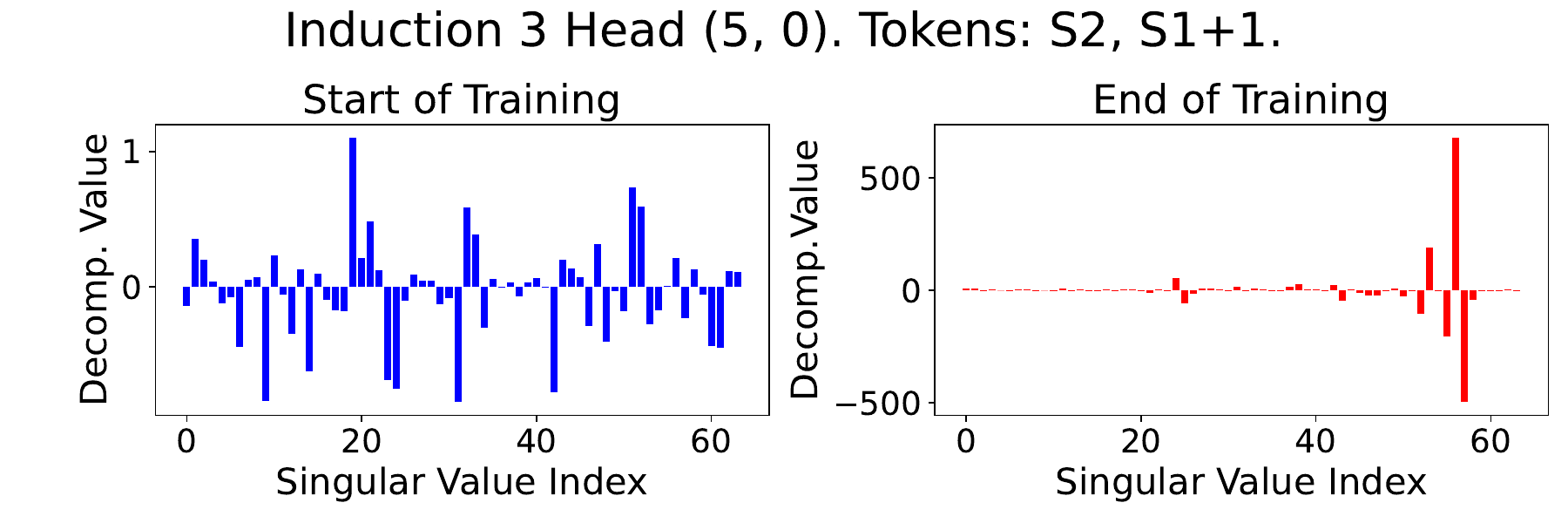}\\
    \includegraphics[width=0.49\linewidth]{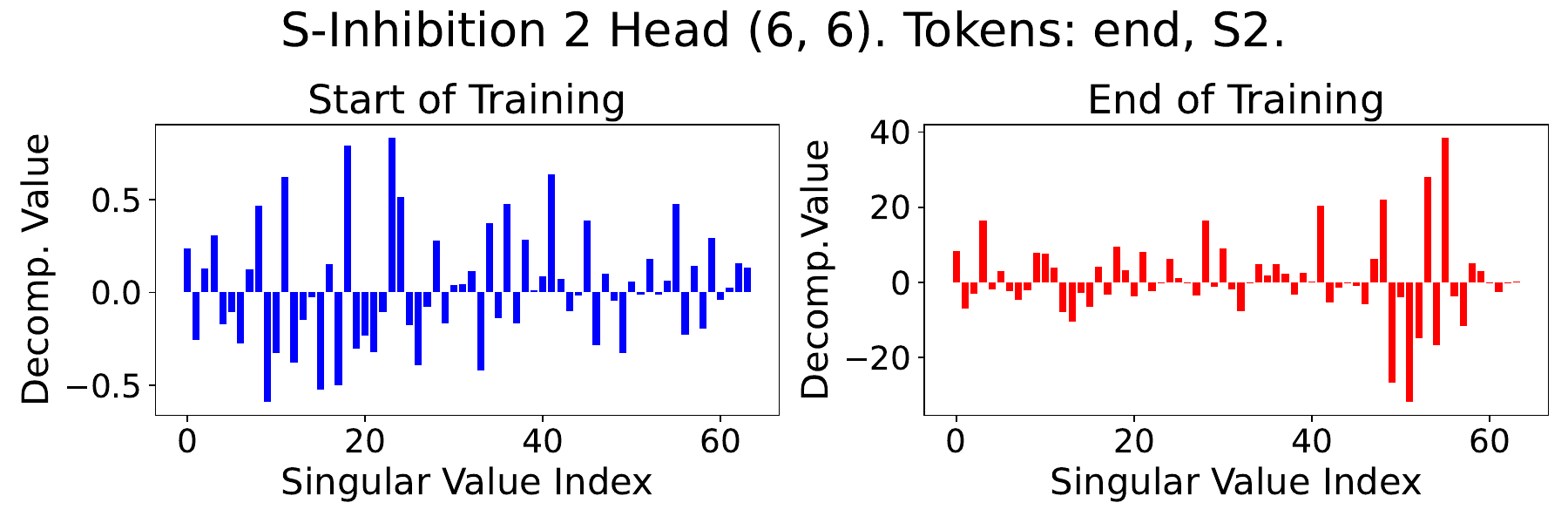}
    \includegraphics[width=0.49\linewidth]{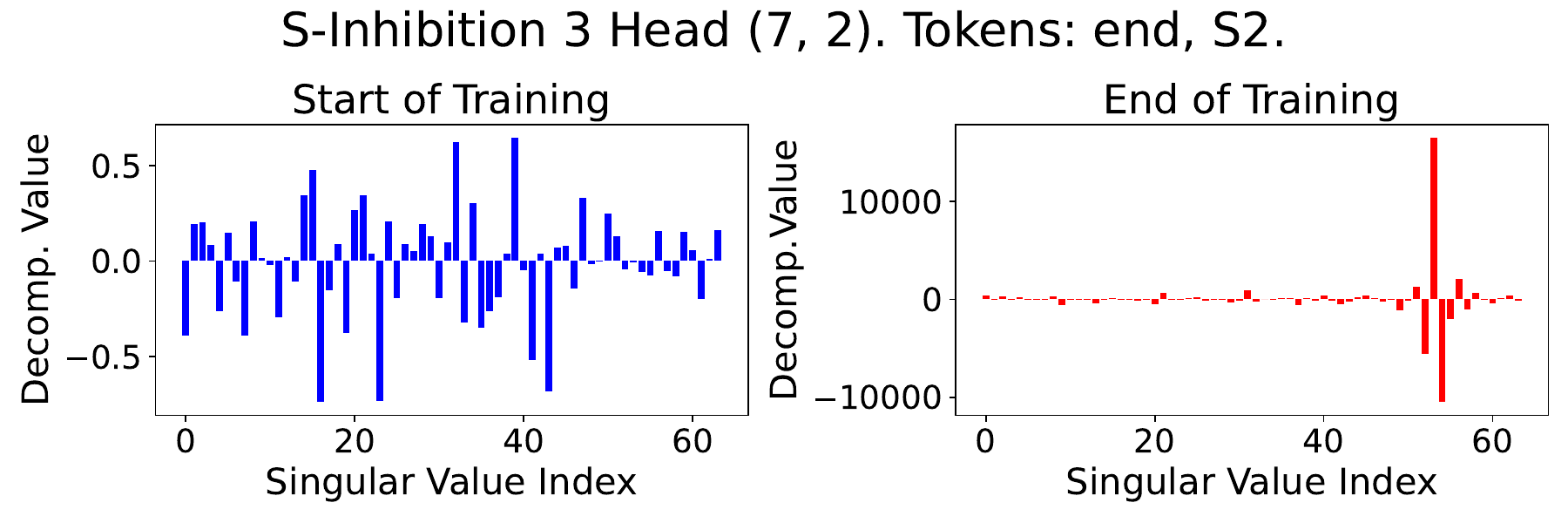}\\
    \includegraphics[width=0.49\linewidth]{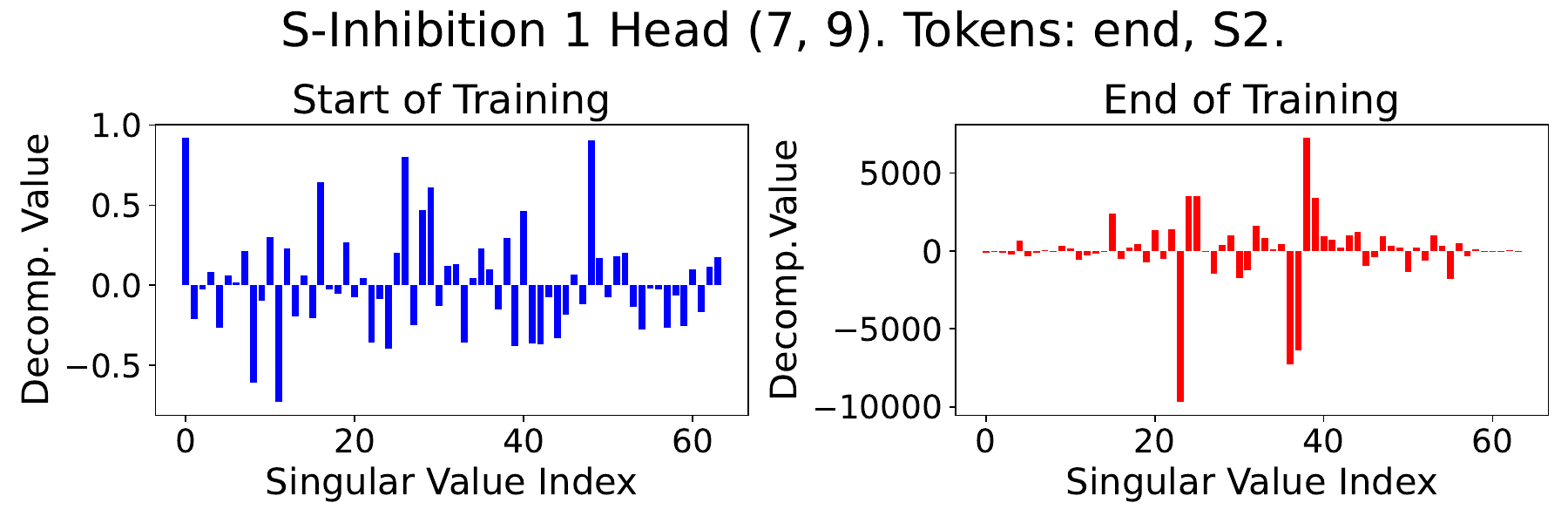}
    \includegraphics[width=0.49\linewidth]{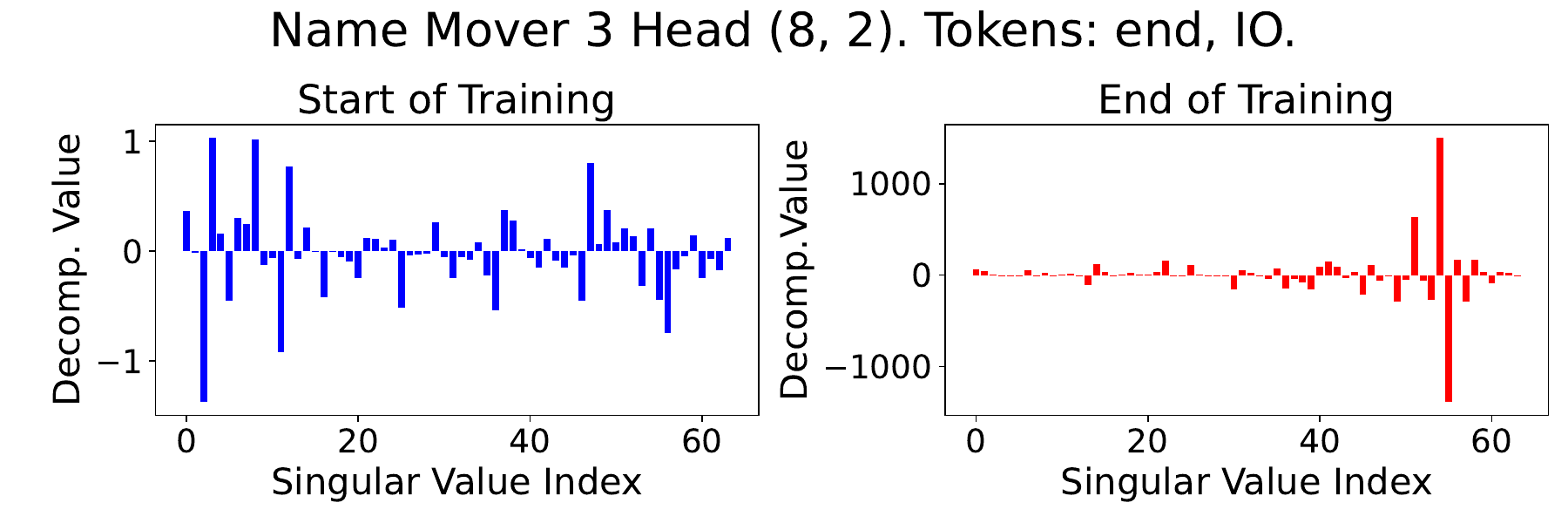}\\
    \includegraphics[width=0.49\linewidth]{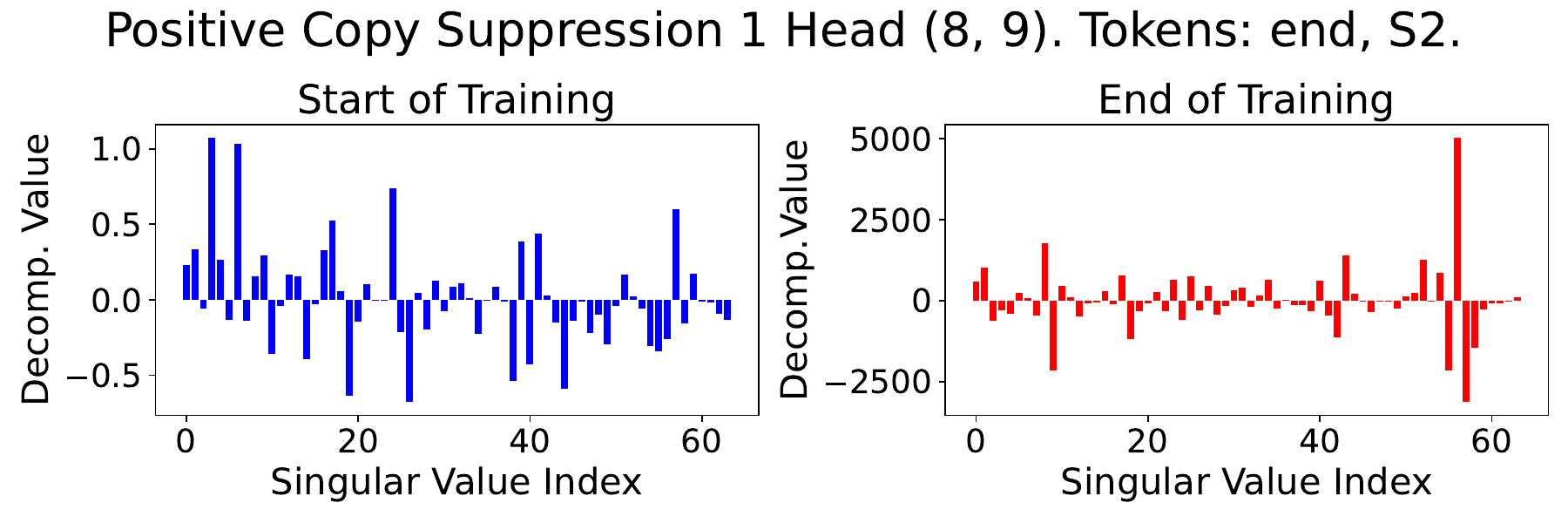}
    \includegraphics[width=0.49\linewidth]{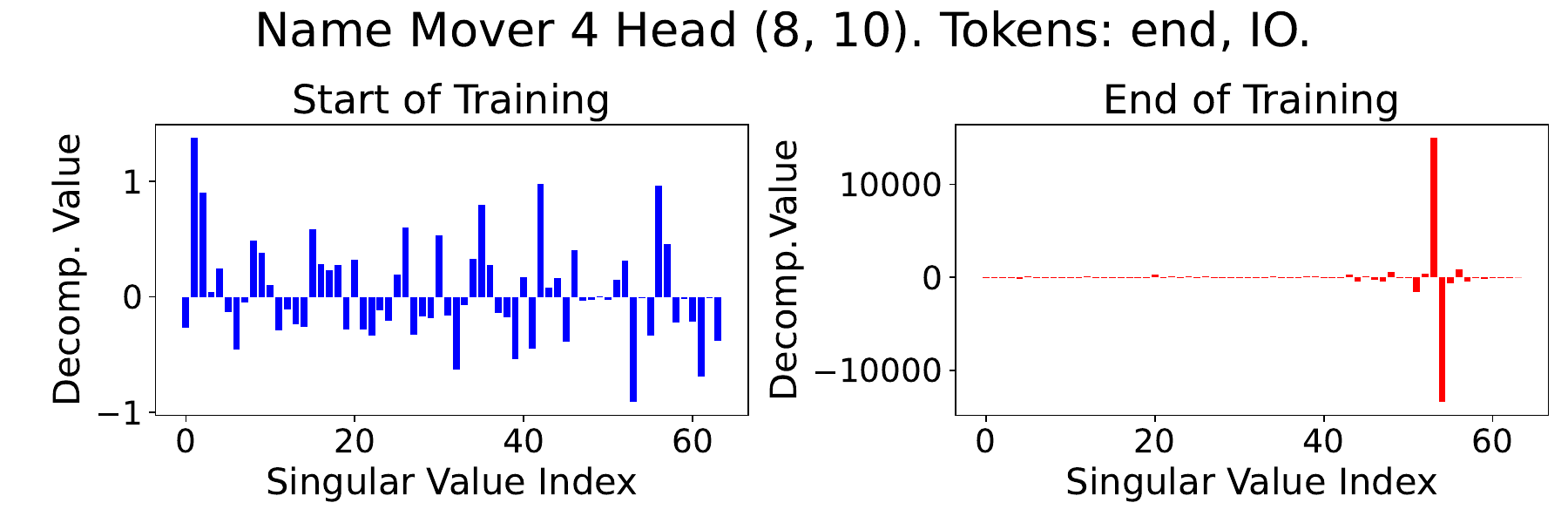}\\
    \includegraphics[width=0.49\linewidth]{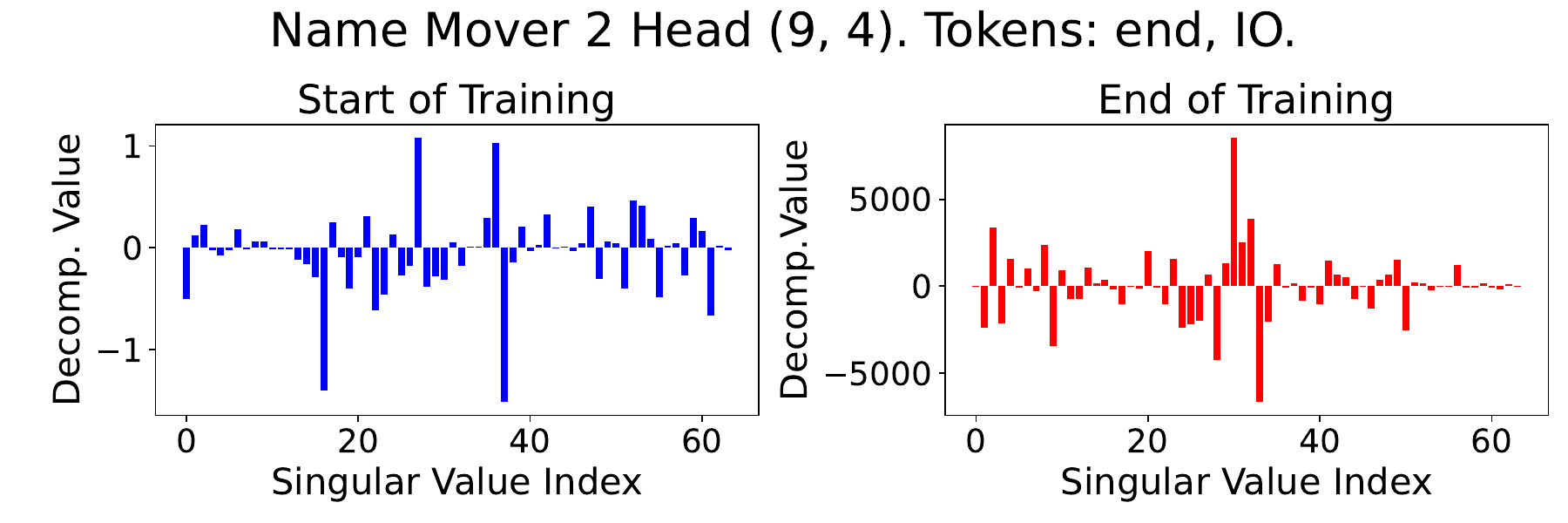}
    \includegraphics[width=0.49\linewidth]{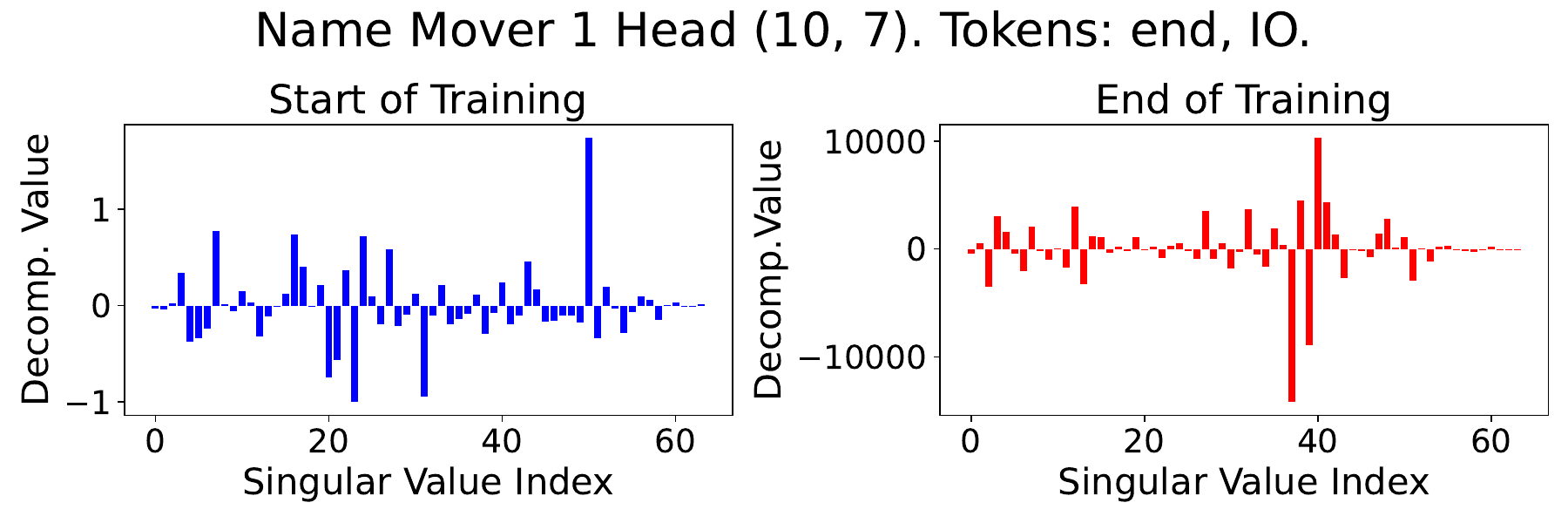}\\
    \caption{Sparse decomposition in Pythia generally increases during training.  Each plot shows decomposition of relative attention at the start and end of training. }
    \label{fig:pythia-sparse-decomp-all}
\end{figure}

\end{document}